\providecommand{\tabularnewline}{\\}
\providecommand{\algorithmname}{Algorithm}
\theoremstyle{plain}
\newtheorem{thm}{\protect\theoremname}
\theoremstyle{remark}
\newtheorem{rem}[thm]{\protect\remarkname}
\theoremstyle{plain}
\newtheorem{cor}[thm]{\protect\corollaryname}
\theoremstyle{plain}
\newtheorem{prop}[thm]{\protect\propositionname}
\theoremstyle{definition}
\newtheorem{defn}[thm]{\protect\definitionname}
\theoremstyle{plain}
\newtheorem{lem}[thm]{\protect\lemmaname}
\theoremstyle{plain}
\newtheorem{fact}[thm]{\protect\factname}
\theoremstyle{definition}
\newtheorem{example}[thm]{\protect\examplename}
\def\R{\mathbb{R}}
\def\E{\mathbb{E}}
\def\One{\mathds{1}}
\def\calX{\mathcal{X}}
\def\calY{\mathcal{Y}}
\def\calH{\mathcal{H}}
\def\d{\mathop{}\!\mathrm{d}}
\newenvironment{subassu}
    {\par\medskip\noindent\refstepcounter{subassu}%
     \hbox{\it Assumption \thesubassu.}\it\ignorespaces}
    {\medskip}
\def\hvar{\hat{\text{V}}}
\def\var{\text{V}}
\def\vs{C}
\def\pasdat{T_{0}}
\def\Bin{\text{Bin}}
\def\VCIS{Variance-Controlled Importance Sampling\xspace}
\def\vcis{variance-controlled importance sampling\xspace}
\def\DEBIAS{Sample Selection Bias Correction\xspace}
\def\debias{sample selection bias correction\xspace}
\def\algoPassive{\textsc{Passive}\xspace}
\def\algoActive{\textsc{Active18}\xspace}
\def\algoVCActive{\textsc{ActiveVC}\xspace}
\setlist[itemize]{leftmargin=*}
\providecommand{\corollaryname}{Corollary}
\providecommand{\definitionname}{Definition}
\providecommand{\examplename}{Example}
\providecommand{\factname}{Fact}
\providecommand{\lemmaname}{Lemma}
\providecommand{\propositionname}{Proposition}
\providecommand{\remarkname}{Remark}
\providecommand{\theoremname}{Theorem}
\title{The Label Complexity of Active Learning from Observational Data}
\author{%
  Songbai Yan \\
  University of California San Diego\\
  \texttt{yansongbai@eng.ucsd.edu}\\
  \And
  Kamalika Chaudhuri \\
  University of California San Diego \\
  \texttt{kamalika@cs.ucsd.edu} \\
  \And
  Tara Javidi \\
  University of California San Diego \\
  \texttt{tjavidi@eng.ucsd.edu} \\  
}
\begin{document}

\maketitle



\begin{abstract}
Counterfactual learning from observational data involves learning
a classifier on an entire population based on data that is observed
conditioned on a selection policy. This work considers this problem
in an active setting, where the learner additionally has access to
unlabeled examples and can choose to get a subset of these labeled
by an oracle. 

Prior work on this problem uses disagreement-based active learning,
along with an importance weighted loss estimator to account for counterfactuals,
which leads to a high label complexity. We show how to instead incorporate
a more efficient counterfactual risk minimizer into the active learning
algorithm. This requires us to modify both the counterfactual risk
to make it amenable to active learning, as well as the active learning
process to make it amenable to the risk. We provably demonstrate that
the result of this is an algorithm which is statistically consistent
as well as more label-efficient than prior work.
\end{abstract}

\section{Introduction}

Counterfactual learning from observational data is an emerging problem
that arises naturally in many applications. In this problem, the learner
is given observational data -- a set of examples selected according
to some policy along with their labels -- as well as access to the
policy that selects the examples, and the goal is to construct a classifier
with high performance on an entire population, not just the observational
data distribution. An example is learning to predict if a treatment
will be effective based on features of a patient. Here, we have some
observational data on how the treatment works for patients that were
assigned to it, but if the treatment is given only to a certain category
of patients, then the data is not reflective of the population. Thus
the main challenge in counterfactual learning is how to counteract
the effect of the observation policy and build a classifier that applies
more widely.

This work considers counterfactual learning in the active setting,
which has received very recent attention in a few different contexts~\cite{YCJ18,SSSVSK19,AZS18}.
In addition to observational data, the learner has an online stream
of unlabeled examples drawn from the underlying population distribution,
and the ability to selectively label a subset of these in an interactive
manner. The learner's goal is to again build a classifier while using
as few label queries as possible. The advantage of the active over
the passive is its potential for more label-efficient solutions; the
question however is how to do this algorithmically.

Prior work in this problem has looked at both probabilistic inference~\cite{SSSVSK19,AZS18}
as well as a standard classification~\cite{YCJ18}, which is the
setting of our work. \cite{YCJ18} uses a modified version of disagreement-based
active learning~\cite{CAL94,DHM07,BBL09,H14}, along with an importance
weighted empirical risk to account for the population. However, a
problem with this approach is that the importance weighted risk estimator
can have extremely high variance when the importance weights -- that
reflect the inverse of how frequently an instance in the population
is selected by the policy -- are high; this may happen if, for example,
certain patients are rarely given the treatment. This high variance
in turn results in high label requirement for the learner.

The problem of high variance in the loss estimator is addressed in
the passive case by minimizing a form of counterfactual risk~\cite{SJ15CRM}
-- an importance weighted loss that combines a variance regularizer
and importance weight clipping or truncation to achieve low generalization
error. A plausible solution is to use this risk for active learning
as well. However, this cannot be readily achieved for two reasons.
The first is that the variance regularizer itself is a function of
the entire dataset, and is therefore challenging to use in interactive
learning where data arrives sequentially. The second reason is that
the minimizer of the (expected) counterfactual risk depends on $n$,
the data size, which again is inconvenient for learning in an interactive
manner. 

In this work, we address both challenges. To address the first, we
use, instead of a variance regularizer, a novel regularizer based
on the second moment; the advantage is that it decomposes across multiple
segments of the data set as which makes it amenable for active learning.
We provide generalization bounds for this modified counterfactual
risk minimizer, and show that it has almost the same performance as
counterfactual risk minimization with a variance regularizer~\cite{SJ15CRM}.
The second challenge arises because disagreement-based active learning
ensures statistical consistency by maintaining a set of plausible
minimizers of the expected risk. This is problematic when the minimizer
of the expected risk itself changes between iterations as in the case
with our modified regularizer. We address this challenge by introducing
a novel variant of disagreement-based active learning which is always
guaranteed to maintain the population error minimizer in its plausible
set.

Additionally, to improve sample efficiency, we then propose a third
novel component -- a new sampling algorithm for correcting sample
selection bias that selectively queries labels of those examples which
are underrepresented in the observational data. Combining these three
components gives us a new algorithm. We prove this newly proposed
algorithm is statistically consistent -- in the sense that it converges
to the true minimizer of the population risk given enough data. We
also analyze its label complexity, show it is better than prior work~\cite{YCJ18},
and demonstrate the contribution of each component of the algorithm
to the label complexity bound.

\section{Related Work}

We consider learning with logged observational data where the logging
policy that selects the samples to be observed is known to the learner.
The standard approach is importance sampling to derive an unbiased
loss estimator~\cite{RR83}, but this is known to suffer from high
variance. One common approach for reducing variance is to clip or
truncate the importance weights~\cite{BPQC+13,SJ15CRM}, and we provide
a new principled method for choosing the clipping threshold with theoretical
guarantees. Another approach is to add a regularizer based on empirical
variance to the loss function to favor models with low loss variance~\cite{MP09,SJ15CRM,ND17}.
Our second moment regularizer achieves a similar effect, but has the
advantage of being applicable to active learning with theoretical
guarantees.

In this work, in addition to logged observational data, we allow the
learner to actively acquire additional labeled examples. The closest
to our work is~\cite{YCJ18}, the only known work in the same setting.
\cite{YCJ18} and our work both use disagreement-based active learning
(DBAL) framework~\cite{CAL94,DHM07,BBL09,H14} and multiple importance
sampling~\cite{VG95} for combining actively acquired examples with
logged observational data. \cite{YCJ18} uses an importance weighted
loss estimator which leads to high variance and hence high sample
complexity. In our work, we incorporate a more efficient \vcis into
active learning and show that it leads to a better label complexity.

\cite{AZS18} and \cite{SSSVSK19} consider active learning for predicting
individual treatment effect which is similar to our task. They take
a Bayesian approach which does not need to know the logging policy,
but assumes the true model is from a known distribution family. Additionally,
they do not provide label complexity bounds. A related line of research
considers active learning for domain adaptation, and their methods
are mostly based on heuristics~\cite{SRDVD11,ZJLDY16}, utilizing
a clustering structure~\cite{KGRHL15}, or non-parametric methods~\cite{KM18}.
In other related settings, \cite{ZADLN} considers warm-starting contextual
bandits targeting at minimizing the cumulative regret instead of the
final prediction error; \cite{KAHHL17} studies active learning with
bandit feedback without any logged observational data.

\section{Problem Setup}

We are given an instance space $\mathcal{X}$, a label space $\mathcal{Y}=\{-1,+1\}$,
and a hypothesis class $\mathcal{H}\subset\mathcal{Y}^{\mathcal{X}}$.
Let $D$ be an underlying data distribution over $\mathcal{X}\times\mathcal{Y}$.
For simplicity, we assume $\mathcal{H}$ is a finite set, but our
results can be generalized to VC-classes by standard arguments~\cite{VC71,ND17}.

In the passive setting for learning with observational data, the learner
has access to a logged observational dataset generated from the following
process. First, $m$ examples $\{(X_{t},Y_{t})\}_{t=1}^{m}$ are drawn
i.i.d. from $D$. Then a logging policy $Q_{0}:\mathcal{X}\rightarrow[0,1]$
that describes the probability of observing the label is applied.
In particular, for each example $(X_{t},Y_{t})$ ($1\leq t\leq m$),
an independent Bernoulli random variable $Z_{t}$ with expectation
$Q_{0}(X_{t})$ is drawn, and then the label $Y_{t}$ is revealed
to the learner if $Z_{t}=1$\footnote{This generating process implies the standard unconfoundedness assumption
in the counterfactual inference literature: $\Pr(Y_{t},Z_{t}\mid X_{t})=\Pr(Y_{t}\mid X_{t})\Pr(Z_{t}\mid X_{t})$.
In other words, the label $Y_{t}$ is conditionally independent with
the action $Z_{t}$ (indicating whether the label is observed) given
the instance $X_{t}$.}. We call $T_{0}=\{(X_{t},Y_{t},Z_{t})\}_{t=1}^{m}$ the logged dataset.
We assume the learner knows the logging policy $Q_{0}$, and only
observes instances $\{X_{t}\}_{t=1}^{m}$, indicators $\{Z_{t}\}_{t=1}^{m}$,
and revealed labels $\{Y_{t}\mid Z_{t}=1\}_{t=1}^{m}$.

In the active learning setting, in addition to the logged dataset,
the learner has access to a stream of online data. In particular,
there is a stream of additional $n$ examples $\{(X_{t},Y_{t})\}_{t=m+1}^{m+n}$
drawn i.i.d. from distribution $D$. At time $t$ ($m<t\leq m+n$),
the learner applies a query policy to compute an indicator $Z_{t}\in\{0,1\}$,
and then the label $Y_{t}$ is revealed if $Z_{t}=1$. The computation
of $Z_{t}$ may in general be randomized, and is based on the observed
logged data $T_{0}$, previously observed instances $\{X_{i}\}_{i=m+1}^{t}$,
decisions$\{Z_{i}\}_{i=m+1}^{t-1}$, and observed labels $\{Y_{i}\mid Z_{i}=1\}_{i=m+1}^{t-1}$.

We focus on the active learning setting, and the goal of the learner
is to learn a classifier $h\in\mathcal{H}$ from observed logged data
and online data. Fixing $D$, $Q_{0}$, $m$, $n$, the performance
is measured by: (1) the error rate $l(h):=\Pr_{D}(h(X)\neq Y)$ of
the output classifier, and (2) the number of label queries on the
online data. Note that the error rate is over the entire population
$D$ instead of conditioned on the logging policy, and that we assume
the labels of the logged data $T_{0}$ come at no cost. In this work,
we are interested in the situation where $n$, the size of the online
stream, is smaller than $m$.

\paragraph{Notation}

Unless otherwise specified, all probabilities and expectations are
over the draw of all random variables $\{(X_{t},Y_{t},Z_{t})\}_{t=1}^{m+n}$.
Define $q_{0}=\inf_{x}Q_{0}(x)$. Define the optimal classifier $h^{\star}=\arg\min_{h\in\calH}l(h)$,
$\nu=l(h^{\star})$. For any $r>0,h\in\calH$, define the $r-$ball
around $h$ as $B(h,r)=\left\{ h'\in\calH:\Pr(h(X)\neq h'(X))\leq r\right\} $.
For any $C\subseteq\calH$, define the disagreement region $\text{DIS}(C)=\{x\in\calX:\exists h_{1}\neq h_{2}\in C,h_{1}(X)\neq h_{2}(X)\}$.

Due to space limit, all proofs are postponed to Appendix.

\section{\label{sec:Bias-Variance-Trade-Off}\VCIS for Passive Learning with
Observational Data}

In the passive setting, the standard method to overcome sample selection
bias is to optimize the importance weighted (IW) loss $l(h,\pasdat)=\frac{1}{m}\sum_{t}\frac{\One\{h(X_{t})\neq Y_{t}\}Z_{t}}{Q_{0}(X_{t})}$.
This loss is an unbiased estimator of the population error $\Pr(h(X)\neq Y)$,
but its variance $\frac{1}{m}\E(\frac{\One\{h(X)\neq Y\}Z}{Q_{0}(X)}-l(h))^{2}$
can be high, leading to poor solutions. Previous work addresses this
issue by adding a variance regularizer~\cite{MP09,SJ15CRM,ND17}
and clipping/truncating the importance weight~\cite{BPQC+13,SJ15CRM}.
However, the variance regularizer is challenging to use in interactive
learning when data arrives sequentially, and it is unclear how the
clipping/truncating threshold should be chosen to yield good theoretical
guarantees.


In this paper, as an alternative to the variance regularizer, we propose
a novel second moment regularizer which achieves a similar error bound
to the variance regularizer~\cite{ND17}; and this motivates a principled
choice of the clipping threshold.

\subsection{Second-Moment-Regularized Empirical Risk Minimization}

Intuitively, between two classifiers with similarly small training
loss $l(h,\pasdat)$, the one with lower variance should be preferred,
since its population error $l(h)$ would be small with a higher probability
than the one with higher variance. Existing work encourages low variance
by regularizing the loss with the estimated variance $\hat{\text{Var}}(h,\pasdat)=\frac{1}{m}\sum_{i}(\frac{\One\{h(X_{i})\neq Y_{i}\}Z_{i}}{Q_{0}(X_{i})})^{2}-l(h,\pasdat)^{2}$.
Here, we propose to regularize with the estimated second moment $\hvar(h,\pasdat)=\frac{1}{m}\sum_{i}(\frac{\One\{h(X_{i})\neq Y_{i}\}Z_{i}}{Q_{0}(X_{i})})^{2}$,
an upper bound of $\hat{\text{Var}}(h,\pasdat)$. We have the following
generalization error bound for regularized ERM.
\begin{thm}
\label{thm:paper-smr-erm-gen}Let $\hat{h}=\arg\min_{h\in\calH}l(h,\pasdat)+\sqrt{\frac{4\log\frac{|\calH|}{\delta}}{m}\hvar(h,\pasdat)}$.
For any $\delta>0$, then with probability at least $1-\delta$, $l(\hat{h})-l(h^{\star})\leq\frac{28\log\frac{|\mathcal{H}|}{\delta}}{3mq_{0}}+\sqrt{\frac{4\log\frac{|\calH|}{\delta}}{m}\E\frac{\One\{h^{\star}(X)\neq Y\}}{Q_{0}(X)}}+\frac{\sqrt{4\log\frac{|\mathcal{H}|}{\delta}}}{m^{\frac{3}{2}}q_{0}^{2}}.$
\end{thm}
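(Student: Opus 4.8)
The plan is to prove a uniform upper-confidence bound in which the empirical regularizer $\sqrt{\tfrac{4\iota}{m}\hvar(h,\pasdat)}$ (write $\iota=\log\tfrac{|\calH|}{\delta}$) plays the role of a valid confidence width, and then run an oracle comparison against $h^\star$. First I would reduce to the i.i.d.\ per-example variables $W_i(h)=\tfrac{\One\{h(X_i)\neq Y_i\}Z_i}{Q_0(X_i)}\in[0,\tfrac1{q_0}]$, so that $l(h,\pasdat)=\tfrac1m\sum_i W_i(h)$ and, using $\One^2=\One$ and $Z^2=Z$, $\hvar(h,\pasdat)=\tfrac1m\sum_i W_i(h)^2$. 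By the unconfoundedness assumption $\E W_i(h)=l(h)$ and $\E W_i(h)^2=\E\tfrac{\One\{h(X)\neq Y\}}{Q_0(X)}=:V(h)$; in particular $V(h^\star)$ is exactly the quantity under the root in the second term, and $\hvar(h,\pasdat)$ is an unbiased, nonnegative estimate of $V(h)$ that upper-bounds the empirical variance.

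Next I would apply an empirical Bernstein inequality (Maurer--Pontil) to the bounded variables $W_i(h)$ and union bound over $\calH$ to obtain, with probability $\geq 1-\delta$ and simultaneously for all $h$, an upper-confidence bound of the form $l(h)\leq l(h,\pasdat)+\sqrt{\tfrac{4\iota}{m}\hvar(h,\pasdat)}+\tfrac{c_1\iota}{mq_0}$. Here I replace the empirical variance appearing in Maurer--Pontil by the larger empirical second moment $\hvar$, and the constant $4$ under the root is chosen to leave enough slack to dominate the genuine confidence width and to absorb the variance-to-second-moment replacement; the additive constant traces back to the $\tfrac73$ of Maurer--Pontil, which is the source of the $\tfrac{28}{3}$ in the first term.

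With this bound in hand the argument is the standard penalized-ERM oracle comparison. Evaluating the upper-confidence bound at $\hat h$ and then using that $\hat h$ minimizes $l(\cdot,\pasdat)+\sqrt{\tfrac{4\iota}{m}\hvar(\cdot,\pasdat)}$ to swap in $h^\star$ gives $l(\hat h)\leq l(h^\star,\pasdat)+\sqrt{\tfrac{4\iota}{m}\hvar(h^\star,\pasdat)}+\tfrac{c_1\iota}{mq_0}$. It then remains to pass from these empirical quantities at the \emph{fixed} $h^\star$ to their population counterparts, where no union bound is needed. A one-sided Bernstein bound controls $l(h^\star,\pasdat)-l(h^\star)$, and a second concentration controls $\hvar(h^\star,\pasdat)-V(h^\star)$; for the latter I would apply Bernstein to $W_i(h^\star)^2\in[0,\tfrac1{q_0^2}]$ using the self-bounding relation $W_i(h^\star)^2=W_i(h^\star)/Q_0(X_i)\leq W_i(h^\star)/q_0$, which yields $\text{Var}(W^2)\leq V(h^\star)/q_0^2$. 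Feeding $\hvar(h^\star,\pasdat)\leq V(h^\star)+(\text{deviation})$ through $\sqrt{x+y}\leq\sqrt x+\sqrt y$ peels off the leading term $\sqrt{\tfrac{4\iota}{m}V(h^\star)}$; the residuals collapse into the $\tfrac{\iota}{mq_0}$ contribution and, through the range-$\tfrac1{q_0^2}$ additive piece of the second-moment deviation, into the $\tfrac{\sqrt{4\iota}}{m^{3/2}q_0^2}$ term.

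I expect the main obstacle to be the two-directional control of the second moment needed to make a single fixed penalty do all the work. At the data-dependent $\hat h$ the penalty must be at least the true confidence width, which forces the true second moment to be dominated by the empirical one, $V(\hat h)\leq 2\hvar(\hat h,\pasdat)$ up to additive terms; whereas at $h^\star$ we need the reverse inequality $\hvar(h^\star,\pasdat)\leq V(h^\star)+(\text{small})$. Both must hold under the \emph{same} constant $4$ while leaving the replacement residuals strictly of lower order than $\sqrt{\tfrac{\iota}{m}V(h^\star)}$. Carrying the empirical Bernstein inequality uniformly over $\calH$ and booking the constants so that the leading term emerges at the clean scale $\sqrt{\tfrac{4\iota}{m}V(h^\star)}$ -- rather than accumulating a second additive variance term -- is the delicate part; the remaining steps are routine concentration and algebra.
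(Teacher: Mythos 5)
Your proposal is sound and follows essentially the same template as the paper: a Bernstein-type deviation bound whose width is governed by the (empirical) second moment of the importance-weighted loss, a union bound over $\calH$, and a penalized-ERM oracle comparison against $h^\star$; you also correctly locate the source of the $m^{-3/2}q_0^{-2}$ term in the additive range term of the second-moment concentration (the paper's Fact~\ref{fact:var-deviation}, imported from \cite{ND17}). The one substantive difference is the decomposition. The paper (in Theorem~\ref{thm:gen} and the proof of Theorem~\ref{thm:Convergence}, which the omitted proof of Theorem~\ref{thm:general-err-bnd} is said to mirror) applies Bernstein to the \emph{difference} $l(h_1;S)-l(h_2;S)$, whose deviation is controlled by the second moment of the disagreement indicator $\One\{h_1(X)\neq h_2(X)\}$; it then splits $\sqrt{\hvar(h_1,h_2)}\leq\sqrt{\hvar(h_1)}+\sqrt{\hvar(h_2)}$ so that the regularizer at $\hat{h}$ absorbs one piece and only a single variance-scale term at $h^\star$ survives. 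Your one-sided uniform UCB instead requires a second, separate concentration of $l(h^\star,\pasdat)$ around $l(h^\star)$, and the $\sqrt{2\log(|\calH|/\delta)V(h^\star)/m}$ width of that event is itself of leading order --- it does not ``collapse into the $\log(|\calH|/\delta)/(mq_0)$ contribution'' as you write --- so your coefficient on $\sqrt{\log(|\calH|/\delta)V(h^\star)/m}$ comes out near $2+\sqrt{2}$ rather than $2$. You flag this yourself as the delicate point, and I would not call it a gap: it only affects absolute constants, and the paper's own constants do not cohere (instantiating Theorem~\ref{thm:general-err-bnd} with $\lambda=4\log\frac{|\calH|}{\delta}$ and $M=1/q_0$ yields $\frac{40}{3}$ in the first term, not the stated $\frac{28}{3}$). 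Still, the pairwise-difference decomposition is the cleaner way to book the leading constant, and it is the version that carries over to the active-learning analysis.
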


Theorem~\ref{thm:paper-smr-erm-gen} shows an error rate similar
to the one for the variance regularizer \cite{ND17}. However, the
advantage of using the second moment is the decomposability: $\hvar(h,S_{1}\cup S_{2})=\frac{|S_{1}|}{|S_{1}|+|S_{2}|}\hvar(h,S_{1})+\frac{|S_{2}|}{|S_{1}|+|S_{2}|}\hvar(h,S_{2})$.
This makes it easier to analyze for active learning that we will discuss
later.

Recall for the unregularized importance sampling loss minimizer $\hat{h}_{\text{IW}}=\arg\min_{h\in\calH}l(h,\pasdat)$,
the error bound is $\tilde{O}(\frac{\log|\mathcal{H}|}{mq_{0}}+\sqrt{\frac{\log|\mathcal{H}|}{m}\min(\frac{l(h^{\star})}{q_{0}},\E\frac{1}{Q_{0}(X)})})$
\cite{CMM10,YCJ18}. In Theorem~\ref{thm:paper-smr-erm-gen}, the
extra $\frac{1}{m^{\frac{3}{2}}q_{0}^{2}}$ term is due to the deviation
of $\sqrt{\hvar(h,\pasdat)}$ around $\sqrt{\E\frac{\One\{h(X)\neq Y\}}{Q_{0}(X)}}$,
and is negligible when $m$ is large. In this case, learning with
a second moment regularizer gives a better generalization bound.

This improvement in generalization error is due to the regularizer
instead of tighter analysis. Similar to \cite{MP09,ND17}, we show
in Theorem~\ref{thm:paper-iw-gen-lb} that for some distributions,
the error bound in Theorem~\ref{thm:paper-smr-erm-gen} cannot be
achieved by any algorithm that simply optimizes the unregularized
empirical loss.
\begin{thm}
\label{thm:paper-iw-gen-lb}For any $0<\nu<\frac{1}{3}$, $m\geq\frac{49}{\nu^{2}}$,
there is a sample space $\calX\times\calY$, a hypothesis class $\calH$,
a distribution $D$, and a logging policy $Q_{0}$ such that $\frac{\nu}{q_{0}}>\E\frac{\One\{h^{\star}(X)\neq Y\}}{Q_{0}(X)}$,
and that with probability at least $\frac{1}{100}$ over the draw
of $S=\{(X_{t},Y_{t},Z_{t})\}_{t=1}^{m}$, if $\hat{h}=\arg\min_{h\in\calH}l(h,S)$,
then $l(\hat{h})\geq l(h^{\star})+\frac{1}{mq_{0}}+\sqrt{\frac{\nu}{mq_{0}}}$.
\end{thm}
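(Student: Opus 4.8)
The plan is to build a single hard instance on which the unregularized importance-weighted minimizer is fooled by the variance that sits in a rare, low-$Q_{0}$ region. First I would take $\calX=\{x_{0},x_{1}\}$ and $\calH=\{h^{\star},h_{1}\}$ with the two hypotheses agreeing at $x_{0}$ and disagreeing at $x_{1}$. I set $Q_{0}(x_{0})=1$ and $Q_{0}(x_{1})=q_{0}$ so that indeed $q_{0}=\inf_{x}Q_{0}(x)$, and put $\P(X=x_{1})=p$ for a small $p$. At $x_{1}$ let $h^{\star}$ predict the majority label and $h_{1}$ the minority label under label noise of rate $\eta<\tfrac{1}{2}$; then $h^{\star}$ is optimal and the population gap is exactly $l(h_{1})-l(h^{\star})=p(1-2\eta)$. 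At $x_{0}$ I place a small amount of shared label error $\nu_{0}>0$: since both hypotheses predict the same label there it cancels in the gap, yet it is exactly what makes the side condition strict, because $\tfrac{\nu}{q_{0}}-\E\tfrac{\One\{h^{\star}(X)\neq Y\}}{Q_{0}(X)}=(1-p)\nu_{0}\big(\tfrac{1}{q_{0}}-1\big)$, which is positive precisely when some of $h^{\star}$'s error lies off the minimum-$Q_{0}$ point.

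Next I would reduce the event $\hat{h}=h_{1}$ to an anti-concentration statement. Because $h^{\star}$ and $h_{1}$ differ only at $x_{1}$, the empirical loss difference telescopes to $l(h_{1},S)-l(h^{\star},S)=\tfrac{1}{mq_{0}}(N_{+}-N_{-})$, where $N_{+}$ and $N_{-}$ count the samples that fall at $x_{1}$, are observed ($Z=1$), and carry the majority resp.\ minority label. Thus the minimizer selects $h_{1}$ as soon as $N_{-}>N_{+}$, and then $l(\hat{h})=l(h^{\star})+p(1-2\eta)$. The counts come from disjoint outcome categories, so they are binomial with means $mpq_{0}(1-\eta)$ and $mpq_{0}\eta$; since $\eta<\tfrac{1}{2}$ the difference $N_{+}-N_{-}$ has strictly positive mean, and the whole task reduces to lower-bounding the probability of the \emph{unfavorable} deviation $N_{-}>N_{+}$.

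For this anti-concentration I would not invoke a Gaussian approximation but instead isolate the single explicit event $E=\{N_{+}=0,\ N_{-}\ge1\}$: not one majority-labeled point is observed at $x_{1}$, while at least one minority-labeled point is. On $E$ we have $N_{-}>N_{+}=0$, hence $\hat{h}=h_{1}$ and $l(\hat{h})=l(h^{\star})+p(1-2\eta)$. Its probability has the closed form $\P(E)=\big(1-pq_{0}(1-\eta)\big)^{m}-\big(1-pq_{0}\big)^{m}$, which with $mpq_{0}=\lambda$ held fixed tends to $e^{-\lambda(1-\eta)}-e^{-\lambda}$; for a suitable constant pair $(\lambda,\eta)$ this exceeds $\tfrac{1}{100}$, and $m\ge 49/\nu^{2}$ keeps the finite-$m$ value above $\tfrac{1}{100}$ with room to spare. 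It then remains to pick $q_{0}$ (equivalently $mq_{0}=\lambda/p$) and $p$ so that the realized gap dominates the target; substituting $mq_{0}=\lambda/p$ turns the desired $p(1-2\eta)\ge\tfrac{1}{mq_{0}}+\sqrt{\tfrac{\nu}{mq_{0}}}$ into the single inequality $(1-2\eta)\ge\tfrac{1}{\lambda}+\sqrt{\tfrac{\nu}{p\lambda}}$, solvable for small $\eta$ and $\nu<\tfrac{1}{3}$. The hard part will be exactly this simultaneous balancing: I must lower-bound a probability (anti-concentration, the reverse of the usual concentration direction) while keeping the expected-loss gap as large as the two-term target and preserving the strict side condition. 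Collapsing the anti-concentration onto the single tractable event $E$ is the device that makes the bound provable rather than merely heuristic, and verifying that $p,q_{0},\eta,\nu_{0}$ remain legal probabilities throughout the regime $\nu<\tfrac13$, $m\ge 49/\nu^{2}$ is the remaining bookkeeping.
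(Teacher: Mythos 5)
Your construction is correct, but it takes a genuinely different route from the paper's. The paper builds a three-point instance with \emph{deterministic} labels ($\Pr(Y=1)=1$): $h^\star$ errs only on a point where $Q_0=1$, so its importance-weighted estimator concentrates by a Chernoff bound and the side condition $\frac{\nu}{q_0}>\E\frac{\One\{h^\star(X)\neq Y\}}{Q_0(X)}=\nu$ is immediate, while the competitor errs only on a point of mass $\nu+\epsilon$ with propensity $q_0$; the failure event is that the competitor's estimator undershoots by $\epsilon/2$, a moderate-deviation event whose probability is lower-bounded through a binomial lower-tail inequality (a KL estimate combined with a Gaussian lower-tail bound). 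You instead place label noise $\eta$ at the single disagreement point and collapse the failure event to $\{N_+=0,\ N_-\ge 1\}$, whose probability has a closed form; this eliminates the binomial anti-concentration machinery entirely and is the more elementary argument. The price is the auxiliary point $x_0$ with $\nu_0>0$ (needed only to make the side condition strict) and the parameter balancing you defer. That balancing does go through, but one correction: ``solvable for small $\eta$'' is misleading, since $\eta$ must also be bounded \emph{away from} $0$ --- the limit $e^{-\lambda(1-\eta)}-e^{-\lambda}$ vanishes as $\eta\to 0$, so the gap condition (pushing $\eta$ down) and the probability condition (pushing $\eta$ up) genuinely compete. A workable choice is $\lambda=4$, $\eta=0.15$, and $p=\nu/G$ with $G=\lambda(1-2\eta-\frac{1}{\lambda})^2\approx 0.81$: then $p(1-2\eta)=\frac{p}{\lambda}+\sqrt{\frac{\nu p}{\lambda}}=\frac{1}{mq_0}+\sqrt{\frac{\nu}{mq_0}}$ holds exactly, $p\le 1$ and $\nu_0=\frac{\nu-p\eta}{1-p}\in(0,1]$ for all $\nu<\frac13$, $q_0=\frac{\lambda}{mp}\le 1$ since $m\ge\frac{49}{\nu^2}$, and the finite-$m$ probability satisfies $(1-\frac{\lambda(1-\eta)}{m})^m-(1-\frac{\lambda}{m})^m\ge e^{-3.43}-e^{-4}>\frac{1}{100}$ for $m\ge 441$.
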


\subsection{Clipped Importance Sampling}

The variance and hence the error bound for second-moment regularized
ERM can still be high if $\frac{1}{Q_{0}(x)}$ is large. This $\frac{1}{Q_{0}(X)}$
factor arises inevitably to guarantee the importance weighted estimator
is unbiased. Existing work alleviates the variance issue at the cost
of some bias by clipping or truncating the importance weight. In this
paper, we focus on clipping, where the loss estimator becomes $l(h;\pasdat,M):=\frac{1}{m}\sum_{i=1}^{m}\frac{\One\{h(X_{i})\neq Y_{i}\}Z_{i}}{Q_{0}(X_{i})}\One[\frac{1}{Q_{0}(X_{i})}\leq M]$.
This estimator is no longer unbiased, but as the weight is clipped
at $M$, so is the variance. Although studied previously \cite{BPQC+13,SJ15CRM},
to the best of our knowledge, it remains unclear how the clipping
threshold $M$ can be chosen in a principled way.

We propose to choose $M_{0}=\inf\{M'\geq1\mid\frac{2M'\log\frac{|\mathcal{H}|}{\delta}}{m}\geq\Pr_{X}(\frac{1}{Q_{0}(X)}>M')\}$.
This choice minimizes an error bound for the clipped second-moment
regularized ERM and we formally show this in Appendix~\ref{sec:app-vcis}.
Example~\ref{exa:clipping} in Appendix~\ref{sec:app-vcis} shows
this clipping threshold avoids outputting suboptimal classifiers.
The choice of $M_{0}$ implies that the clipping threshold should
be larger as the sample size $m$ increases, which confirms the intuition
that with a larger sample size the variance becomes less of an issue
than the bias. We have the following generalization error bound.
\begin{thm}
\label{cor:paper-gen-err}Let $\hat{h}=\arg\min_{h\in\calH}l(h;\pasdat,M_{0})+\sqrt{\frac{4\log\frac{|\calH|}{\delta}}{m}\hvar(h;\pasdat,M_{0})}$.
For any $\delta>0$, with probability at least $1-\delta$, 
\[
l(\hat{h})-l(h^{\star})\leq\frac{34\log\frac{|\mathcal{H}|}{\delta}}{3m}M_{0}+\frac{\sqrt{4\log\frac{|\mathcal{H}|}{\delta}}}{m^{\frac{3}{2}}}M_{0}^{2}+\sqrt{\frac{4\log\frac{|\calH|}{\delta}}{m}\E\frac{\One\{h^{\star}(X)\neq Y\}}{Q_{0}(X)}\One[\frac{1}{Q_{0}(X)}\leq M_{0}]}.
\]
\end{thm}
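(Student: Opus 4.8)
The plan is to rerun the proof of Theorem~\ref{thm:paper-smr-erm-gen} with the clipped importance weight in place of the raw one, and then separately pay for the bias that clipping introduces. Define the clipped per-example contribution $W_h:=\frac{\One\{h(X)\neq Y\}Z}{Q_0(X)}\One[\frac{1}{Q_0(X)}\le M_0]$, so that $l(h;\pasdat,M_0)=\frac{1}{m}\sum_i W_h^{(i)}$ and, using $\One^2=\One$ and $Z^2=Z$, $\hvar(h;\pasdat,M_0)=\frac{1}{m}\sum_i (W_h^{(i)})^2$. Three structural facts drive everything: (i) $W_h\in[0,M_0]$, since the weight is capped at $M_0$; (ii) by unconfoundedness $\E[W_h]=\E\,\One\{h(X)\neq Y\}\One[\frac{1}{Q_0(X)}\le M_0]=:l_{M_0}(h)$, a clipped version of $l(h)$; and (iii) $\E[W_h^2]=\E\frac{\One\{h(X)\neq Y\}}{Q_0(X)}\One[\frac{1}{Q_0(X)}\le M_0]$, which is exactly the quantity under the square root in the statement when $h=h^\star$. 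Observe that the target bound is precisely the bound of Theorem~\ref{thm:paper-smr-erm-gen} with $1/q_0$ replaced by the clip level $M_0$ and the expectations replaced by their clipped counterparts, plus one additional $O(\frac{M_0\log(|\calH|/\delta)}{m})$ term. This tells me the estimation machinery transfers verbatim and the only genuinely new ingredient is the clipping bias.

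First I would control that bias: since $l(h)-l_{M_0}(h)=\E\,\One\{h(X)\neq Y\}\One[\frac{1}{Q_0(X)}> M_0]\le \Pr_X(\frac{1}{Q_0(X)}>M_0)$, the defining property of $M_0=\inf\{M'\ge 1\mid \frac{2M'\log\frac{|\calH|}{\delta}}{m}\ge \Pr_X(\frac{1}{Q_0(X)}>M')\}$ gives $\Pr_X(\frac{1}{Q_0(X)}>M_0)\le \frac{2M_0\log\frac{|\calH|}{\delta}}{m}$, so the bias contributes $\frac{2M_0\log(|\calH|/\delta)}{m}$. Next I apply Bernstein's inequality with a union bound over $\calH$ to the bounded variables $W_h$: with probability at least $1-\delta$, simultaneously for all $h$, $|l(h;\pasdat,M_0)-l_{M_0}(h)|\le \sqrt{\frac{2\log\frac{|\calH|}{\delta}}{m}\E[W_h^2]}+\frac{M_0\log\frac{|\calH|}{\delta}}{3m}$, using the second moment to upper bound the variance and the range bound $M_0$. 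A second concentration step, applied to $W_h^2\in[0,M_0^2]$, yields $\big|\sqrt{\hvar(h;\pasdat,M_0)}-\sqrt{\E[W_h^2]}\big|\le c\,M_0\sqrt{\frac{\log(|\calH|/\delta)}{m}}$ up to lower-order terms; propagating this through the $\sqrt{\cdot}$ is what produces the $\frac{\sqrt{4\log(|\calH|/\delta)}}{m^{3/2}}M_0^2$ term. The heart of the argument is then the self-bounding step: starting from the optimality inequality $l(\hat h;\pasdat,M_0)+\sqrt{\frac{4\log\frac{|\calH|}{\delta}}{m}\hvar(\hat h;\pasdat,M_0)}\le l(h^\star;\pasdat,M_0)+\sqrt{\frac{4\log\frac{|\calH|}{\delta}}{m}\hvar(h^\star;\pasdat,M_0)}$ and substituting the concentration bounds, the Bernstein deviation $\sqrt{\frac{2\log(|\calH|/\delta)}{m}\E[W_{\hat h}^2]}$ incurred for $\hat h$ is absorbed by the empirical regularizer $\sqrt{\frac{4\log(|\calH|/\delta)}{m}\hvar(\hat h;\pasdat,M_0)}$ on the left (the constant $4$ is chosen precisely so this domination holds after matching empirical and true second moments). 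Hence the variance-dependent term for $\hat h$ cancels, and what survives is the clipping bias, the range terms, the second-moment deviation term, and the regularizer at $h^\star$, which equals $\sqrt{\frac{4\log(|\calH|/\delta)}{m}\E\frac{\One\{h^\star(X)\neq Y\}}{Q_0(X)}\One[\frac{1}{Q_0(X)}\le M_0]}$.

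I expect the main obstacle to be the bookkeeping in this self-bounding step, exactly as in Theorem~\ref{thm:paper-smr-erm-gen}, but now complicated by the fact that converting $\hvar(\hat h;\pasdat,M_0)$ to $\E[W_{\hat h}^2]$ itself invokes the second concentration bound and so feeds the $M_0^2/m^{3/2}$ correction back into the cancellation; ensuring the constant $4$ still dominates after this feedback requires care. The numerical constant $\frac{34}{3}$ in the first term offers a clean consistency check: the $\frac{28}{3}$ piece is inherited directly from Theorem~\ref{thm:paper-smr-erm-gen} with $M_0$ in the role of $1/q_0$, and the remaining $\frac{6}{3}$ is exactly the clipping bias $\frac{2M_0\log(|\calH|/\delta)}{m}$ isolated above. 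No new idea beyond Theorem~\ref{thm:paper-smr-erm-gen} is needed once the bias is separated and the choice of $M_0$ is used to bound $\Pr_X(\frac{1}{Q_0(X)}>M_0)$.
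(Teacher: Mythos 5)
Your proposal is correct and follows essentially the same route as the paper: the paper derives this statement from a general clipped bound (Theorem~\ref{thm:general-err-bnd}) whose omitted proof is exactly your combination of the Bernstein deviation bound for the clipped estimator, the second-moment concentration feeding the $M_0^2/m^{3/2}$ term, the self-bounding optimality step that leaves only the regularizer at $h^\star$, and the bound $\Pr_X(\frac{1}{Q_0(X)}>M_0)\leq\frac{2M_0\log(|\calH|/\delta)}{m}$ from the definition of $M_0$. Your accounting of where each term and constant comes from matches the paper's intended argument.
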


We always have $M_{0}\leq\frac{1}{q_{0}}$ as $\Pr_{X}(\frac{1}{Q_{0}(X)}>\frac{1}{q_{0}})=0$.
Thus, this error bound is always no worse than that without clipping
asymptotically.

\section{\label{sec:Active-Learning}Active Learning with Observational Data}

Next, we consider active learning where in addition to a logged observational
dataset the learner has access to a stream of unlabeled samples from
which it can actively query for labels. The main challenges are how
to control the variance due to the observational data with active
learning, and how to leverage the logged observational data to reduce
the number of label queries beyond simply using them for warm-start.

To address these challenges, we first propose a nontrivial change
to the Disagreement-Based Active Learning (DBAL) so that the \vcis
objective can be incorporated. This modified algorithm also works
in a general cost-sensitive active learning setting which we believe
is of independent interest. Second, we show how to combine logged
observational data with active learning through multiple importance
sampling (MIS). Finally, we propose a novel \debias technique to
query regions under-explored in the observational data more frequently.
We provide theoretical analysis demonstrating that the proposed method
gives better label complexity guarantees than previous work \cite{YCJ18}
and alternative methods.

\subsubsection*{Key Technique 1: Disagreement-Based Active Learning with \VCIS}

The DBAL framework is a widely-used general framework for active learning
\cite{CAL94,DHM07,BBL09,H14}. This framework iteratively maintains
a candidate set $\vs_{t}$ to be a confidence set for the optimal
classifier. A disagreement region $D_{t}$ is then defined accordingly
to be the set of instances on which there are two classifiers in $\vs_{t}$
that predict labels differently. At each iteration, it draws a set
of unlabeled instances. The labels for instances falling inside the
disagreement region are queried; otherwise, the labels are inferred
according to the unanimous prediction of the candidate set. These
instances with inferred or queried labels are then used to shrink
the candidate set.

The classical DBAL framework only considers the unregularized 0-1
loss. As discussed in the previous section, with observational data,
unregularized loss leads to suboptimal label complexity. However,
directly adding a regularizer breaks the statistical consistency of
DBAL, since the proof of its consistency is contingent on two properties:
(1) the minimizer of the population loss $l(h)$ stays in all candidate
sets with high probability; (2) the loss difference $l(h_{1},S)-l(h_{2},S)$
for any $h_{1},h_{2}\in\vs_{t}$ does not change no matter how examples
outside the disagreement region $D_{t}$ are labeled.

Unfortunately, if we add a variance based regularizer (either estimated
variance or second moment), the objective function $l(h,S)+\sqrt{\frac{\lambda}{n}\hvar(h,S)}$
has to change as the sample size $n$ increases, and so does the optimal
classifier w.r.t.~regularized population loss $\tilde{h}_{n}=\arg\min l(h)+\sqrt{\frac{\lambda}{n}V(h)}$.
Consequently, $\tilde{h}_{n}$ may not stay in all candidate sets.
Besides, the difference of the regularized loss $l(h_{1},S)+\sqrt{\frac{\lambda}{n}\hvar(h_{1},S)}-(l(h_{2},S)+\sqrt{\frac{\lambda}{n}\hvar(h_{2},S)})$
changes if labels of examples outside the disagreement region $D_{t}$
are modified, breaking the second property.

To resolve the consistency issues, we first carefully choose the definition
of the candidate set and guarantee the optimal classifier w.r.t.~the
prediction error $h^{\star}=\arg\min l(h)$, instead of the regularized
loss $\tilde{h}_{n}$, stays in candidate sets with high probability.
Moreover, instead of the plain variance regularizer, we apply the
second moment regularizer and exploit its decomposability property
to bound the difference of the regularized loss for ensuring consistency.

\subsubsection*{Key Technique 2: Multiple Importance Sampling}

MIS addresses how to combine logged observational data with actively
collected data for training classifiers~\cite{ABSJ17,YCJ18}. To
illustrate this, for simplicity, we assume a fixed query policy $Q_{1}$
is used for active learning. To make use of both $T_{0}=\{(X_{i},Y_{i},Z_{i})\}_{i=1}^{m}$
collected by $Q_{0}$ and $T_{1}=\{(X_{i},Y_{i},Z_{i})\}_{i=m+1}^{m+n}$
collected by $Q_{1}$, one could optimize the unbiased importance
weighted error estimator $l_{\text{IS}}(h,T_{0}\cup T_{1})=\sum_{i=1}^{m}\frac{\One\{h(X_{i})\neq Y_{i}\}Z_{i}}{(m+n)Q_{0}(X_{i})}+\sum_{i=m+1}^{m+n}\frac{\One\{h(X_{i})\neq Y_{i}\}Z_{i}}{(m+n)Q_{1}(X_{i})}$
which can have high variance and lead to poor generalization error.
Here, we apply the MIS estimator $l_{\text{MIS}}(h,T_{0}\cup T_{1}):=\sum_{i=1}^{m+n}\frac{\One\{h(X_{i})\neq Y_{i}\}Z_{i}}{mQ_{0}(X_{i})+nQ_{1}(X_{i})}$
which effectively treats the data $T_{0}\cup T_{1}$ as drawn from
a mixture policy $\frac{mQ_{0}+nQ_{1}}{m+n}$. $l_{\text{MIS}}$ is
also unbiased, but has lower variance than $l_{\text{IS}}$ and thus
gives better error bounds.

\subsubsection*{Key Technique 3: Active \DEBIAS}

Another advantage to consider active learning is that the learner
can apply a strategy to correct the sample selection bias, which improves
label efficiency further. This strategy is inspired from the following
intuition: due to sample selection bias caused by the logging policy,
labels for some regions of the sample space may be less likely to
be observed in the logged data, thus increasing the uncertainty in
these regions. To counter this effect, during active learning, the
learner should query more labels from such regions.

We formalize this intuition as follows. Suppose we would like to design
a single query strategy $Q_{1}:\calX\rightarrow[0,1]$ that determines
the probability of querying the label for an instance during the active
learning phase. For any $Q_{1}$, we have the following generalization
error bound for learning with $n$ logged examples and $m$ unlabeled
examples from which the learner can select and query for labels (for
simplicity of illustration, we use the unclipped estimator here)
\begin{align*}
l(h_{1})-l(h_{2}) & \leq l(h_{1},S)-l(h_{2},S)+\frac{4\log\frac{2|\mathcal{H}|}{\delta}}{3(mq_{0}+n)}+\sqrt{4\E\frac{\One\{h_{1}(X)\neq h_{2}(X)\}}{mQ_{0}(X)+nQ_{1}(X)}\log\frac{2|\mathcal{H}|}{\delta}}.
\end{align*}

We propose to set $Q_{1}(x)=\One\{mQ_{0}(x)<\frac{m}{2}Q_{0}(x)+n\}$
which only queries instances if $Q_{0}(x)$ is small. This leads to
fewer queries while guarantees an error bound close to the one achieved
by setting $Q_{1}(x)\equiv1$ that queries every instance. In Appendix~\ref{sec:app-vcis}
we give an example, Example~\ref{exa:debias}, showing the reduction
of queries due to this strategy.

The \debias strategy is complementary to the DBAL technique. We note
that a similar query strategy is proposed in~\cite{YCJ18}, but the
strategy here stems from a tighter analysis and can be applied with
variance control techniques discussed in Section~\ref{sec:Bias-Variance-Trade-Off},
and thus gives better label complexity guarantees as to be discussed
in the analysis section.


\subsection{Algorithm}

Putting things together, our proposed algorithm is shown as Algorithm~\ref{alg:main}.
It takes the logged data and an epoch schedule as input. It assumes
the logging policy $Q_{0}$ and its distribution $f(x)=\Pr(Q_{0}(X)\leq x)$
are known (otherwise, these quantities can be estimated with unlabeled
data).

Algorithm~\ref{alg:main} uses the DBAL framework that recursively
shrinks a candidate set $\vs$ and its corresponding disagreement
region $D$ to save label queries by not querying examples outside
$D$. In particular, at iteration $k$, it computes a clipping threshold
$M_{k}$ (step~\ref{alg:Mk}) and MIS weights $w_{k}(x):=\frac{m+n_{k}}{mQ_{0}(X_{i})+\sum_{j=1}^{k}\tau_{i}Q_{i}(X_{i})}$
which are used to define the clipped MIS error estimator and two second
moment estimators
\begin{align*}
l(h;\tilde{S}_{k},M_{k}) & :=\frac{1}{m+n_{k}}\sum_{i=1}^{m+n_{k}}w_{k}(X_{i})Z_{i}\One\{h(X_{i})\neq\tilde{Y}_{i}\}\One\{w_{k}(X_{i})\leq M_{k}\},\\
\hvar(h_{1},h_{2};\tilde{S}_{k},M_{k}) & :=\frac{1}{m+n_{k}}\sum_{i=1}^{m+n_{k}}w_{k}^{2}(X_{i})Z_{i}\One\{h_{1}(X_{i})\neq h_{2}(X_{i})\}\One\{w_{k}(X_{i})\leq M_{k}\},\\
\hvar(h;\tilde{S}_{k},M_{k}) & :=\frac{1}{m+n_{k}}\sum_{i=1}^{m+n_{k}}w_{k}^{2}(X_{i})Z_{i}\One\{h(X_{i})\neq\tilde{Y}_{i}\}\One\{w_{k}(X_{i})\leq M_{k}\}.
\end{align*}
The algorithm shrinks the candidate set $\vs_{k+1}$ by eliminating
classifiers whose estimated error is larger than a threshold that
takes the minimum empirical error and the second moment into account
(step~\ref{alg:cand}), and defines a corresponding disagreement
region $D_{k+1}=\text{DIS}(\vs_{k+1})$ as the set of all instances
on which there are two classifiers in the candidate set $\vs_{k+1}$
that predict labels differently. It derives a query policy $Q_{k+1}$
with the \debias strategy (step~\ref{alg:Qk}). At the end of iteration
$k$, it draws $\tau_{k+1}$ unlabeled examples. For each example
$X$ with $Q_{k+1}(X)>0$, if $X\in D_{k+1}$, the algorithm queries
for the actual label $Y$ and sets $\tilde{Y}=Y$, otherwise it infers
the label and sets $\tilde{Y}=\hat{h}_{k}(X)$. These examples $\{X\}$
and their inferred or queried labels $\{\tilde{Y}\}$ are then used
in subsequent iterations. In the last step of the algorithm, a classifier
that minimizes the clipped MIS error with the second moment regularizer
over all received data is returned.

\begin{algorithm}[h]
\begin{algorithmic}[1]
\State{Input: confidence $\delta$, logged data $T_0$, epoch schedule $\tau_1,\dots,\tau_K$, $n=\sum_{i=1}^K\tau_i$.}
\State{$\tilde{S}_0 \gets T_0$; $\vs_0 \gets \mathcal{H}$; $D_0 \gets \mathcal{X}$; $n_0=0$}
\For{$k=0, \dots, K-1$}	
	\State{$\sigma_1(k, \delta, M) \gets (\frac{M}{m+n_k}+\frac{M^2}{(m+n_k)^\frac32})\log\frac{|\mathcal{H}|}{\delta}; \sigma_2(k,\delta)=\frac{1}{m+n_k}\log\frac{|\mathcal{H}|}{\delta}; \delta_k \gets \frac{\delta}{2(k+1)(k+2)}$}	
	\State{Choose $M_k = \inf\{M\geq 1\mid\frac{2M}{m+n_k}\log\frac{|\calH|}{\delta_k} \geq \Pr(\frac{m+n_k}{mQ_0(X)+n_k}> M/2)\}$}\label{alg:Mk}
	\State{$\hat{h}_k \gets \arg\min_{h\in \vs_k} l(h; \tilde{S}_k, M_k)$}
	\State{Define the candidate set $\vs_{k+1} \gets \{ h\in \vs_k \mid l(h;\tilde{S}_{k},M_k)\leq l(\hat{h}_{k};\tilde{S}_{k},M_k)+\gamma_1\sigma_{1}(k,\delta_{k}, M_k)+\gamma_1\sqrt{\sigma_{2}(k,\delta_{k})\hvar(h,\hat{h}_{k};\tilde{S}_{k},M_k)} \}$}\label{alg:cand}
	\State{Define the Disagreement Region $D_{k+1} \gets \{x\in\mathcal{X} \mid \exists h_1, h_2 \in \vs_{k+1}\text{ s.t. }h_1(x)\neq h_2(x)\}$}		
    \State{$Q_{k+1}(x)\gets \One\{mQ_0(x)+\sum_{i=1}^k\tau_iQ_i(x)<\frac{m}{2}Q_0(x)+n_{k+1}\}$;}\label{alg:Qk}
	\State{$n_{k+1} \gets n_k + \tau_{k+1}$}
    \State{Draw $\tau_{k+1}$ samples $\{(X_t,Y_t)\}_{t=m+n_k+1}^{m+n_{k+1}}$, and present $\{X_t\}_{t=m+n_k+1}^{m+n_{k+1}}$ to the learner.} 
	\For{$t=m+n_k+1 \text{ to } m+n_{k+1}$} 
		\State{$Z_t\gets Q_{k+1}(X_t)$}	 	
		\If{$Z_t=1$} 	
		\State{If $X_t \in D_{k+1}$, query for label: $\tilde{Y}_t\gets Y_t$; otherwise infer $\tilde{Y}_t \gets \hat{h}_k(X_t)$.} 	
		\EndIf 	
	\EndFor	 
	\State{$\tilde{T}_{k+1} \gets \{X_t, \tilde{Y}_t, Z_t\}_{t=m+n_k+1}^{m+n_{k+1}}$, $\tilde{S}_{k+1} \gets \tilde{S}_{k}\cup \tilde{T}_{k+1}$;} 
\EndFor
\State{Output $\hat{h}=\arg\min_{h\in \vs_{K}} l(h; \tilde{S}_{K}, M_k)+\gamma_1\sqrt{\frac{1}{m+n}\log\frac{|\mathcal{H}|}{\delta_{K}}\hvar(h;\tilde{S}_{K},M_k)}$.}\label{alg:final}
\end{algorithmic}

\caption{\label{alg:main}Disagreement-Based Active Learning with Logged Observational
Data}
\end{algorithm}

\subsection{\label{sec:Analysis}Analysis}

We have the following generalization error bound for Algorithm~\ref{alg:main}.
Despite not querying for all labels, our algorithm achieves the same
asymptotic bound as the one that queries labels for all online data.
\begin{thm}
\label{thm:Convergence} Let $M=\inf\{M'\geq1\mid\frac{2M'}{m+n}\log\frac{|\calH|}{\delta_{K}}\geq\Pr(\frac{m+n}{mQ_{0}(X)+n}\geq M'/2)\}$
be the final clipping threshold used in step~\ref{alg:final}. There
is an absolute constant $c_{0}>1$ such that for any $\delta>0$,
with probability at least $1-\delta$,
\[
l(\hat{h})\leq l(h^{\star})+c_{0}(\sqrt{\E\frac{\One\{h^{\star}(X)\neq Y\}}{mQ_{0}(X)+n}\One\{\frac{m+n}{mQ_{0}(X)+n}\leq M\}\log\frac{|\mathcal{H}|}{\delta}}+\frac{M\log\frac{|\mathcal{H}|}{\delta}}{m+n}+\frac{M^{2}\sqrt{\log\frac{|\mathcal{H}|}{\delta}}}{(m+n)^{\frac{3}{2}}}).
\]
\end{thm}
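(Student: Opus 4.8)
The plan is to mirror the passive clipped second-moment bound of Theorem~\ref{cor:paper-gen-err}, adapting it to adaptively-collected MIS data and shrinking candidate sets; the backbone is one uniform deviation bound for the clipped MIS estimator combined with an inductive argument that keeps $h^\star$ in every candidate set. Because the query policy $Q_{k+1}$ and candidate set $\vs_{k+1}$ depend on all data through iteration $k$, the summands of $l(h;\tilde S_k,M_k)$ are not i.i.d.; conditioned on the filtration $\calF_t$ generated by $\{(X_i,Y_i,Z_i)\}_{i\le t}$, each clipped term has conditional mean equal to the clipped mixture-weighted error, so the centered terms form a martingale difference sequence. I would apply a Bernstein-type martingale concentration inequality whose variance term bounds the conditional clipped second moment, with a union bound over the finite $\calH$ and over epochs via the telescoping $\delta_k$ schedule, to obtain, with probability $\ge 1-\delta$, a uniform bound
\[
\bigl|l(h;\tilde S_k,M_k)-\bar l(h;M_k)\bigr|\le c\Bigl(\sqrt{\sigma_2(k,\delta_k)\,\hvar(h;\tilde S_k,M_k)}+\sigma_1(k,\delta_k,M_k)\Bigr),
\]
where $\bar l(h;M_k)$ denotes the clipped mixture-weighted population error. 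Clipping at $M_k$ bounds both the range and the conditional variance of each increment, which is what produces the $M$-linear term inside $\sigma_1$ and, through the deviation of $\sqrt{\hvar}$ around its expectation, the $M^2/(m+n)^{3/2}$ term.

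Second, I would exploit the label-inference invariance. For any $h_1,h_2\in\vs_{k+1}$ and any $x\notin D_{k+1}=\dis(\vs_{k+1})$ both classifiers agree with $\hat h_k$, so $\One\{h_1(x)\neq\tilde Y\}=\One\{h_2(x)\neq\tilde Y\}$ irrespective of the inferred label, while inside $D_{k+1}$ the true label is queried. Hence the difference $l(h_1;\tilde S_k,M_k)-l(h_2;\tilde S_k,M_k)$ and the paired second moment $\hvar(h_1,h_2;\tilde S_k,M_k)$ are identical whether inferred or true labels are used. This is exactly where decomposability of $\hvar$ enters: it lets me split $\hvar$ over $\tilde S_k$ into per-epoch contributions and argue the pairwise quantity is governed only by the queried region, so the candidate-set dynamics behave as if fully labeled on $D_{k+1}$.

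Third, I would run the consistency induction: $h^\star\in\vs_0=\calH$ trivially, and assuming $h^\star\in\vs_k$, the uniform bound gives $l(h^\star;\tilde S_k,M_k)\le l(\hat h_k;\tilde S_k,M_k)+\gamma_1\sigma_1(k,\delta_k,M_k)+\gamma_1\sqrt{\sigma_2(k,\delta_k)\hvar(h^\star,\hat h_k;\tilde S_k,M_k)}$ for a large enough absolute constant $\gamma_1$, so $h^\star\in\vs_{k+1}$; a union bound over epochs keeps $h^\star$ in every candidate set on the good event. Since the output $\hat h$ minimizes the regularized clipped MIS objective over $\vs_K$ and $h^\star\in\vs_K$, I compare $\hat h$ to $h^\star$ as in Theorem~\ref{cor:paper-gen-err} but with effective sample size $m+n$, mixture weights $\frac{m+n}{mQ_0+n}$, and final threshold $M$; relating the empirical $\hvar(h^\star;\tilde S_K,M)$ to its expectation and controlling the clipping bias through the defining property of $M$ yields the three advertised terms.

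The main obstacle will be the martingale concentration under adaptive sampling while simultaneously (a) keeping the variance proxy sharp enough to recover the $\sqrt{\,\cdot\,}$ clipped-second-moment leading term rather than a crude $M/\sqrt{m+n}$ bound, and (b) ensuring the deviation bound holds uniformly over the data-dependent candidate sets. I expect to handle (a) with a Bernstein-type martingale inequality whose variance is the conditional clipped second moment, and (b) by establishing the bound for all $h\in\calH$ a priori --- affordable since $\calH$ is finite --- so that it applies automatically to whatever random $\vs_k$ arises.
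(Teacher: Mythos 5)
Your overall architecture matches the paper's: an induction via the candidate-set test to keep $h^{\star}\in\vs_{k}$ for all $k$ (the paper's Lemma~\ref{lem:h_star_in}), the label-inference invariance of loss differences and of $\hvar(h_{1},h_{2};\cdot)$ on the disagreement region (Lemma~\ref{lem:l-diff-S-S_tilde}), and a final comparison of $\hat{h}$ to $h^{\star}$ through the regularized objective, a triangle inequality on $\sqrt{\hvar}$, and the second-moment deviation bound (Corollary~\ref{cor:var-gen}). However, your displayed uniform deviation bound is false as stated: $\bigl|l(h;\tilde S_{k},M_{k})-\bar l(h;M_{k})\bigr|$ cannot be controlled two-sidedly and uniformly over all $h\in\calH$, because on instances outside $D_{k}$ the inferred label $\hat{h}_{k-1}(x)$ systematically deflates the empirical loss of any classifier agreeing with $\hat{h}_{k-1}$ there, by an amount that has nothing to do with $\hvar$ or $\sigma_{1}$. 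The paper avoids this by proving concentration (Theorem~\ref{thm:gen}) for the \emph{true-label} datasets $S_{k}$, and then passing to $\tilde S_{k}$ only through (i) exact equality of \emph{differences} $l(h_{1};\cdot)-l(h_{2};\cdot)$ for $h_{1},h_{2}\in\vs_{k}$, and (ii) the one-sided inequality $l(h;\tilde S_{k},M)\le l(h;S_{k},M)$ and $\hvar(h;\tilde S_{k},M)\le\hvar(h;S_{k},M)$ for $h\in\vs_{k}$ (Lemma~\ref{lem:l-fav-bias}), which is the step that lets the $\hvar(h^{\star};\tilde S_{K},M_{K})$ term in the final chain be replaced by its true-label counterpart and then by $\var(h^{\star};K,M_{K})$. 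You gesture at the invariance in your second paragraph, but your proof as organized would need to be restructured around $S_{k}$ rather than $\tilde S_{k}$ to go through.

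Second, the martingale machinery you invoke is unnecessary here, and recognizing why is part of the content of the paper's argument. The \debias policy simplifies to $Q_{k}(x)=\One\{2n_{k}-mQ_{0}(x)>0\}$ (Proposition~\ref{prop:Q-simplified}), a \emph{deterministic} function of $x$ given the epoch schedule; the MIS weights $w_{k}$ are likewise deterministic. The only data-dependence in the sampling is whether a selected label is queried or inferred, which affects $\tilde Y_{t}$ but not $Z_{t}$, and is exactly what the invariance lemmas absorb. Consequently the summands $U_{t}=w_{k}(X_{t})Z_{t}\One\{w_{k}(X_{t})\le M\}(\One\{h_{1}(X_{t})\neq Y_{t}\}-\One\{h_{2}(X_{t})\neq Y_{t}\})$ over $S_{k}$ form a genuinely independent sequence, and a plain Bernstein bound plus a union bound over the finite $\calH$ and the $\delta_{k}$ schedule suffices. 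A Freedman-type argument could be made to work, but you would then have the extra burden of converting the predictable quadratic variation into the empirical second moment $\hvar$ to recover the advertised $\sqrt{\sigma_{2}\hvar}$ form, which the i.i.d. route gets directly from Fact~\ref{fact:var-deviation}.
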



Next, we analyze the number of labels queried by Algorithm~\ref{alg:main}
with the help of following definitions.
\begin{defn}
For any $t\geq1,r>0$, define the modified disagreement coefficient
$\tilde{\theta}(r,t):=\frac{1}{r}\Pr\left(\text{DIS}(B(h^{\star},r))\cap\left\{ x:Q_{0}(x)\leq\frac{1}{t}\right\} \right)$.
Define $\tilde{\theta}:=\sup_{r>2\nu}\tilde{\theta}(r,\frac{2m}{n})$.
\end{defn}

The modified disagreement coefficient $\tilde{\theta}(r,t)$ measures
the probability of the intersection of two sets: the disagreement
region for the $r$-ball around $h^{\star}$ and where the propensity
score $Q_{0}(x)$ is smaller than $\frac{1}{t}$. It characterizes
the size of the querying region of Algorithm~\ref{alg:main}. Note
that the standard disagreement coefficient \cite{H07}, which is widely
used for analyzing DBAL in the classical active learning setting,
can be written as $\theta(r):=\tilde{\theta}(r,1)$. Here, the modified
disagreement coefficient modifies the standard definition to account
for the reduction of the number of label queries due to the \debias
strategy: Algorithm~\ref{alg:main} only queries examples on which
$Q_{0}(x)$ is lower than some threshold, hence $\tilde{\theta}(r,t)\leq\theta(r)$.
Moreover, our modified disagreement coefficient $\tilde{\theta}$
is always smaller than the modified disagreement coefficient of~\cite{YCJ18}
(denoted by $\theta'$) which is used to analyze their algorithm.

Additionally, define $\alpha=\frac{m}{n}$ to be the size ratio of
logged and online data, let $\tau_{k}=2^{k}$, define $\xi=\min_{1\leq k\leq K}\{M_{k}/\frac{m+n_{k}}{mq_{0}+n_{k}}\}$
to be the minimum ratio between the clipping threshold $M_{k}$ and
maximum MIS weight $\frac{m+n_{k}}{mq_{0}+n_{k}}$ ($\xi\leq1$ since
$M_{k}\leq\frac{m+n_{k}}{mq_{0}+n_{k}}$ by the choice of $M_{k}$),
and define $\bar{M}=\max_{1\leq k\leq K}M_{k}$ to be the maximum
clipping threshold. Recall $q_{0}=\inf_{X}Q_{0}(X)$.

The following theorem upper-bounds the number of label queries by
Algorithm~\ref{alg:main}.
\begin{thm}
\label{thm:Label-Complexity}There is an absolute constant $c_{1}>1$
such that for any $\delta>0$, with probability at least $1-\delta$,
the number of labels queried by Algorithm~\ref{alg:main} is at most:

\[
c_{1}\tilde{\theta}\cdot(n\nu+\sqrt{\frac{n\nu\xi}{\alpha q_{0}+1}\log\frac{|\mathcal{H}|\log n}{\delta}}+\frac{\bar{M}\xi\log n}{\sqrt{n\alpha}}\sqrt{\log\frac{|\mathcal{H}|\log n}{\delta}}+\frac{\xi\log n}{\alpha q_{0}+1}\log\frac{|\mathcal{H}|\log n}{\delta}).
\]
\end{thm}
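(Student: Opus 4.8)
The plan is to bound the number of queries epoch by epoch and then pass to a high-probability statement. Throughout I condition on the ``good event'' underlying the consistency analysis of Theorem~\ref{thm:Convergence}, on which (i) $h^{\star}\in\vs_{k}$ for every $k$, and (ii) every $h\in\vs_{k+1}$ satisfies $l(h)-l(h^{\star})\le\Delta_{k}$ for a width $\Delta_{k}$ read off from the candidate-set test in step~\ref{alg:cand}. For such $h$ we have $\Pr(h(X)\ne h^{\star}(X))\le l(h)+l(h^{\star})\le 2\nu+\Delta_{k}=:r_{k+1}$, so $\vs_{k+1}\subseteq B(h^{\star},r_{k+1})$ and hence $D_{k+1}=\dis(\vs_{k+1})\subseteq\dis(B(h^{\star},r_{k+1}))$. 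Note $r_{k+1}>2\nu$, which is what lets me later invoke the supremum defining $\tilde{\theta}$.

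First I would count queries per epoch. At epoch $k+1$ a label is requested only for the fresh $X_{t}$ with $X_{t}\in D_{k+1}$ and $Q_{k+1}(X_{t})=1$; by step~\ref{alg:Qk} the latter forces $Q_{0}(X_{t})$ below a threshold of order $n_{k+1}/m$. Hence the per-sample query probability is at most $\Pr\big(\dis(B(h^{\star},r_{k+1}))\cap\{Q_{0}\le c/t_{k}\}\big)\le r_{k+1}\,\tilde{\theta}(r_{k+1},t_{k})\le r_{k+1}\,\tilde{\theta}$, where $t_{k}$ is of order $m/n_{k+1}$ and the last step uses that $\tilde{\theta}(r,\cdot)$ is nonincreasing in its second argument together with the choice $t=2m/n$ in the definition of $\tilde{\theta}$. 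The expected number of queries in epoch $k+1$ is therefore at most $\tau_{k+1}\,\tilde{\theta}\,r_{k+1}=\tau_{k+1}\,\tilde{\theta}\,(2\nu+\Delta_{k})$; summing the $2\nu$ part over $k$ gives $2\nu\,\tilde{\theta}\sum_{k}\tau_{k+1}\le 2\nu\,\tilde{\theta}\,n$, the leading $n\nu$ term.

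Next I would control $\Delta_{k}$. Passing from the empirical test in step~\ref{alg:cand} to population quantities via the uniform deviation bounds behind Theorems~\ref{thm:paper-smr-erm-gen} and~\ref{cor:paper-gen-err}, one obtains $\Delta_{k}=O\big(\sigma_{1}(k,\delta_{k},M_{k})+\sqrt{\sigma_{2}(k,\delta_{k})\,V_{k}(h,h^{\star})}\big)$, where $V_{k}(h,h^{\star})=\E[w_{k}(X)\One\{h(X)\ne h^{\star}(X)\}\One\{w_{k}(X)\le M_{k}\}]$ is the expected clipped second moment. The key estimate is $V_{k}=O(M_{k}\,r_{k+1})$ on $B(h^{\star},r_{k+1})$: the debiasing policy of step~\ref{alg:Qk} keeps the effective denominator $mQ_{0}+\sum_{i}\tau_{i}Q_{i}$ at least of order $\tfrac{m}{2}Q_{0}+n_{k}$, and the clip level $M_{k}$ is tied to the maximum MIS weight $\tfrac{m+n_{k}}{mq_{0}+n_{k}}$ through the ratio $\xi$ and the summary maximum $\bar{M}$. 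Substituting $\sigma_{2}(k,\delta_{k})=\tfrac{1}{m+n_{k}}\log\tfrac{|\calH|}{\delta_{k}}$ and $r_{k+1}=2\nu+\Delta_{k}$ yields a self-referential inequality for $\Delta_{k}$; solving it produces a dominant term of order $\sqrt{\tfrac{\nu\xi}{mq_{0}+n_{k}}\log\tfrac{|\calH|}{\delta_{k}}}$ together with the lower-order $\sigma_{1}$ contribution and a $\tfrac{\xi}{mq_{0}+n_{k}}\log\tfrac{|\calH|}{\delta_{k}}$ term. Plugging back into $\sum_{k}\tau_{k+1}\tilde{\theta}\Delta_{k}$ with the geometric schedule $\tau_{k}=2^{k}$ (so $K=O(\log n)$, $n_{K}=n$, and the last epoch dominates the sum), then rewriting $mq_{0}+n_{k}$ via $\alpha=m/n$ as $n(\alpha q_{0}+1)$, yields exactly the remaining three terms, with the $\bar{M}$ and $\xi$ dependence emerging from the clipping statistics.

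Finally I would upgrade from expectation to high probability: the epoch-$(k+1)$ queries are a sum of independent indicators over the $\tau_{k+1}$ fresh draws, so a multiplicative Chernoff bound controls the deviation, and a union bound over the $K=O(\log n)$ epochs is absorbed into the $\log\tfrac{|\calH|\log n}{\delta}$ factor (the choice $\delta_{k}=\tfrac{\delta}{2(k+1)(k+2)}$ makes $\sum_{k}\delta_{k}\le\delta$). I expect the third step to be the main obstacle: making the bound on the clipped second moment $V_{k}$ quantitatively sharp enough to expose the factor $\xi$ rather than the crude $M_{k}\le\tfrac{m+n_{k}}{mq_{0}+n_{k}}$ requires carefully combining the debiasing lower bound on the effective sampling denominator with the defining property of the threshold $M_{k}$ from step~\ref{alg:Mk}, and then resolving the coupled recursion for $r_{k+1}$ cleanly across the geometric epoch schedule so that the per-epoch constants telescope into the stated closed form.
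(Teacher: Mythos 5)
Your proposal follows essentially the same route as the paper's proof: condition on the consistency event so that $h^{\star}\in\vs_{k}$ and Lemma~\ref{lem:dis-radius} places $\vs_{k+1}$ in a ball of radius $2\nu+\epsilon_{k}$ around $h^{\star}$, bound the per-epoch query probability by the modified disagreement coefficient applied to $\dis(B(h^{\star},2\nu+\epsilon_{k}))\cap\{Q_{0}\ \text{small}\}$ (using Proposition~\ref{prop:Q-simplified} for the form of $Q_{k+1}$), concentrate the per-epoch counts, and sum over the geometric schedule using $\frac{M_{k}}{m+n_{k}}\le\frac{\xi}{mq_{0}+n_{k}}$. The self-referential bound on the clipped second moment that you flag as the main obstacle is exactly what the paper resolves inside Lemma~\ref{lem:dis-radius} via Proposition~\ref{prop:quad-ineq}, so the argument is sound and matches the paper's.
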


\subsection{Discussion}

In this subsection, we compare the theoretical performance of the proposed algorithm
and some alternatives to understand the effect of proposed techniques. We
present some empirical results in Section~\ref{sec:exp} in Appendix.

The theoretical performance of learning algorithms is captured by
label complexity, which is defined as the number of label queries
required during the active learning phase to guarantee the test error
of the output classifier to be at most $\nu+\epsilon$ (here $\nu=l(h^{\star})$
is the optimal error , and $\epsilon$ is the target excess error).
This can be derived by combining the upper bounds on the error (Theorem~\ref{thm:Convergence})
and the number of queries (Theorem~\ref{thm:Label-Complexity}).
\begin{itemize}
\item The label complexity is $\tilde{O}\left(\nu\tilde{\theta}\log|\calH|\cdot\left(\frac{M}{\epsilon(1+\alpha)}+\frac{1}{\epsilon^{2}}\E\frac{\One\{h^{\star}(X)\neq Y\}}{1+\alpha Q_{0}(X)}\One\{\frac{1+\alpha}{1+\alpha Q_{0}(X)}\leq M\}\right)\right)$
for Algorithm~\ref{alg:main}. This is derived from Theorem~\ref{thm:Convergence},
\ref{thm:Label-Complexity}.
\item The label complexity is $\tilde{O}\left(\nu\tilde{\theta}\log|\calH|\cdot\left(\frac{1}{\epsilon(1+\alpha q_{0})}+\frac{1}{\epsilon^{2}}\E\frac{\One\{h^{\star}(X)\neq Y\}}{1+\alpha Q_{0}(X)}\right)\right)$
without clipping. This is derived by setting the final clipping threshold
$M_{K}=\frac{1+\alpha}{1+\alpha q_{0}}$. It is worse since $\frac{1+\alpha}{1+\alpha q_{0}}\geq M$.
\item The label complexity is $\tilde{O}\left(\nu\tilde{\theta}\log|\calH|\cdot(\frac{1}{\epsilon}+\frac{\nu}{\epsilon^{2}})\frac{1}{1+\alpha q_{0}}\right)$
if regularizers are removed further. This is worse since $\frac{\nu}{1+\alpha q_{0}}\geq\E\frac{\One\{h^{\star}(X)\neq Y\}}{1+\alpha Q_{0}(X)}$.
\item The label complexity is $\tilde{O}\left(\nu\theta\log|\calH|\cdot(\frac{1}{\epsilon}+\frac{\nu}{\epsilon^{2}})\frac{1}{1+\alpha q_{0}}\right)$
if we further remove the \debias strategy. Here the standard disagreement
coefficient $\theta$ is used ($\theta\geq\tilde{\theta}$).
\item The label complexity is $\tilde{O}\left(\nu\theta\log|\calH|\cdot\left(\frac{1}{\epsilon(1+\alpha q_{0})}+\frac{\nu(q_{0}+\alpha)}{\epsilon^{2}(1+\alpha)^{2}q_{0}}\right)\right)$
if we further remove the MIS technique. It can be shown $\frac{q_{0}+\alpha}{(1+\alpha)^{2}q_{0}}\geq\frac{1}{1+\alpha q_{0}}$,
so MIS gives a better label complexity bound.
\item The label complexity is $\tilde{O}\left(\log|\calH|\cdot\left(\frac{1}{\epsilon(1+\alpha q_{0})}+\frac{\nu(q_{0}+\alpha)}{\epsilon^{2}(1+\alpha)^{2}q_{0}}\right)\right)$
if DBAL is further removed. Here, all $n$ online examples are queried.
This demonstrates that DBAL decreases the label complexity bound by
a factor of $\nu\theta$ which is at most 1 by definition.
\item Finally, the label complexity is $\tilde{O}\left(\nu\theta'\log|\calH|\cdot\frac{\nu+\epsilon}{\epsilon^{2}}\frac{1}{1+\alpha q_{0}}\right)$
for \cite{YCJ18}, the only known algorithm in our setting. Here,
$\theta'\geq\tilde{\theta}$, $\frac{\nu}{1+\alpha q_{0}}\geq\E\frac{\One\{h^{\star}(X)\neq Y\}}{1+\alpha Q_{0}(X)}$,
and $\frac{1}{1+\alpha q_{0}}\geq\frac{M}{1+\alpha}$. Thus, the label
complexity of the proposed algorithm is better than \cite{YCJ18}.
This improvement is made possible by the second moment regularizer,
the principled clipping technique, and thereby the improved \debias
strategy.
\end{itemize}

\section{Conclusion}

We consider active learning with logged observational data where the
learner is given an observational data set selected according to some
logging policy, and can actively query for additional labels from
an online data stream. Previous work applies disagreement-based active
learning with an importance weighted loss estimator to account for
counterfactuals, which has high variance and leads to a high label
complexity. In this work, we utilize variance control techniques for
importance weighted estimators, and propose a novel variant of DBAL
to make it amenable to \vcis. Based on these improvements, a new
\debias strategy is proposed to further boost label efficiency. Our
theoretical analysis shows that the proposed algorithm is statistically
consistent and more label-efficient than prior work and alternative
methods.

\paragraph*{Acknowledgement}
We thank NSF under CCF 1513883 and 1719133 for support.

\bibliographystyle{plain}
\bibliography{counterfactual,lpactive,al-bandit-2019}

\clearpage
\appendix
\section{Preliminaries}

\subsection{Summary of Key Notations}

\paragraph{Data}

$T_{0}=\{(X_{t},Y_{t},Z_{t})\}_{t=1}^{m}$ is the logged data. $\tilde{T}_{k}=\{(X_{t},\tilde{Y}_{t},Z_{t})\}_{t=m+n_{k-1}+1}^{m+n_{k}}$
($1\leq k\leq K$) is the online data collected in the $k$-th iteration
of size $\tau_{k}=n_{k}-n_{k-1}$, and $\tilde{Y}_{t}$ equals either
the actual label $Y_{t}$ drawn from the data distribution $D$ or
the inferred label $\hat{h}_{k-1}(X_{t})$ according to the candidate
set $\vs_{k-1}$at iteration $k-1$. $\tilde{S}_{k}=T_{0}\cup\tilde{T}_{1}\cup\cdots\cup\tilde{T}_{k}$.

For convenience, we additionally define $T_{k}=\{(X_{t},Y_{t},Z_{t})\}_{t=m+n_{k-1}+1}^{m+n_{k}}$
to be the data set with the actual labels $Y_{t}$ drawn from the
data distribution, and $S_{k}=T_{0}\cup T_{1}\cup\cdots\cup T_{k}$.
The algorithm only observes $\tilde{S}_{k}$ and $\tilde{T}_{k}$,
and $S_{k},T_{k}$ are used for analysis only.

For $1\leq k\leq K$,$n_{k}=\tau_{1}+\cdots+\tau_{k}$, and we define
$n_{0}=0$, $n=n_{K}$, $\tau_{0}=m$. We assume $\tau_{k}\leq\tau_{k+1}$
for $1\leq k<K$.

Recall that $\{(X_{t},Y_{t},Z_{t})\}_{t=1}^{m+n}$ is an independent
sequence, and furthermore $\{(X_{t},Y_{t})\}_{t=1}^{m+n}$ is an i.i.d.
sequence drawn from $D$. For $(X,Z)\in T_{k}$ ($0\leq k\leq K)$,
$Q_{k}(X)=\Pr(Z=1\mid X)$. Unless otherwise specified, all probabilities
and expectations are over the random draw of all random variables
$\{(X_{t},Y_{t},Z_{t})\}_{t=1}^{m+n}$.

\paragraph{Loss and Second Moment}

The test error $l(h)=\Pr(h(X)\neq Y)$, the optimal classifier $h^{\star}=\arg\min_{h\in\calH}l(h)$,
and the optimal error $\nu=l(h^{\star})$. At the $k$-th iteration,
the Multiple Importance Sampling (MIS) weight $w_{k}(x)=\frac{m+n_{k}}{mQ_{0}(X_{t})+\sum_{i=1}^{k}\tau_{i}Q_{i}(X_{t})}$.
The clipped MIS loss estimator $l(h;S_{k},M)=\frac{1}{m+n_{k}}\sum_{i=1}^{m+n_{k}}w_{k}(X_{i})Z_{i}\One\{h(X_{i})\neq Y_{i}\}\One\{w_{k}(X_{i})\leq M\}$.
The (unclipped) MIS loss estimator $l(h;S_{k})=l(h;S_{k},\infty)$.

The clipped second moment $\var(h;k,M)=\E\left[w_{k}(X)\One\{h(X)\neq Y\}\One\{w_{k}(X)\leq M\}\right]$,
$\var(h_{1},h_{2};k,M)=\E\left[w_{k}(X)\One\{h_{1}(X)\neq h_{2}(X)\}\One\{w_{k}(X)\leq M\}\right]$.
The clipped second-moment estimators $\hvar(h;S_{k},M)=\frac{1}{m+n_{k}}\sum_{i=1}^{m+n_{k}}w_{k}^{2}(X_{i})Z_{i}\One\{h(X_{i})\neq Y_{i}\}\One\{w_{k}(X_{i})\leq M\}$,
$\hvar(h_{1},h_{2};S_{k},M)=\frac{1}{m+n_{k}}\sum_{i=1}^{m+n_{k}}w_{k}^{2}(X_{i})Z_{i}\One\{h_{1}(X)\neq h_{2}(X)\}\One\{w_{k}(X_{i})\leq M\}$.
The unclipped second moments ($\var(h;k)$,$\var(h_{1},h_{2};k)$)
and second moment estimators ($\hvar(h;S_{k})$,$\hvar(h_{1},h_{2};S_{k})$)
are defined similarly.

\paragraph{Disagreement Regions}

The $r$-ball around $h$ is defined as $B(h,r):=\{h'\in\mathcal{H}\mid\Pr(h(X)\neq h'(X))\leq r\}$,
and the disagreement region of $\vs\subseteq\calH$ is $\text{DIS}(\vs):=\{x\in\mathcal{X}\mid\exists h_{1}\neq h_{2}\in\vs\text{ s.t. }h_{1}(x)\neq h_{2}(x)\}$.

The candidate set $\vs_{k}$ and its disagreement region $D_{k}$
are defined in Algorithm~\ref{alg:main}. The empirical risk minimizer
(ERM) at $k$-th iteration $\hat{h}_{k}=\arg\min_{h\in\vs_{k}}l(h,\tilde{S}_{k})$.

The modified disagreement coefficient $\tilde{\theta}(r,\alpha):=\frac{1}{r}\Pr\left(\text{DIS}(B(h^{\star},r))\cap\left\{ x:Q_{0}(x)\leq\frac{1}{\alpha}\right\} \right)$.
$\tilde{\theta}=\sup_{r>2\nu}\tilde{\theta}(r,\frac{2m}{n})$.

\paragraph{Other Notations}

$q_{0}=\inf_{x}Q_{0}(x)$. $Q_{k+1}(x)=\One\{mQ_{0}(x)+\sum_{i=1}^{k}\tau_{i}Q_{i}(x)<\frac{m}{2}Q_{0}(x)+n_{k+1}\}$.
$M_{k}=\inf\{M\geq1\mid\frac{2M}{m+n_{k}}\log\frac{|\calH|}{\delta_{k}}\geq\Pr(\frac{m+n_{k}}{mQ_{0}(X)+n_{k}}>M/2)\}$.
$\xi=\min_{1\leq k\leq K}\{M_{k}/\frac{m+n_{k}}{mq_{0}+n_{k}}\}$.
$\bar{M}=\max_{1\leq k\leq K}M_{k}$.

\subsection{Elementary Facts}
\begin{prop}
\label{prop:quad-ineq}Suppose $a,c\geq0$, $b\in\R$. If $a\leq b+\sqrt{ca}$,
then $a\leq2b+c$.
\end{prop}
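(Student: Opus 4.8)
The plan is to linearize the square-root term using the arithmetic-geometric mean inequality, which converts the hypothesis into a linear inequality in $a$ that can be rearranged directly. The single productive observation is that because $a,c\geq0$, the product $ca$ is nonnegative and AM-GM applies to give $\sqrt{ca}\leq\frac{a+c}{2}$.

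Concretely, I would proceed as follows. Starting from the hypothesis $a\leq b+\sqrt{ca}$ and substituting the AM-GM bound, I obtain $a\leq b+\frac{a+c}{2}$. Collecting the $a$ terms on the left gives $a-\frac{a}{2}=\frac{a}{2}\leq b+\frac{c}{2}$, and multiplying through by $2$ yields the claimed inequality $a\leq2b+c$. This is the entire argument; no case analysis on the sign of $b$ is needed, since AM-GM never references $b$ and the rearrangement is valid for every real $b$.

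There is no real obstacle in this proposition, but the one point worth stating explicitly is \emph{why} AM-GM is the right tool rather than the more obvious route of setting $x=\sqrt{a}\geq0$ and reading $a\leq b+\sqrt{ca}$ as the quadratic inequality $x^{2}-\sqrt{c}\,x-b\leq0$. That quadratic approach does work by bounding the larger root $x\leq\frac{\sqrt{c}+\sqrt{c+4b}}{2}$, but it forces one to track the sign of the discriminant $c+4b$ and to handle the degenerate cases where it is negative, whereas the AM-GM substitution dispatches all of these uniformly in one line. For that reason I would present only the AM-GM version. The bound $a\leq2b+c$ is tight in the sense that equality in AM-GM occurs when $a=c$, so no constant better than the $2$ and $1$ appearing here is available from this chain.
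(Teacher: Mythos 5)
Your proof is correct, and it takes a genuinely different route from the paper's. You bound the square-root term via AM--GM, $\sqrt{ca}\leq\frac{a+c}{2}$, which turns the hypothesis into a linear inequality in $a$ that rearranges immediately. The paper instead treats the hypothesis as a quadratic inequality in $\sqrt{a}$, bounds the larger root to get $\sqrt{a}\leq\frac{\sqrt{c}+\sqrt{c+4b}}{2}$, and then applies the Root-Mean-Square--Arithmetic-Mean inequality to conclude $\sqrt{a}\leq\sqrt{c+2b}$. Interestingly, the alternative you explicitly considered and rejected is exactly the paper's argument, and your stated reason for rejecting it is a real (if minor) advantage of your version: the paper's chain silently writes $\sqrt{c+4b}$, which is only meaningful when $c+4b\geq0$; when $c+4b<0$ the hypothesis is vacuously unsatisfiable (for $a\geq 0$) so nothing is lost, but the paper does not say this, whereas your AM--GM substitution handles all signs of $b$ uniformly with no case to dismiss. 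Both proofs are one line and yield the same constants; yours is marginally more self-contained.
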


\begin{proof}
Since $a\leq b+\sqrt{ca}$, $\sqrt{a}\leq\frac{\sqrt{c}+\sqrt{c+4b}}{2}\leq\sqrt{\frac{c+c+4b}{2}}=\sqrt{c+2b}$
where the second inequality follows from the Root-Mean Square-Arithmetic
Mean inequality. Thus, $a\leq2b+c$.
\end{proof}

\subsection{Facts on Disagreement Regions and Candidate Sets}
\begin{lem}
\label{lem:l-diff-S-S_tilde}For any $k=0,\dots,K$, $M\geq0$, if
$h_{1},h_{2}\in\vs_{k}$, then $l(h_{1};S_{k},M)-l(h_{2};S_{k},M)=l(h_{1};\tilde{S}_{k},M)-l(h_{2};\tilde{S}_{k},M)$
and $\hvar(h_{1},h_{2};S_{k},M)=\hvar(h_{1},h_{2};\tilde{S}_{k},M)$.
\end{lem}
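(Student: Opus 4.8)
The plan is to exploit the fact that $S_k$ and $\tilde S_k$ differ only in the labels of those online examples whose labels were \emph{inferred} rather than queried, and to show that such examples never contribute to either the loss difference or the second moment when $h_1,h_2\in\vs_k$. The base case $k=0$ is immediate since $\tilde S_0=T_0=S_0$, so I would assume $k\geq 1$.

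First I would dispose of the second moment. By definition $\hvar(h_1,h_2;\cdot,M)$ depends on the data only through $\{(X_i,Z_i)\}$ and the indicator $\One\{h_1(X_i)\neq h_2(X_i)\}$; it never references the labels $Y_i$ or $\tilde Y_i$. Since $S_k$ and $\tilde S_k$ share identical instances $X_i$ and indicators $Z_i$ (the two sets differ only in label values), the equality $\hvar(h_1,h_2;S_k,M)=\hvar(h_1,h_2;\tilde S_k,M)$ is immediate.

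For the loss difference, write
\[
l(h_1;S_k,M)-l(h_2;S_k,M)=\frac{1}{m+n_k}\sum_{i=1}^{m+n_k}w_k(X_i)Z_i\bigl(\One\{h_1(X_i)\neq Y_i\}-\One\{h_2(X_i)\neq Y_i\}\bigr)\One\{w_k(X_i)\leq M\},
\]
and observe that the summand vanishes whenever $h_1(X_i)=h_2(X_i)$, regardless of the label, so only indices with $h_1(X_i)\neq h_2(X_i)$ survive; the same reduction applies verbatim to the $\tilde S_k$ version. It therefore suffices to show that $\tilde Y_i=Y_i$ for every surviving index. I would argue this using the nesting $\vs_k\subseteq\vs_{k-1}\subseteq\cdots\subseteq\vs_0$, which holds because step~\ref{alg:cand} defines $\vs_{j+1}$ as a subset of $\vs_j$. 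Fix a surviving index $i$: the logged block $T_0$ carries only observed (true) labels, so $\tilde Y_i=Y_i$ there trivially; and for an online example drawn in block $\tilde T_j$ ($1\leq j\leq k$), nesting gives $h_1,h_2\in\vs_k\subseteq\vs_j$, so $h_1(X_i)\neq h_2(X_i)$ forces $X_i\in\text{DIS}(\vs_j)=D_j$. By the query rule of Algorithm~\ref{alg:main}, any example landing inside its governing disagreement region has its true label queried, i.e.\ $\tilde Y_i=Y_i$. Substituting $\tilde Y_i$ for $Y_i$ in every surviving summand leaves the sum unchanged, yielding the claimed identity.

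The one place requiring care is the index bookkeeping: I must match each online example to the block $\tilde T_j$ in which it was drawn and to the disagreement region $D_j=\text{DIS}(\vs_j)$ that governed its query/infer decision, and then invoke $\vs_k\subseteq\vs_j$ for $j\leq k$. This monotonicity of the candidate sets, together with the elementary fact that disagreement of two members of a smaller candidate set implies disagreement within every larger one, is the crux; once it is in place the remainder is a direct term-by-term substitution.
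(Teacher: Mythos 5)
Your proposal is correct and follows essentially the same route as the paper's proof: the second moment is label-free, and the loss difference is handled by noting that terms with $h_1(X_i)=h_2(X_i)$ vanish while the surviving terms have $X_i\in\text{DIS}(\vs_k)\subseteq\text{DIS}(\vs_j)=D_j$ (by the nesting $\vs_k\subseteq\vs_j$), forcing $\tilde Y_i=Y_i$. You make the candidate-set nesting and block bookkeeping explicit where the paper leaves them implicit, but the argument is the same.
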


\begin{proof}
For any $(X_{t},Y_{t},Z_{t})\in S_{k}$ that $Z_{t}=1$, if $X_{t}\in\text{DIS}(\vs_{k})$,
then $Y_{t}=\tilde{Y}_{t}$, so $\One\{h_{1}(X_{t})\neq Y_{t}\}-\One\{h_{2}(X_{t})\neq Y_{t}\}=\One\{h_{1}(X_{t})\neq\tilde{Y}_{t}\}-\One\{h_{2}(X_{t})\neq\tilde{Y}_{t}\}$.
If $X_{t}\notin\text{DIS}(\vs_{k})$, then $h_{1}(X_{t})=h_{2}(X_{t})$,
so $\One\{h_{1}(X_{t})\neq Y_{t}\}-\One\{h_{2}(X_{t})\neq Y_{t}\}=\One\{h_{1}(X_{t})\neq\tilde{Y}_{t}\}-\One\{h_{2}(X_{t})\neq\tilde{Y}_{t}\}=0$.
Thus, $l(h_{1};S_{k},M)-l(h_{2};S_{k},M)=l(h_{1};\tilde{S}_{k},M)-l(h_{2};\tilde{S}_{k},M).$

$\hvar(h_{1},h_{2};S_{k},M)=\hvar(h_{1},h_{2};\tilde{S}_{k},M)$ holds
since $\hvar(h_{1},h_{2};S_{k},M)$ and $\hvar(h_{1},h_{2};\tilde{S}_{k},M)$
do not involve labels $Y$ or $\tilde{Y}$.
\end{proof}
The following lemmas are immediate from the definition.
\begin{lem}
\label{lem:l-fav-bias}For any $1\leq k\leq K$, if $h\in\vs_{k}$,
then $l(h;\tilde{S}_{k},M)\leq l(h;S_{k},M)\leq l(h;S_{k})$, and
$\hvar(h;\tilde{S}_{k},M)\leq\hvar(h;S_{k},M)\leq\hvar(h;S_{k})$.
\end{lem}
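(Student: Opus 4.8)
The plan is to prove both inequality chains by a summand-by-summand comparison, exploiting that every term in these estimators carries a common nonnegative factor. Writing $c_i:=\frac{1}{m+n_k}w_k(X_i)Z_i$ for the loss (and $\frac{1}{m+n_k}w_k^2(X_i)Z_i$ for the second moment), each such factor is $\geq0$, so it suffices to compare the remaining indicator factors term by term and then multiply by $c_i$ and sum.

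For the first inequality $l(h;\tilde{S}_k,M)\leq l(h;S_k,M)$ (and identically for $\hvar$), the two sides differ only in that $\tilde{S}_k$ uses the working label $\tilde{Y}_i$ while $S_k$ uses the true label $Y_i$; the clipping factor $\One\{w_k(X_i)\leq M\}$ is the same on both sides. I would therefore reduce everything to the pointwise claim that, for every $i$ with $Z_i=1$ and every $h\in\vs_k$, one has $\One\{h(X_i)\neq\tilde{Y}_i\}\leq\One\{h(X_i)\neq Y_i\}$. To verify this, fix the block $j\in\{0,\dots,k\}$ containing $X_i$ (with $\tilde{T}_0:=T_0$) and split on how $\tilde{Y}_i$ was produced. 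On the logged data or whenever the label was queried, $\tilde{Y}_i=Y_i$ and the two indicators coincide. When the label was inferred, $X_i\notin D_j=\text{DIS}(\vs_j)$ and $\tilde{Y}_i=\hat{h}_{j-1}(X_i)$; here I use that $h\in\vs_k\subseteq\vs_j$ (the candidate sets are nested, $\vs_{k}\subseteq\vs_{k-1}\subseteq\cdots$, by construction and $j\leq k$) and that $\hat{h}_{j-1}\in\vs_j$ (the empirical minimizer trivially satisfies the defining inequality of $\vs_j$ with slack). Thus both $h$ and $\hat{h}_{j-1}$ lie in $\vs_j$ and must agree off its disagreement region, giving $h(X_i)=\hat{h}_{j-1}(X_i)=\tilde{Y}_i$, so $\One\{h(X_i)\neq\tilde{Y}_i\}=0$. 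In either case the pointwise bound holds, and summing against $c_i\geq0$ gives the claim for both $l$ and $\hvar$.

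For the second inequality $l(h;S_k,M)\leq l(h;S_k)$ (and identically for $\hvar$), recall that the unclipped estimator is the clipped one with $M=\infty$, so the two sides differ only in the clipping indicator. Since $\One\{w_k(X_i)\leq M\}\leq1=\One\{w_k(X_i)\leq\infty\}$ pointwise and all remaining factors are nonnegative, each summand of the clipped estimator is no larger than its unclipped counterpart, and the inequality follows by summation.

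I do not anticipate a genuine computational obstacle: the statement is essentially bookkeeping, and the only point that needs care is the inferred-label case, where one must correctly identify the block $j$ owning the sample and verify the inclusions $h\in\vs_j$ and $\hat{h}_{j-1}\in\vs_j$ so that agreement off $\text{DIS}(\vs_j)$ can be invoked. This is exactly the mechanism already used in the proof of Lemma~\ref{lem:l-diff-S-S_tilde}, so I would mirror that argument; the difference is that here I need only the one-sided bound $\One\{h(X_i)\neq\tilde{Y}_i\}\leq\One\{h(X_i)\neq Y_i\}$ rather than an exact match of loss differences.
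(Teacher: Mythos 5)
Your proof is correct and is exactly the argument the paper intends: the paper labels this lemma ``immediate from the definition'' and writes out no proof, and your pointwise bound $\One\{h(X_i)\neq\tilde{Y}_i\}\leq\One\{h(X_i)\neq Y_i\}$ for $h\in\vs_k$ --- justified via nestedness $\vs_k\subseteq\vs_j$, the fact that $\hat{h}_{j-1}\in\vs_j$, and agreement of all of $\vs_j$ off $\text{DIS}(\vs_j)$ --- together with the trivial monotonicity in the clipping indicator supplies precisely the missing details. The one piece of care you correctly exercise is the per-block bookkeeping: the inferred label of a sample in $\tilde{T}_j$ is governed by $\hat{h}_{j-1}$ and $D_j=\text{DIS}(\vs_j)$, not by the current iteration's disagreement region, which is exactly the mechanism also underlying Lemma~\ref{lem:l-diff-S-S_tilde}.
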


\begin{rem}
The inequality on the second moment regularizer $\hvar$, which will
be used to prove the error bound (Theorem~\ref{thm:Convergence})
of Algorithm~\ref{alg:main}, is due to the decomposition property
$\hvar(h;S_{k},M)=\frac{|S_{k}\cap\text{DIS}(\vs_{k})|}{m+n_{k}}\hvar(h;S_{k}\cap\text{DIS}(\vs_{k}),M)+\frac{|S_{k}\cap\text{DIS}(\vs_{k})^{c}|}{m+n_{k}}\hvar(h;S_{k}\cap\text{DIS}(\vs_{k})^{c},M)$.
It does not hold for estimated variance $\hat{\text{Var}}(h;S_{k},M):=\hvar(h;S_{k},M)-l(h;S_{k},M)^{2}$.
This explains the necessity of introducing the second moment regularizer.
\end{rem}

\begin{lem}
\label{lem:dis-coefficient}For any $r\geq2\nu$, any $\alpha\geq1$,
$\Pr(\text{DIS}(B(h^{\star},r)\cap\{x:Q_{0}(x)\leq\frac{1}{\alpha}\})\leq r\tilde{\theta}(r,\alpha)$.
\end{lem}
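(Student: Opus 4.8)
The plan is to observe that the claimed bound is, up to a harmless $\leq$ in place of $=$, nothing more than the definition of the modified disagreement coefficient rearranged. Recall from the notation section that $\tilde{\theta}(r,\alpha):=\frac{1}{r}\Pr\left(\text{DIS}(B(h^{\star},r))\cap\{x:Q_{0}(x)\leq\frac{1}{\alpha}\}\right)$. The event whose probability appears on the left-hand side of the lemma, namely the intersection of the instance set $\text{DIS}(B(h^{\star},r))\subseteq\calX$ with the instance set $\{x:Q_{0}(x)\leq\frac{1}{\alpha}\}\subseteq\calX$, is exactly the event entering this definition.

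First I would multiply the defining identity through by $r>0$, which gives $\Pr\left(\text{DIS}(B(h^{\star},r))\cap\{x:Q_{0}(x)\leq\frac{1}{\alpha}\}\right)=r\tilde{\theta}(r,\alpha)$. The asserted inequality is then immediate, being the $\leq$ weakening of this equality. The hypotheses $r\geq2\nu$ and $\alpha\geq1$ are not used in establishing the identity — it holds for every $r>0$ and every $\alpha$ — and are recorded only to match the regime in which the lemma is later invoked, where the candidate set $\vs_{k}$ is contained in some $B(h^{\star},r)$ with $r>2\nu$ and the relevant propensity threshold is $1/\alpha$ with $\alpha\geq1$.

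The one point deserving a moment of care is parsing the expression correctly: the $\cap$ sits between two subsets of $\calX$ and is not applied inside the $\text{DIS}(\cdot)$ operator, which takes a set of hypotheses as its argument. I expect no genuine obstacle here; the content of the lemma is purely definitional, and its role is to serve as a named fact so that, during the label-complexity analysis, the measure of the querying region $\text{DIS}(\vs_{k})\cap\{x:Q_{0}(x)\leq\frac{1}{\alpha}\}$ can be bounded by $r\tilde{\theta}(r,\alpha)$ after first using monotonicity of $\text{DIS}(\cdot)$ together with the containment $\vs_{k}\subseteq B(h^{\star},r)$.
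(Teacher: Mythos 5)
Your proposal is correct and matches the paper, which states this lemma without proof as ``immediate from the definition'' of $\tilde{\theta}(r,\alpha)$ --- exactly the rearrangement you give. Your remark that the displayed statement should be parsed as $\Pr\bigl(\text{DIS}(B(h^{\star},r))\cap\{x:Q_{0}(x)\leq\frac{1}{\alpha}\}\bigr)$ and that the hypotheses $r\geq2\nu$, $\alpha\geq1$ are not actually used is also consistent with the paper.
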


\subsection{Facts on Multiple Importance Sampling Estimators}
\begin{prop}
\label{prop:mis-unbiased-general} Let $f:\calX\times\calY\rightarrow\R$.
For any $k$, the following equations hold:
\begin{align*}
\E[\frac{1}{m+n_{k}}\sum_{(X,Y,Z)\in S_{k}}w_{k}(X)Zf(X,Y)] & =\E[f(X,Y)],\\
\E[\frac{1}{m+n_{k}}\sum_{(X,Y,Z)\in S_{k}}w_{k}^{2}(X)Zf(X,Y)] & =\E[w_{k}(X)f(X,Y)].
\end{align*}
\end{prop}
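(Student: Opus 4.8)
The plan is to reduce both identities to the standard multiple-importance-sampling cancellation, after first pinning down the one structural fact that makes the computation clean: the MIS weight $w_k$ is a \emph{deterministic} function of the instance. Inspecting step~\ref{alg:Qk}, each selection policy $Q_{j+1}(x)=\One\{mQ_0(x)+\sum_{i=1}^{j}\tau_iQ_i(x)<\frac{m}{2}Q_0(x)+n_{j+1}\}$ depends only on the known logging policy $Q_0$, the earlier policies, and the fixed epoch schedule. By induction on $j$ (the base case $Q_1$ depending on $Q_0$ alone), every $Q_j$ is a fixed, data-independent function of $x$, and hence so is $w_k(x)=\frac{m+n_k}{mQ_0(x)+\sum_{i=1}^{k}\tau_iQ_i(x)}$. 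This is the crucial point: it lets me treat $w_k(X)$ as a constant in the relevant expectations rather than a random object correlated with the queried labels, which is exactly what would obstruct a naive argument were the selection policies adaptive.

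Granting this, I would first split the sum over the $k+1$ groups $T_0,\dots,T_k$ (with $\tau_0=m$) and apply linearity. Fix a group $T_j$ and one of its examples $(X_t,Y_t,Z_t)$. Recall $(X_t,Y_t)\sim D$ and, by unconfoundedness (for $T_0$) or by $Z_t=Q_j(X_t)$ being a deterministic function of $X_t$ (for the online groups), $Z_t$ is conditionally independent of $Y_t$ given $X_t$ with $\E[Z_t\mid X_t,Y_t]=Q_j(X_t)$. Conditioning on $(X_t,Y_t)$ and taking the expectation of $Z_t$ therefore gives
\[
\E\!\left[w_k(X_t)Z_tf(X_t,Y_t)\right]=\E\!\left[w_k(X)Q_j(X)f(X,Y)\right],
\]
where on the right $(X,Y)\sim D$. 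The same step with $w_k^2$ in place of $w_k$ yields $\E[w_k^2(X)Q_j(X)f(X,Y)]$.

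Summing over the $\tau_j$ examples in $T_j$ and then over $j=0,\dots,k$, the first identity's left-hand side becomes
\[
\frac{1}{m+n_k}\,\E\!\left[w_k(X)f(X,Y)\sum_{j=0}^{k}\tau_jQ_j(X)\right].
\]
Now I invoke the defining identity $\sum_{j=0}^{k}\tau_jQ_j(X)=mQ_0(X)+\sum_{i=1}^{k}\tau_iQ_i(X)=\frac{m+n_k}{w_k(X)}$, so that $w_k(X)\sum_{j}\tau_jQ_j(X)=m+n_k$ cancels the prefactor and leaves $\E[f(X,Y)]$. For the second identity, the analogous sum is $\frac{1}{m+n_k}\E[w_k^2(X)f(X,Y)\sum_j\tau_jQ_j(X)]$; the same cancellation removes one factor of $w_k$ along with the prefactor, leaving $\E[w_k(X)f(X,Y)]$.

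The main obstacle is conceptual rather than computational: verifying that $w_k$ is data-independent, so that $\E[Z_t\mid X_t,Y_t]$ is exactly $Q_j(X_t)$ and $w_k(X_t)$ passes through the expectation unchanged. Once the non-adaptivity of the selection policies in step~\ref{alg:Qk} is established, the remaining work is the routine telescoping above. Note that because the proposition is stated for $S_k$ (true labels $Y$), the distinction between queried and inferred labels never enters here and is instead handled separately in Lemma~\ref{lem:l-diff-S-S_tilde}.
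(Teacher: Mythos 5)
Your proof is correct and follows essentially the same route as the paper's: decompose the sum over the groups $T_0,\dots,T_k$, condition on $(X,Y)$ and use the conditional independence of $Z$ and $Y$ given $X$ to replace $Z$ by $Q_j(X)$, then cancel via $\sum_j \tau_j Q_j(X) = (m+n_k)/w_k(X)$. Your explicit verification that the policies $Q_j$ (and hence $w_k$) are deterministic, data-independent functions of $x$ is a fact the paper uses only implicitly, but it is the same argument.
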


\begin{proof}
\begin{align*}
\E[\sum_{(X,Y,Z)\in S_{k}}w_{k}(X)Zf(X,Y)] & =\sum_{i=0}^{k}\E[\sum_{(X,Y,Z)\in T_{i}}\E[w_{k}(X)f(X,Y)Z\mid X,Y]]\\
 & =\sum_{i=0}^{k}\E[\sum_{(X,Y,Z)\in T_{i}}w_{k}(X)f(X,Y)\E[Z\mid X,Y]]\\
 & \overset{(a)}{=}\sum_{i=0}^{k}\E[\sum_{(X,Y,Z)\in T_{i}}w_{k}(X)f(X,Y)\E[Z\mid X]]\\
 & =\sum_{i=0}^{k}\E[\sum_{(X,Y,Z)\in T_{i}}w_{k}(X)f(X,Y)Q_{i}(X)]\\
 & \overset{(b)}{=}\sum_{i=0}^{k}\tau_{i}\E[w_{k}(X)f(X,Y)Q_{i}(X)]\\
 & =\E[w_{k}(X)f(X,Y)\sum_{i=0}^{k}\tau_{i}Q_{i}(X)]\\
 & \overset{(c)}{=}(m+n_{k})\E[f(X,Y)]
\end{align*}
 where (a) follows from $\E[Z\mid X]=\E[Z\mid X,Y]$ as $Z,Y$ are
conditionally independent given $X$, (b) follows since $T_{i}$ is
a sequence of i.i.d. random variables, and (c) follows from the definition
$w_{k}(X)=\frac{m+n_{k}}{\sum_{i=0}^{k}\tau_{i}Q_{i}(X)}$.

The proof for the second equality is similar and skipped.
\end{proof}

\subsection{\label{subsec:Facts-Q}Facts on the \DEBIAS Query Strategy}

The query strategy $Q_{k}$ can be simplified as follows.
\begin{prop}
\label{prop:Q-simplified} For any $1\leq k\leq K$, $x\in\calX$,
$Q_{k}(x)=\One\{2n_{k}-mQ_{0}(x)>0\}$.
\end{prop}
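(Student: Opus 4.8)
The plan is to prove the identity by induction on $k$, exploiting the recursive form of the query strategy together with the monotonicity assumption $\tau_i\le\tau_{i+1}$ on the epoch schedule. First I would rewrite the defining condition from Algorithm~\ref{alg:main}: since $Q_k(x)=\One\{mQ_0(x)+\sum_{i=1}^{k-1}\tau_iQ_i(x)<\frac{m}{2}Q_0(x)+n_k\}$, subtracting $\frac{m}{2}Q_0(x)$ from both sides shows that $Q_k(x)=1$ exactly when $\frac{m}{2}Q_0(x)+\sum_{i=1}^{k-1}\tau_iQ_i(x)<n_k$. Writing $c:=\frac{m}{2}Q_0(x)$, the target identity $Q_k(x)=\One\{2n_k-mQ_0(x)>0\}$ is precisely the claim that this condition is equivalent to $c<n_k$, i.e.\ that the accumulated term $\sum_{i=1}^{k-1}\tau_iQ_i(x)$ never moves the threshold.

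For the base case $k=1$ the sum is empty, so $Q_1(x)=\One\{c<n_1\}$ directly. For the inductive step I would assume $Q_i(x)=\One\{c<n_i\}$ for all $i\le k$, substitute this into the simplified condition for $Q_{k+1}(x)=1$, namely $c+\sum_{i=1}^{k}\tau_i\One\{c<n_i\}<n_{k+1}$, and show that it is equivalent to $c<n_{k+1}$. The forward direction is immediate, because every summand is nonnegative.

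The hard part will be the reverse direction: assuming only $c<n_{k+1}$, I must control the accumulated sum and show it does not push the total past $n_{k+1}$. Here I would use that the indicators $\One\{c<n_i\}$ are monotone in $i$ (as $n_i$ is nondecreasing), so that, letting $j^\star$ be the largest index in $\{0,\dots,k\}$ with $c\ge n_{j^\star}$ (with the convention $n_0=0$, so $j^\star\ge0$ always exists since $c\ge0$), one has $\One\{c<n_i\}=0$ for $i\le j^\star$ and $\One\{c<n_i\}=1$ for $i>j^\star$; hence the sum telescopes exactly to $\sum_{i=j^\star+1}^{k}\tau_i=n_k-n_{j^\star}$. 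If $j^\star=k$ the sum vanishes and $c+0<n_{k+1}$ holds trivially. If $j^\star<k$, then by maximality of $j^\star$ we have $c<n_{j^\star+1}=n_{j^\star}+\tau_{j^\star+1}$, and invoking the epoch monotonicity $\tau_{j^\star+1}\le\tau_{k+1}$ gives $c+(n_k-n_{j^\star})<n_{j^\star}+\tau_{k+1}+n_k-n_{j^\star}=n_k+\tau_{k+1}=n_{k+1}$, as required. This closes the induction.

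The only subtlety I anticipate is ensuring the cutoff index $j^\star$ is well defined and that the telescoping is exact, and being explicit that the single application of $\tau_{j^\star+1}\le\tau_{k+1}$ is exactly what renders the queries accumulated in earlier rounds harmless to the threshold; everything else is routine bookkeeping.
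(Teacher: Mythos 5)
Your proof is correct and takes essentially the same route as the paper: induction on $k$ that hinges on the single inequality $\tau_{j}\le\tau_{k+1}$ from the epoch-schedule monotonicity. The only cosmetic difference is bookkeeping — you telescope the accumulated sum via the cutoff index $j^\star$, whereas the paper observes directly that the query indicator, once equal to $1$, stays $1$ (and if it is $0$ at stage $k$ it was $0$ at all earlier stages, so the sum vanishes).
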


\begin{proof}
The $k=1$ case can be easily verified. Suppose it holds for $Q_{k}$,
and we next show it holds for $\text{\ensuremath{Q_{k+1}}}$. Recall
by definition $Q_{k+1}(x)=\One\{mQ_{0}(x)+\sum_{i=1}^{k}\tau_{i}Q_{i}(x)<\frac{m}{2}Q_{0}(x)+n_{k+1}\}$.

If $Q_{k}(x)=1$, then $mQ_{0}(x)+\sum_{i=1}^{k-1}\tau_{i}Q_{i}(x)<\frac{m}{2}Q_{0}(x)+n_{k}$,
so
\begin{align*}
mQ_{0}(x)+\sum_{i=1}^{k}\tau_{i}Q_{i}(x) & <\frac{m}{2}Q_{0}(x)+n_{k}+\tau_{k}\\
 & \leq\frac{m}{2}Q_{0}(x)+n_{k+1}
\end{align*}
where the last inequality follows by the assumption on the epoch schedule
$\tau_{k}\leq\tau_{k+1}=n_{k+1}-n_{k}$. This implies $Q_{k+1}(x)=1$.
In this case, $\One\{2n_{k+1}-mQ_{0}(x)>0\}=1$ as well, since $n_{k+1}\geq n_{k}$
implies $2n_{k+1}-mQ_{0}(x)\geq2n_{k}-mQ_{0}(x)>0$.

The above argument also implies if $Q_{k}(x)=0$, then $Q_{1}(x)=Q_{2}(x)=\cdots=Q_{k-1}(x)=0$.
Thus, if $Q_{k}(x)=0$, then $Q_{k+1}(x)=\One\{mQ_{0}(x)<\frac{m}{2}Q_{0}(x)+n_{k+1}\}=\One\{2n_{k+1}-mQ_{0}(x)>0\}$.
\end{proof}
The following proposition gives an upper bound of the multiple importance
sampling weight, which will be used to bound the second moment of
the loss estimators with the \debias strategy.
\begin{prop}
\label{prop:Q-bound} For any $1\leq k\leq K$, $w_{k}(x)=\frac{m+n_{k}}{mQ_{0}(x)+\sum_{i=1}^{k}\tau_{i}Q_{i}(x)}\leq\frac{m+n_{k}}{\frac{1}{2}mQ_{0}(x)+n_{k}}$.
\end{prop}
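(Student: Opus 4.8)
The plan is to reduce the claimed weight bound to a lower bound on the denominator. Since the numerator $m+n_k$ is fixed and positive, and every denominator in sight is positive, the inequality $w_k(x)\leq\frac{m+n_k}{\frac12 mQ_0(x)+n_k}$ is equivalent to
\[
mQ_0(x)+\sum_{i=1}^{k}\tau_i Q_i(x)\ \geq\ \tfrac12 mQ_0(x)+n_k,
\]
and after cancelling $\frac12 mQ_0(x)$ it suffices to prove the reduced inequality $\tfrac12 mQ_0(x)+\sum_{i=1}^{k}\tau_i Q_i(x)\geq n_k$.

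First I would invoke Proposition~\ref{prop:Q-simplified} to replace each query policy by its simplified form $Q_i(x)=\One\{2n_i-mQ_0(x)>0\}$. The key structural observation is that, because the epoch counts satisfy $n_1<n_2<\cdots$, the indicator $\One\{2n_i>mQ_0(x)\}$ is monotone nondecreasing in $i$: once it switches on it stays on. Concretely, let $j$ be the largest index in $\{0,1,\dots,k\}$ with $2n_j\leq mQ_0(x)$; such a $j$ always exists since $n_0=0$ gives $2n_0=0\leq mQ_0(x)$. Then $Q_i(x)=0$ for all $i\leq j$ (because $n_i\leq n_j$ forces $2n_i\leq 2n_j\leq mQ_0(x)$) and $Q_i(x)=1$ for all $j<i\leq k$ (by maximality of $j$), so the sum telescopes:
\[
\sum_{i=1}^{k}\tau_i Q_i(x)=\sum_{i=j+1}^{k}\tau_i=n_k-n_j.
\]

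Finally I would combine this with the defining property of $j$. By the choice of $j$ we have $2n_j\leq mQ_0(x)$, hence $n_j\leq\frac12 mQ_0(x)$, and therefore
\[
\tfrac12 mQ_0(x)+\sum_{i=1}^{k}\tau_i Q_i(x)=\tfrac12 mQ_0(x)+n_k-n_j\geq n_j+(n_k-n_j)=n_k,
\]
which is exactly the reduced inequality. Dividing $m+n_k$ by this lower bound on the denominator yields the claim.

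The argument is essentially bookkeeping, so there is no deep obstacle; the one point requiring care is the boundary handling of the threshold index $j$ — in particular allowing $j=0$ (the case $Q_1(x)=1$, where the whole sum equals $n_k$) and $j=k$ (the case $Q_k(x)=0$, where the sum is empty). Both are absorbed cleanly by defining $j$ over $\{0,\dots,k\}$ and using $n_0=0$, which is why I would set up the telescoping with that convention rather than splitting into separate cases.
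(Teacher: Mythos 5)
Your proof is correct, and it takes a genuinely different route from the paper's. The paper proves the bound by induction on $k$: in the inductive step it splits on the value of $Q_{k+1}(x)$, using Proposition~\ref{prop:Q-simplified} to get $mQ_0(x)\geq \frac12 mQ_0(x)+n_{k+1}$ when $Q_{k+1}(x)=0$, and the induction hypothesis plus the added $\tau_{k+1}$ term when $Q_{k+1}(x)=1$. You instead bypass induction entirely: you apply Proposition~\ref{prop:Q-simplified} to all indices at once, exploit the monotonicity of the indicators $\One\{2n_i>mQ_0(x)\}$ in $i$ to locate the single switching index $j$, and evaluate the sum in closed form as $n_k-n_j$, after which the bound follows from $n_j\leq\frac12 mQ_0(x)$. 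Both arguments ultimately rest on the same two facts (the simplified form of $Q_i$ and the monotone structure of the epochs), but your version is slightly more informative since it computes $\sum_i\tau_iQ_i(x)$ exactly rather than only bounding the running denominator, and it avoids re-deriving the ``once on, stays on'' property case by case. The one hypothesis you should make explicit is that the reduction to a denominator inequality needs the denominators positive, which holds here since $Q_0(x)=0$ forces $Q_i(x)=1$ for all $i\geq1$ (and the paper in any case assumes $Q_0>0$ via $q_0$); this is a cosmetic point, not a gap.
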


\begin{proof}
The $k=1$ case can be easily verified. Suppose it holds for $w_{k}$,
and we next show it holds for $\text{\ensuremath{w_{k+1}}}$.

Now, if $Q_{k+1}(x)=0$, then by Proposition~\ref{prop:Q-simplified},
$2n_{k+1}-mQ_{0}(x)\leq0$, so $mQ_{0}(x)+\sum_{i=1}^{k+1}\tau_{i}Q_{i}(x)\geq mQ_{0}(x)\geq\frac{1}{2}mQ_{0}(x)+n_{k+1}$.

If $Q_{k+1}(x)=1$, then by the induction hypothesis, $mQ_{0}(x)+\sum_{i=1}^{k+1}\tau_{i}Q_{i}(x)\geq\frac{1}{2}mQ_{0}(x)+n_{k}+\tau_{k+1}=\frac{1}{2}mQ_{0}(x)+n_{k+1}$.

Thus, in both cases, $mQ_{0}(x)+\sum_{i=1}^{k+1}\tau_{i}Q_{i}(x)\geq\frac{1}{2}mQ_{0}(x)+n_{k+1}$,
so $w_{k+1}(x)\leq\frac{m+n_{k+1}}{\frac{1}{2}mQ_{0}(x)+n_{k+1}}$.
\end{proof}

\subsection{Lower Bound Techniques}

We present a lower bound for binomial distribution tails, which will
be used to prove generalization error lower bounds.
\begin{lem}
\label{lem:bin-lb} Let $0<t<p<1/2$, $B\sim\Bin(n,p)$ be a binomial
random variable, and $\delta=\sqrt{4n\frac{(t-p)^{2}}{p}}$. Then,
$\Pr(B<nt)\geq\frac{1}{\sqrt{2\pi}}\frac{\delta}{\delta^{2}+1}\exp(-\frac{1}{2}\delta^{2})$.
\end{lem}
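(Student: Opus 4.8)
The plan is to recognize the right-hand side as the standard lower bound on a Gaussian upper-tail probability and to reduce the claim to a finite-sample comparison between the binomial lower tail and a Gaussian tail. Writing $Z\sim\mathcal{N}(0,1)$, the classical Mills-ratio inequality $\Pr(Z\ge x)\ge\frac{1}{\sqrt{2\pi}}\frac{x}{x^2+1}e^{-x^2/2}$ holds for every $x>0$, and I would prove it self-contained by setting $g(x):=\int_x^\infty e^{-u^2/2}\,\d u-\frac{x}{x^2+1}e^{-x^2/2}$, computing $g'(x)=-\frac{2}{(x^2+1)^2}e^{-x^2/2}<0$, and observing $g(+\infty)=0$, so $g\ge 0$ on $(0,\infty)$. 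Given this, it suffices to establish the single inequality $\Pr(B<nt)\ge\Pr(Z\ge\delta)$, after which invoking the Mills bound at $x=\delta$ finishes the proof.

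The heart of the argument is therefore the comparison $\Pr(B<nt)\ge\Pr(Z\ge\delta)$, and here the specific form $\delta=2\sqrt n\,(p-t)/\sqrt p$ is essential. Note that $\delta$ is built from the inflated per-trial variance $p/4$ rather than the true $p(1-p)$; since $p<1/2$ gives $p(1-p)\ge p/2>p/4$, we have $\delta\ge\delta':=\sqrt n\,(p-t)/\sqrt{p(1-p)}$, the genuine standardized deviation of $nt$ below the mean $np$. This inflation is not cosmetic: because $B$ is right-skewed for $p<1/2$, its \emph{lower} tail is lighter than that of the matching Gaussian, so a comparison at the true standardization $\delta'$ would run in the wrong direction, and the extra slack $\delta\ge\delta'$ is precisely what absorbs the skewness correction. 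Concretely, I would lower-bound $\Pr(B<nt)$ by the sum of the dominant mass terms $\binom{n}{j}p^j(1-p)^{n-j}$ over $j$ just below $nt$ (a single term does not suffice, since the tail sits a factor $\sim 1/\delta$ below its largest term, which is exactly the $\delta/(\delta^2+1)$ shape being targeted), estimate each coefficient by Stirling, and control the exponential rate via the divergence bound $D(t\|p):=t\ln\tfrac tp+(1-t)\ln\tfrac{1-t}{1-p}\le\frac{2(p-t)^2}{p}=\frac{\delta^2}{2n}$, valid for $0<t<p<1/2$. An alternative and cleaner route is to invoke a Slud-type Gaussian minorization of the binomial lower tail directly at the inflated threshold $\delta$.

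The main obstacle is this finite-$n$ comparison step, specifically matching the precise prefactor $\frac{\delta}{\delta^2+1}$ rather than merely the exponential rate $e^{-\delta^2/2}$. Since the lower tail exceeds its maximal term only by aggregating a geometrically decaying neighborhood with ratio $\rho=\frac{t(1-p)}{(1-t)p}<1$, the polynomial factor must be recovered by carefully summing these $\Theta(1/(1-\rho))$ terms (equivalently, by a quantitative de Moivre--Laplace estimate of the Gaussian integral) while keeping constants consistent with the Stirling remainder and with the nonnegative gap $\frac{\delta^2}{2}-nD(t\|p)\ge 0$. Verifying $D(t\|p)\le 2(p-t)^2/p$ uniformly on $0<t<p<1/2$ is a routine calculus check that secures the exponent; the honest bookkeeping of the polynomial prefactor is where the real care lies, and it is what dictates the $2\sqrt{1-p}$ inflation baked into $\delta$.
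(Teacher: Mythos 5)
Your outer reduction is exactly the paper's: prove the Gaussian lower bound $\Pr(Z\ge x)\ge\frac{1}{\sqrt{2\pi}}\frac{x}{x^2+1}e^{-x^2/2}$ (your derivative computation $g'(x)=-\frac{2}{(x^2+1)^2}e^{-x^2/2}$ is correct), and reduce the lemma to $\Pr(B<nt)\ge\Phi(-\delta)$. The divergence is in how that comparison is established. The route you actually elaborate --- summing the binomial mass over $j$ just below $nt$, Stirling on each coefficient, and a geometric-series aggregation to recover the $\frac{\delta}{\delta^2+1}$ prefactor --- is not carried to completion, and the part you defer (``the honest bookkeeping of the polynomial prefactor'') is the entire difficulty of the lemma: the claim is a clean inequality with the exact constant $\frac{1}{\sqrt{2\pi}}$, valid uniformly in $n,t,p$, including degenerate regimes such as $nt<1$ (where the ``sum of dominant terms'' is the single atom $\Pr(B=0)=(1-p)^n$) and $\delta=O(1)$ (where Stirling-with-remainder does not obviously beat the target constant). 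As written, this is a plan with the hard step open, not a proof.

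However, your one-sentence ``alternative and cleaner route'' is precisely the paper's argument, and you have already supplied the other two ingredients needed to run it. The paper cites the Zubkov--Serov refinement of Slud's inequality, $\Pr(B<k)\ge\Phi\bigl(-\sqrt{2n\,\mathrm{KL}(k/n,p)}\bigr)$ for $0<k<np$, and then uses $\log x\le x-1$ to get $\mathrm{KL}(t\|p)\le\frac{(t-p)^2}{p(1-p)}\le\frac{2(t-p)^2}{p}=\frac{\delta^2}{2n}$ (the last inequality using $p<1/2$), so that monotonicity of $\Phi$ gives $\Pr(B<nt)\ge\Phi(-\delta)$ and your Mills bound finishes. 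Note that the KL route also explains your skewness observation quantitatively: the chain only certifies the comparison at $\sqrt{2n\,\mathrm{KL}(t,p)}\le\sqrt{2}\,\delta'$, not at the true standardization $\delta'$, which is exactly why the inflated $\delta=2\sqrt{1-p}\,\delta'$ appears in the statement. You should promote this alternative to the main argument (citing the Slud/Zubkov--Serov bound or proving it), at which point your proposal becomes the paper's proof; absent that, the elaborated combinatorial route leaves a genuine gap at the prefactor-matching step.
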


This Lemma is a consequence of following lemmas.

\begin{lem}
Suppose $0<p,q<1$, $\text{KL}(p,q)=p\log\frac{p}{q}+(1-p)\log\frac{1-p}{1-q}$.
Then $\text{KL}(p,q)\leq\frac{(p-q)^{2}}{q(1-q)}$.
\end{lem}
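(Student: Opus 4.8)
The plan is to bound the Kullback--Leibler divergence termwise using the elementary inequality $\log t \leq t-1$, valid for all $t>0$, and then collect the resulting expression into the desired closed form. The idea is to replace each logarithm by its linear surrogate, after which the proof reduces to a short algebraic simplification.

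First I would split $\text{KL}(p,q) = p\log\frac{p}{q} + (1-p)\log\frac{1-p}{1-q}$ and treat the two summands separately. For the first, since the coefficient $p$ is positive, applying $\log\frac{p}{q}\leq\frac{p}{q}-1$ gives $p\log\frac{p}{q}\leq p\cdot\frac{p-q}{q}$. For the second, since $1-p$ is positive, applying $\log\frac{1-p}{1-q}\leq\frac{1-p}{1-q}-1$ gives $(1-p)\log\frac{1-p}{1-q}\leq (1-p)\cdot\frac{q-p}{1-q}$. Summing the two bounds yields $\text{KL}(p,q)\leq p\cdot\frac{p-q}{q}+(1-p)\cdot\frac{q-p}{1-q}$.

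The final step is to factor $(p-q)$ out of the right-hand side, obtaining $(p-q)\left(\frac{p}{q}-\frac{1-p}{1-q}\right)$, and then to simplify the bracketed difference over the common denominator $q(1-q)$. A one-line computation shows $\frac{p}{q}-\frac{1-p}{1-q}=\frac{p(1-q)-q(1-p)}{q(1-q)}=\frac{p-q}{q(1-q)}$, so the product collapses to exactly $\frac{(p-q)^{2}}{q(1-q)}$, which is the claimed bound.

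Since the argument is a direct termwise application of a single standard inequality followed by routine cancellation, I do not expect any serious obstacle. The only point requiring care is that $\log t\leq t-1$ is applied in the correct direction to produce an \emph{upper} bound on $\text{KL}$; this is legitimate precisely because both coefficients $p$ and $1-p$ are positive on $(0,1)$, so multiplying the pointwise bounds preserves the inequality.
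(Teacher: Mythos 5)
Your proof is correct and follows essentially the same route as the paper: both apply $\log t\leq t-1$ to each summand and simplify the resulting algebraic expression to $\frac{(p-q)^2}{q(1-q)}$. The only difference is that you spell out the factoring and cancellation explicitly, which the paper compresses into a single line.
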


\begin{proof}
Since $\log x\leq x-1$, $p\log\frac{p}{q}+(1-p)\log\frac{1-p}{1-q}\leq p(\frac{p}{q}-1)+(1-p)(\frac{1-p}{1-q}-1)=\frac{(p-q)^{2}}{q(1-q)}$.
\end{proof}
\begin{lem}
(\cite{BS79}) Suppose $X\sim N(0,1)$, and define $\Phi(t)=\Pr(X\leq t)$.
If $t>0$, then $\Phi(-t)\geq\frac{1}{\sqrt{2\pi}}\frac{t}{t^{2}+1}\exp(-\frac{1}{2}t^{2})$.
\end{lem}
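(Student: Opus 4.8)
The plan is to prove the lower bound by comparing the Gaussian tail integral against the claimed closed-form expression through a monotonicity argument on their difference. First I would rewrite $\Phi(-t)=\Pr(X\geq t)=\frac{1}{\sqrt{2\pi}}\int_{t}^{\infty}e^{-x^{2}/2}\,dx$, so that the target inequality reduces to showing $\int_{t}^{\infty}e^{-x^{2}/2}\,dx\geq\frac{t}{t^{2}+1}e^{-t^{2}/2}$ for every $t>0$, after which multiplying through by $\frac{1}{\sqrt{2\pi}}$ recovers the statement.

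To establish this reduced inequality, I would define the auxiliary function $g(t):=\int_{t}^{\infty}e^{-x^{2}/2}\,dx-\frac{t}{t^{2}+1}e^{-t^{2}/2}$ on $(0,\infty)$. Both terms vanish as $t\to\infty$ (the tail integral tends to $0$, and the second term is bounded by $\frac{1}{t}e^{-t^{2}/2}\to0$), so $\lim_{t\to\infty}g(t)=0$. It therefore suffices to show that $g$ is strictly decreasing on $(0,\infty)$, since then $g(t)>\lim_{s\to\infty}g(s)=0$ for every finite $t>0$, which is precisely the inequality we want.

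The key computation is the derivative $g'(t)$. By the fundamental theorem of calculus the first term contributes $-e^{-t^{2}/2}$. For the second term, differentiating $\frac{t}{t^{2}+1}$ gives $\frac{1-t^{2}}{(t^{2}+1)^{2}}$ while the exponential contributes a factor $-t$, so the product rule yields $\frac{d}{dt}\left[\frac{t}{t^{2}+1}e^{-t^{2}/2}\right]=\frac{1-2t^{2}-t^{4}}{(t^{2}+1)^{2}}e^{-t^{2}/2}$. Placing $g'(t)$ over the common denominator $(t^{2}+1)^{2}$, the numerator becomes $(t^{2}+1)^{2}+(1-2t^{2}-t^{4})=2$, so that $g'(t)=-\frac{2}{(t^{2}+1)^{2}}e^{-t^{2}/2}<0$ for all $t>0$.

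Hence $g$ is strictly decreasing on $(0,\infty)$, and combined with $g(\infty)=0$ this gives $g(t)>0$, completing the argument. The only step requiring real care is the derivative calculation: the cancellation of the quartic and quadratic terms in the numerator down to the constant $2$ is slightly delicate, and I expect this algebraic simplification—rather than any conceptual difficulty—to be the main place where a sign or term could be mishandled.
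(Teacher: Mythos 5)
Your proof is correct, and there is nothing in the paper to compare it against: the lemma is quoted from \cite{BS79} as a known Gaussian tail bound with no proof supplied, so your self-contained derivation fills a genuine gap rather than diverging from an existing argument. Every step checks out. By symmetry $\Phi(-t)=\Pr(X\geq t)=\frac{1}{\sqrt{2\pi}}\int_{t}^{\infty}e^{-x^{2}/2}\,dx$; the derivative computation is right, since
\[
\frac{d}{dt}\Bigl[\frac{t}{t^{2}+1}e^{-t^{2}/2}\Bigr]=\Bigl(\frac{1-t^{2}}{(t^{2}+1)^{2}}-\frac{t^{2}}{t^{2}+1}\Bigr)e^{-t^{2}/2}=\frac{1-2t^{2}-t^{4}}{(t^{2}+1)^{2}}e^{-t^{2}/2},
\]
and the cancellation $(t^{2}+1)^{2}+(1-2t^{2}-t^{4})=2$ you flagged as the delicate step is indeed exact, giving $g'(t)=-\frac{2}{(t^{2}+1)^{2}}e^{-t^{2}/2}<0$; together with $g(t)\to0$ as $t\to\infty$ this yields $g(t)>0$ for all finite $t>0$, which is even slightly stronger than the stated weak inequality. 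Your monotonicity framing is the standard route to Mills-ratio lower bounds of this type; note that your derivative identity can be repackaged as the one-line comparison
\[
\frac{t}{t^{2}+1}e^{-t^{2}/2}=\int_{t}^{\infty}\Bigl(1-\frac{2}{(x^{2}+1)^{2}}\Bigr)e^{-x^{2}/2}\,dx\leq\int_{t}^{\infty}e^{-x^{2}/2}\,dx,
\]
so the auxiliary-function argument and the comparison-of-integrands argument are the same computation viewed two ways.
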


\begin{lem}
(\cite{ZS13}) Let $B\sim\Bin(n,p)$ be a binomial random variable
and $0<k<np$. Then, $\Pr(B<k)\geq\Phi(-\sqrt{2n\text{KL}(\frac{k}{n},p)})$.
\end{lem}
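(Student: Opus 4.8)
The plan is to chain the three preparatory lemmas, all instantiated at $k=nt$, so as to first replace the binomial tail by a standard Gaussian tail and then invoke the explicit bound $\Phi(-\delta)\geq\frac{1}{\sqrt{2\pi}}\frac{\delta}{\delta^{2}+1}\exp(-\frac12\delta^{2})$. Concretely, I would produce the inequality chain $\Pr(B<nt)\geq\Phi(-\sqrt{2n\,\text{KL}(t,p)})\geq\Phi(-\delta)\geq\frac{1}{\sqrt{2\pi}}\frac{\delta}{\delta^{2}+1}\exp(-\frac12\delta^{2})$, where the three inequalities come respectively from the \cite{ZS13} lemma, the KL upper bound together with the hypothesis $p<1/2$, and the \cite{BS79} lemma.

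First I would apply the binomial tail lower bound (the \cite{ZS13} lemma) with $k=nt$. Its hypothesis $0<k<np$ is satisfied because $0<t<p$ gives $0<nt<np$, so the lemma yields $\Pr(B<nt)\geq\Phi(-\sqrt{2n\,\text{KL}(t,p)})$. Next I would control the exponent: the KL upper bound gives $\text{KL}(t,p)\leq\frac{(t-p)^{2}}{p(1-p)}$, and invoking $p<\frac12$ (hence $1-p>\frac12$, i.e.\ $\frac{1}{1-p}<2$) yields $2n\,\text{KL}(t,p)\leq\frac{4n(t-p)^{2}}{p}=\delta^{2}$. Since $\Phi$ is increasing, $x\mapsto\Phi(-x)$ is decreasing, so from $\sqrt{2n\,\text{KL}(t,p)}\leq\delta$ I get $\Phi(-\sqrt{2n\,\text{KL}(t,p)})\geq\Phi(-\delta)$. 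Finally I would apply the Gaussian tail bound (the \cite{BS79} lemma) at the point $\delta$, which is strictly positive because $t\neq p$, obtaining $\Phi(-\delta)\geq\frac{1}{\sqrt{2\pi}}\frac{\delta}{\delta^{2}+1}\exp(-\frac12\delta^{2})$. Concatenating the three inequalities delivers the claim.

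The only step that is not purely mechanical is the middle one, where the factor $4$ in the definition of $\delta=\sqrt{4n(t-p)^{2}/p}$ must be matched exactly. This requires noticing that the KL bound naturally produces the denominator $p(1-p)$ rather than $p$, and that relaxing $p(1-p)$ to $p/2$ costs a factor of $2$; this is precisely where the hypothesis $p<\frac12$ is consumed, combined with the factor $2$ already present in $2n\,\text{KL}$. Everything else is a direct substitution requiring no case analysis, since the standing assumption $0<t<p<\frac12$ keeps all arguments inside the valid ranges of the three preparatory lemmas.
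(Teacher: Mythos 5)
Your proposal does not prove the statement at hand; it assumes it. The lemma to be proved is the binomial--Gaussian comparison $\Pr(B<k)\geq\Phi(-\sqrt{2n\,\text{KL}(\frac{k}{n},p)})$ itself, yet the very first link of your chain is ``apply the \cite{ZS13} lemma with $k=nt$'' --- that is, you invoke the target statement as a black box and then derive something downstream of it. What your chain actually establishes is the paper's Lemma~\ref{lem:bin-lb}, the explicit bound $\Pr(B<nt)\geq\frac{1}{\sqrt{2\pi}}\frac{\delta}{\delta^{2}+1}\exp(-\frac{1}{2}\delta^{2})$. For \emph{that} corollary your argument is exactly the paper's intended derivation (the paper states it ``is a consequence of'' the three listed lemmas and gives no further detail), and your bookkeeping there is correct: $\text{KL}(t,p)\leq\frac{(t-p)^{2}}{p(1-p)}\leq\frac{2(t-p)^{2}}{p}$ for $p<\frac12$, hence $2n\,\text{KL}(t,p)\leq\delta^{2}$, monotonicity of $x\mapsto\Phi(-x)$ applies, and $\delta>0$ since $t<p$. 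But none of this touches the statement you were asked to prove.

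For the target statement the paper offers no proof at all: it is imported verbatim from \cite{ZS13}, where it is a genuinely nontrivial sharpening of the normal approximation to binomial lower tails with the exact KL exponent, proved by methods (analysis of the incomplete-beta representation of the binomial distribution function and monotonicity arguments) that do not appear anywhere in this appendix. Note also that the two auxiliary lemmas you chain cannot help establish it: the KL upper bound shrinks the argument of $\Phi(-\cdot)$ and therefore can only \emph{relax} a tail bound already in hand, and the \cite{BS79} lemma lower-bounds the Gaussian distribution function, which is useful only \emph{after} the binomial-to-Gaussian comparison is available. So the proposal has a genuine gap: the entire content of the statement --- lower-bounding $\Pr(B<k)$ by $\Phi$ evaluated at $-\sqrt{2n\,\text{KL}(\frac{k}{n},p)}$ --- is nowhere argued, and the reasoning offered is circular with respect to it.
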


\section{Deviation Bounds}

In this section, we demonstrate deviation bounds for our error estimators
on $S_{k}$.

We use following Bernstein-style concentration bound:
\begin{fact}
\label{fact:bernstein}Suppose $X_{1},\dots,X_{n}$ are independent
random variables such that $|X_{i}|\leq M$. Then with probability
at least $1-\delta$, 
\[
\left|\frac{1}{n}\sum_{i=1}^{n}X_{i}-\frac{1}{n}\sum_{i=1}^{n}\E X_{i}\right|\leq\frac{2M}{3n}\log\frac{2}{\delta}+\sqrt{\frac{2}{n^{2}}\sum_{i=1}^{n}\E X_{i}^{2}\log\frac{2}{\delta}}.
\]
\end{fact}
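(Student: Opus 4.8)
The plan is to derive this as the standard two-sided Bernstein inequality, expressed with the raw second moment in place of the variance and then inverted into a high-probability deviation bound. First I would recall the one-sided exponential form: for independent $X_1,\dots,X_n$ with $|X_i|\le M$, writing $v=\sum_{i=1}^n\E X_i^2$, the Chernoff/moment-generating-function argument gives
\[
\Pr\Big(\sum_{i=1}^n X_i-\sum_{i=1}^n\E X_i\ge t\Big)\le\exp\Big(-\frac{t^2/2}{v+Mt/3}\Big).
\]
The only deviation from the textbook statement is that the exponent usually carries $\sum_i\mathrm{Var}(X_i)$; since $\mathrm{Var}(X_i)=\E X_i^2-(\E X_i)^2\le\E X_i^2$, replacing the variance by the second moment $v$ only enlarges the denominator and hence keeps the bound valid. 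The boundedness $|X_i|\le M$ is what controls the higher moments in the underlying MGF estimate.

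Second, I would invert this tail. Set $L=\log\frac2\delta$ and take the candidate deviation $t^*=\frac{2ML}{3}+\sqrt{2vL}$. Rather than solving the quadratic $\tfrac12 t^2-\tfrac{ML}{3}t-vL\ge0$ explicitly, I would substitute $t^*$ and verify term by term that $(t^*)^2/2\ge L\,(v+Mt^*/3)$: the $vL$ contribution matches because $(\sqrt{2vL})^2/2=vL$, the pure-$M$ contribution matches because $\tfrac12(\tfrac{2ML}{3})^2=\tfrac{2M^2L^2}{9}=\tfrac{ML}{3}\cdot\tfrac{2ML}{3}$, and the remaining cross term $\tfrac{2ML}{3}\sqrt{2vL}$ on the left dominates $\tfrac{ML}{3}\sqrt{2vL}$ on the right. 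Consequently the exponent is at least $L$, so the right-hand side is at most $e^{-L}=\delta/2$, i.e.\ with probability at least $1-\delta/2$,
\[
\sum_{i=1}^n X_i-\sum_{i=1}^n\E X_i\le\frac{2M}{3}\log\frac2\delta+\sqrt{2\log\frac2\delta\sum_{i=1}^n\E X_i^2}.
\]

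Third, I would apply the identical argument to the variables $-X_i$ (which also satisfy $|-X_i|\le M$ and share the same second moments) to control the lower tail, and take a union bound so that both one-sided events hold simultaneously with probability at least $1-\delta$. Dividing through by $n$ turns $\sum_i$ into $\tfrac1n\sum_i$ and $\sqrt{2L\,v}$ into $\sqrt{\tfrac{2}{n^2}L\sum_i\E X_i^2}$, yielding exactly the stated two-sided inequality.

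I expect the only genuinely delicate point to be the first step — pinning down the exponential Bernstein tail in the $|X_i|\le M$ normalization with the second moment rather than the variance, since the constant $1/3$ and the factor $1/2$ must emerge correctly from the MGF bound $\E e^{\lambda(X_i-\E X_i)}\le\exp(\tfrac{\lambda^2\E X_i^2/2}{1-M\lambda/3})$ and its optimization over $\lambda$; this is standard but is where all the constants are fixed. By contrast the inversion and the union bound are routine, and the substitution trick in the second step sidesteps the quadratic formula entirely.
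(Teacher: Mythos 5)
Your proof is correct, and it is exactly the standard derivation behind this statement: the paper presents it as a known Fact without proof, and the argument it implicitly relies on is the Chernoff--Bernstein bound with the uncentered second moment, obtained from the MGF estimate $\E e^{\lambda(X_i-\E X_i)}\le\exp\bigl(\tfrac{\lambda^2\E X_i^2/2}{1-M\lambda/3}\bigr)$, followed by the inversion and a two-sided union bound, just as you describe. One small caution: your opening remark that one can simply ``replace the variance by the second moment'' in the textbook statement is not by itself a valid justification (the textbook version assumes the \emph{centered} variables are bounded by $M$, which under $|X_i|\le M$ would force $b=2M$ and a worse constant); the correct route is the direct MGF computation with the uncentered second moment that you carry out in your final paragraph, so the proof as a whole stands.
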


\begin{thm}
\label{thm:gen}For any $k=0,\dots,K$, any $\delta>0$, if $\frac{2M\log\frac{|\calH|}{\delta}}{m+n_{k}}\geq\Pr(\frac{m+n_{k}}{mQ_{0}(X)+n_{k}}\geq\frac{M}{2})$,
then with probability at least $1-\delta$, for all $h_{1},h_{2}\in\mathcal{H},$
the following statements hold simultaneously: 
\begin{align}
\left|\left(l(h_{1};S_{k},M)-l(h_{2};S_{k},M)\right)-\left(l(h_{1})-l(h_{2})\right)\right| & \leq\frac{10\log\frac{2|\mathcal{H}|}{\delta}}{3(m+n_{k})}M+\sqrt{\frac{4\log\frac{2|\mathcal{H}|}{\delta}}{m+n_{k}}\var(h_{1},h_{2};k,M)};\label{eq:thm-gen-diff-var}\\
\left|l(h_{1};S_{k},M)-l(h_{1})\right| & \leq\frac{10\log\frac{2|\mathcal{H}|}{\delta}}{3(m+n_{k})}M+\sqrt{\frac{4\log\frac{2|\mathcal{H}|}{\delta}}{m+n_{k}}\var(h_{1};k,M)}.\label{eq:thm-gen-l-var}
\end{align}
\end{thm}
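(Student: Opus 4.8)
The plan is to apply the Bernstein concentration bound (Fact~\ref{fact:bernstein}) to the per-sample summands of the clipped MIS estimators, and then to separately account for the bias introduced by clipping. I first treat the difference bound \eqref{eq:thm-gen-diff-var}. Write $(m+n_{k})\bigl(l(h_{1};S_{k},M)-l(h_{2};S_{k},M)\bigr)=\sum_{i=1}^{m+n_{k}}g_{i}$, where $g_{i}=w_{k}(X_{i})Z_{i}\bigl(\One\{h_{1}(X_{i})\neq Y_{i}\}-\One\{h_{2}(X_{i})\neq Y_{i}\}\bigr)\One\{w_{k}(X_{i})\leq M\}$. Since the clipping indicator forces $w_{k}(X_{i})\leq M$ whenever $g_{i}\neq0$, each $|g_{i}|\leq M$, so Fact~\ref{fact:bernstein} applies with bound $M$. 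The single-function bound \eqref{eq:thm-gen-l-var} is the identical argument with the summand $w_{k}(X_{i})Z_{i}\One\{h(X_{i})\neq Y_{i}\}\One\{w_{k}(X_{i})\leq M\}$.

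Two expectations drive the bound, and both come from Proposition~\ref{prop:mis-unbiased-general} with the clipping indicator folded into $f$ (it depends only on $X$). Taking $f(X,Y)=\One\{h(X)\neq Y\}\One\{w_{k}(X)\leq M\}$, the first identity gives $\E[l(h;S_{k},M)]=\Pr(h(X)\neq Y,\,w_{k}(X)\leq M)$, so that $\E\bigl[\frac{1}{m+n_{k}}\sum g_{i}\bigr]=\bigl(l(h_{1})-l(h_{2})\bigr)-\E\bigl[(\One\{h_{1}(X)\neq Y\}-\One\{h_{2}(X)\neq Y\})\One\{w_{k}(X)> M\}\bigr]$; the latter bias term is supported on $\{h_{1}(X)\neq h_{2}(X)\}$ and is therefore bounded in absolute value by $\Pr(w_{k}(X)> M)$. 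For the variance, since $(\One\{h_{1}(X)\neq Y\}-\One\{h_{2}(X)\neq Y\})^{2}=\One\{h_{1}(X)\neq h_{2}(X)\}$ and $Z_{i}^{2}=Z_{i}$, the second identity of Proposition~\ref{prop:mis-unbiased-general} yields $\frac{1}{m+n_{k}}\sum\E g_{i}^{2}=\var(h_{1},h_{2};k,M)$, which is exactly the quantity under the square root. For \eqref{eq:thm-gen-l-var} the corresponding second moment is $\var(h;k,M)$.

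I then control the clipping bias. By Proposition~\ref{prop:Q-bound}, $w_{k}(x)\leq\frac{m+n_{k}}{\frac{1}{2}mQ_{0}(x)+n_{k}}$, so $w_{k}(x)> M$ forces $mQ_{0}(x)+n_{k}<\frac{2(m+n_{k})}{M}$, i.e.\ $\frac{m+n_{k}}{mQ_{0}(x)+n_{k}}\geq M/2$; hence $\Pr(w_{k}(X)> M)\leq\Pr\bigl(\frac{m+n_{k}}{mQ_{0}(X)+n_{k}}\geq M/2\bigr)\leq\frac{2M}{m+n_{k}}\log\frac{|\calH|}{\delta}$ by the theorem's hypothesis on $M$. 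Thus the bias contributes a term of the same order as the linear Bernstein term. Finally, applying Fact~\ref{fact:bernstein} at confidence $\delta/|\calH|^{2}$ over all ordered pairs $(h_{1},h_{2})$ replaces $\log\frac{2}{\delta}$ by $\log\frac{2|\calH|^{2}}{\delta}\leq2\log\frac{2|\calH|}{\delta}$; this factor of two is precisely what converts the Bernstein constants $\sqrt{2}$ and $\frac{2}{3}$ into the $\sqrt{4}$ and $\frac{4}{3}$ in the statement, and adding the bias (which is at most $\frac{6M}{3(m+n_{k})}\log\frac{2|\calH|}{\delta}$) then produces the $\frac{10}{3}$. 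A union bound over the $|\calH|$ hypotheses handles \eqref{eq:thm-gen-l-var} analogously, and I split the confidence budget between the two families so that all statements hold simultaneously.

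The main obstacle is the clipping bias: the clipped estimator is genuinely biased, so the heart of the argument is showing its bias is dominated by the linear $\frac{M}{m+n_{k}}\log\frac{|\calH|}{\delta}$ term, which rests on combining the weight bound of Proposition~\ref{prop:Q-bound} with the threshold hypothesis on $M$. The only other delicate point is the constant bookkeeping: unioning over the $|\calH|^{2}$ pairs (rather than $|\calH|$ hypotheses) is what lets the variance appear as the disagreement second moment $\var(h_{1},h_{2};k,M)$ while still matching the stated $\sqrt{4}$ and $\frac{10}{3}$ constants.
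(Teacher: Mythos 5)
Your proposal is correct and follows essentially the same route as the paper's proof: decompose the error into a clipping bias term (controlled via Proposition~\ref{prop:Q-bound} and the hypothesis on $M$) plus a concentration term (Bernstein's inequality from Fact~\ref{fact:bernstein} applied to the clipped, weighted summands, with the second moment identified as $\var(h_{1},h_{2};k,M)$ via Proposition~\ref{prop:mis-unbiased-general}), followed by a union bound over pairs. The constant bookkeeping you describe matches the paper's accounting of the $\frac{10}{3}$ and $\sqrt{4}$ factors.
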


\begin{proof}
We show proof for $k>0$. The $k=0$ case can be proved similarly.

First, define the clipped expected loss $l(h;k,M)=\E[\One\{h(X)\neq Y\}\One\{w_{k}(X)\leq M\}]$.
We have
\begin{align}
 & \left|\left(l(h_{1})-l(h_{2})\right)-\left(l(h_{1};k,M)-l(h_{2};k,M)\right)\right|\nonumber \\
= & \left|\E\left[(\One\{h_{1}(X)\neq Y\}-\One\{h_{2}(X)\neq Y\})\One\{w_{k}(X)>M\}\right]\right|\nonumber \\
\leq & \E\left[\One[w_{k}(X)>M]\right]\nonumber \\
\leq & \E[\One\{\frac{m+n_{k}}{mQ_{0}(X)+n_{k}}>\frac{M}{2}\}]\nonumber \\
\leq & \frac{2M}{m+n_{k}}\log\frac{|\calH|}{\delta}\label{eq:thm-gen-tmp-clip-gap}
\end{align}
 where the second inequality follows from Proposition~\ref{prop:Q-bound},
and the last inequality follows from the assumption on $M$.

Next, we bound $\left(l(h_{1};S_{k},M)-l(h_{2};S_{k},M)\right)-\left(l(h_{1};k,M)-l(h_{2};k,M)\right)$.

For any fixed $h_{1},h_{2}\in\mathcal{H}$, define $N:=|S_{k}|$,
$U_{t}:=w_{k}(X_{t})Z_{t}\One\{w_{k}(X_{t})\leq M\}(\One\{h_{1}(X_{t})\neq Y_{t}\}-\One\{h_{2}(X_{t})\neq Y_{t}\})$.

Now, $\{U_{t}\}_{t=1}^{N}$ is an independent sequence. $\frac{1}{N}\sum_{t=1}^{N}U_{t}=l(h_{1};S_{k},M)-l(h_{2};S_{k},M)$,
and $\E\frac{1}{N}\sum_{t=1}^{N}U_{t}=l(h_{1};k,M)-l(h_{2};k,M)$
by Proposition~\ref{prop:mis-unbiased-general}. Moreover, since
$(\One\{h_{1}(X_{t})\neq Y_{t}\}-\One\{h_{2}(X_{t})\neq Y_{t}\})^{2}=\One\{h_{1}(X_{t})\neq h_{2}(X_{t})\}$,
we have $\frac{1}{N}\sum_{t=1}^{N}U_{t}^{2}=\hvar(h_{1},h_{2};S_{k},M)$
and $\E\frac{1}{N}\sum_{t=1}^{N}U_{t}^{2}=\var(h_{1},h_{2};k,M)$
by Proposition~\ref{prop:mis-unbiased-general}. Applying Bernstein's
inequality (Fact~\ref{fact:bernstein}) to $\{U_{t}\}$, we have
with probability at least $1-\frac{\delta}{2}$, 
\[
\left|\frac{1}{N}\sum_{t=1}^{N}U_{t}-\E\frac{1}{N}\sum_{t=1}^{N}U_{t}\right|\leq\frac{2M}{3N}\log\frac{4}{\delta}+\sqrt{\frac{2}{N}\var(h_{1},h_{2};k,M)\log\frac{4}{\delta}},
\]
so $\left|\left(l(h_{1};S_{k},M)-l(h_{2};S_{k},M)\right)-\left(l(h_{1};k,M)-l(h_{2};k,M)\right)\right|\leq\frac{2M}{3(m+n_{k})}\log\frac{4}{\delta}+\sqrt{\frac{2}{m+n_{k}}\var(h_{1},h_{2};k,M)\log\frac{4}{\delta}}$.
By a union bound over $\calH$, with probability at least $1-\frac{\delta}{2}$
for all $h_{1},h_{2}\in\calH$,

\begin{align}
 & \left|\left(l(h_{1};S_{k},M)-l(h_{2};S_{k},M)\right)-\left(l(h_{1};k,M)-l(h_{2};k,M)\right)\right|\nonumber \\
 & \leq\frac{4M}{3(m+n_{k})}\log\frac{2|\calH|}{\delta}+\sqrt{\frac{4}{m+n_{k}}\var(h_{1},h_{2};k,M)\log\frac{2|\calH|}{\delta}}.\label{eq:thm-gen-tmp-clipped-gen}
\end{align}

(\ref{eq:thm-gen-diff-var}) follows by combining (\ref{eq:thm-gen-tmp-clip-gap})
and (\ref{eq:thm-gen-tmp-clipped-gen}).

The proof for (\ref{eq:thm-gen-l-var}) is similar and skipped.
\end{proof}
We use following bound for the second moment which is an immediate
corollary of Lemmas~B.1~and~B.2 in \cite{ND17}:
\begin{fact}
\label{fact:var-deviation}Suppose $X_{1},\dots,X_{n}$ are independent
random variables such that $|X_{i}|\leq M$. Then with probability
at least $1-\delta$,

\[
-\sqrt{\frac{2M^{2}}{n}\log\frac{1}{\delta}}-\frac{M^{2}}{n}\leq\sqrt{\frac{1}{n}\sum_{i=1}^{n}X_{i}^{2}}-\sqrt{\E\frac{1}{n}\sum_{i=1}^{n}X_{i}^{2}}\leq\sqrt{\frac{2M^{2}}{n}\log\frac{1}{\delta}}.
\]
\end{fact}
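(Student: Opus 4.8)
The plan is to reduce the statement to a standard Bernstein bound for the bounded i.i.d.\ variables $X_i^2$ and then transfer the resulting control of $\frac1n\sum_i X_i^2$ onto its square root. Writing $Y_i := X_i^2$, we have $0\le Y_i\le M^2$, hence $Y_i^2\le M^2 Y_i$ and therefore $\frac1n\sum_i\E Y_i^2\le M^2\,\E\!\big[\frac1n\sum_i X_i^2\big]$. Applying a one-sided version of Fact~\ref{fact:bernstein} to $\{Y_i\}$ with this second-moment bound gives, with probability at least $1-\delta$,
\[
\Big|\frac1n\sum_i X_i^2-\E\frac1n\sum_i X_i^2\Big|\le\sqrt{\frac{2M^2}{n}\,\E\Big[\frac1n\sum_i X_i^2\Big]\log\frac1\delta}+\frac{2M^2}{3n}\log\frac1\delta .
\]
Writing $\hat P:=\frac1n\sum_i X_i^2$ and $P:=\E\hat P$, this is exactly the role played by Lemma~B.1 of \cite{ND17}.

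Next I would pass from $\hat P$ to $\sqrt{\hat P}$ by completing the square, which is the content of Lemma~B.2 of \cite{ND17}. For the upper deviation, note that $\sqrt{\hat P}\le\sqrt P+t$ is equivalent to $\hat P\le P+2t\sqrt P+t^2$; taking $t=\sqrt{2M^2\log(1/\delta)/n}$, the Bernstein bound gives $\hat P-P\le\sqrt{2M^2P\log(1/\delta)/n}+\frac{2M^2\log(1/\delta)}{3n}$, and since $\sqrt{2M^2P\log(1/\delta)/n}=t\sqrt P\le 2t\sqrt P$ and $\frac{2M^2\log(1/\delta)}{3n}\le t^2$, the required inequality holds. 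This yields the clean right-hand bound $\sqrt{\hat P}-\sqrt P\le\sqrt{2M^2\log(1/\delta)/n}$.

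The lower deviation is the delicate step, and it is where the extra additive $\frac{M^2}{n}$ enters. When $P\le 2M^2\log(1/\delta)/n$ the bound is immediate, since $\sqrt{\hat P}\ge 0$ forces $\sqrt P-\sqrt{\hat P}\le\sqrt P\le\sqrt{2M^2\log(1/\delta)/n}$. Otherwise I would again complete the square: once $\sqrt P\ge t'$, the inequality $\sqrt{\hat P}\ge\sqrt P-t'$ is equivalent to $P-\hat P\le 2t'\sqrt P-t'^2$, and I would take $t'=\sqrt{2M^2\log(1/\delta)/n}+\frac{M^2}{n}$ so that the slack $2t'\sqrt P-t'^2$ dominates the lower-order Bernstein term $\frac{2M^2\log(1/\delta)}{3n}$ that the leading $n^{-1/2}$ term does not absorb on its own. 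Combining the two directions gives the stated two-sided inequality.

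The main obstacle is precisely this transfer to the square-root scale: the naive estimate $\sqrt{\hat P}-\sqrt P\le\sqrt{\hat P-P}$ would only produce an $n^{-1/4}$ rate, so one must keep the denominator $\sqrt{\hat P}+\sqrt P$ (equivalently, complete the square) to retain the Bernstein $n^{-1/2}$ rate, and one must track the small-$P$ regime carefully---this is exactly what forces the non-standard, $\log$-free correction $\frac{M^2}{n}$ in the lower bound while leaving the upper bound clean. Since the two displayed steps are the substance of Lemmas~B.1 and~B.2 of \cite{ND17}, the claim follows as their immediate corollary.
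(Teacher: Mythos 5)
The paper itself offers no proof of this fact beyond the citation to Lemmas~B.1 and~B.2 of [ND17], so what matters is whether your reconstruction is sound. The upper-deviation half is fine, but the lower-deviation half has a genuine gap. Write $\hat P=\frac{1}{n}\sum_i X_i^2$, $P=\E\hat P$, $t=\sqrt{2M^2\log(1/\delta)/n}$ and $t'=t+M^2/n$. You derive everything from the two-sided Bernstein event $|\hat P-P|\le t\sqrt{P}+\frac{t^2}{3}$, and in the case $\sqrt{P}\ge t'$ you need $(2t'-t)\sqrt{P}\ge\frac{t^2}{3}+t'^2$ to complete the square. At the boundary $\sqrt{P}=t'$ this reduces to $t'(t'-t)\ge\frac{t^2}{3}$, i.e.\ $\frac{M^2}{n}\cdot t'\ge\frac{t^2}{3}$, which forces $M\ge\frac{1}{3}\sqrt{2n\log\frac{1}{\delta}}$ roughly --- not something you may assume. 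This is not a fixable bookkeeping issue: for $\sqrt{P}=1.1t$ with $M^2/n\ll t$, the Bernstein event is consistent with $\hat P=0$, since $P-t\sqrt{P}-\frac{t^2}{3}=(1.21-1.1-0.33)t^2<0$, and then $\sqrt{P}-\sqrt{\hat P}=1.1t>t+\frac{M^2}{n}$. So the stated conclusion simply does not follow from the two-sided Bernstein event; the window $t'\le\sqrt{P}\le\frac{4}{3}t$ is covered by neither of your cases, and the additive $\frac{M^2}{n}$ is far too small to absorb the $\frac{t^2}{3}$ term there.

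The missing ingredient is that the lower tail of a sum of nonnegative random variables is purely sub-Gaussian, with no Bernstein correction: since $Y_i=X_i^2\ge0$, one has $\E e^{-\lambda Y_i}\le\exp(-\lambda\E Y_i+\frac{\lambda^2}{2}\E Y_i^2)$, and with $\E Y_i^2\le M^2\E Y_i$ a Chernoff bound gives $\Pr(\hat P\le P-s)\le\exp(-\frac{ns^2}{2M^2P})$. On the complementary event $\hat P\ge P-t\sqrt{P}$, and whenever $\sqrt{P}\ge t$ this is at least $(\sqrt{P}-t)^2$, so $\sqrt{\hat P}\ge\sqrt{P}-t$ (the case $\sqrt{P}<t$ being trivial) --- which closes the gap and shows the $-\frac{M^2}{n}$ slack is not even needed on this side. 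Your upper-deviation argument, which does tolerate the $\frac{t^2}{3}$ term because $P+t\sqrt{P}+\frac{t^2}{3}\le(\sqrt{P}+t)^2$, stands as written, modulo the usual $\log\frac{1}{\delta}$ versus $\log\frac{2}{\delta}$ accounting when the two tails are combined.
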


Recall by Lemma~\ref{prop:mis-unbiased-general}, $\E[\hvar(h_{1},h_{2};S_{k},M)]=\var(h_{1},h_{2};k,M)$
and $\E[\hvar(h_{1};S_{k},M)]=\var(h_{1};k,M)$. The following Corollary
follows from the bound on the second moment.
\begin{cor}
\label{cor:var-gen}For any $k=0,\dots,K$, any $\delta,M>0$, with
probability at least $1-\delta$, for all $h_{1},h_{2}\in\mathcal{H},$
the following statements hold:
\end{cor}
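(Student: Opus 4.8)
The plan is to establish concentration for the clipped second-moment estimators $\hvar(h_1,h_2;S_k,M)$ and $\hvar(h_1;S_k,M)$ around their expectations $\var(h_1,h_2;k,M)$ and $\var(h_1;k,M)$, and then take a square-root to match the form $\sqrt{\hvar}$ versus $\sqrt{\var}$ that appears in the candidate-set definition and the final output rule. The key observation is that this is exactly the scenario covered by Fact~\ref{fact:var-deviation} (from \cite{ND17}): the summands in $\hvar$ are bounded independent random variables, and $\hvar$ itself is the empirical average of their squares while $\var$ is the true expectation.

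First I would fix any pair $h_1,h_2\in\calH$ and set $X_t := w_k(X_t)Z_t\One\{w_k(X_t)\leq M\}\One\{h_1(X_t)\neq h_2(X_t)\}^{1/2}$, or more directly observe that the nonnegative summand $w_k^2(X_t)Z_t\One\{h_1(X_t)\neq h_2(X_t)\}\One\{w_k(X_t)\leq M\}$ appearing in $\hvar(h_1,h_2;S_k,M)$ is the square of a bounded quantity of magnitude at most $M$ (since $w_k(X_t)\One\{w_k(X_t)\leq M\}\leq M$ and the indicator $Z_t\cdot\One\{h_1\neq h_2\}$ is in $\{0,1\}$). By Proposition~\ref{prop:mis-unbiased-general}, the expected average of these squared terms equals $\var(h_1,h_2;k,M)$. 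Applying Fact~\ref{fact:var-deviation} with $n=m+n_k$ gives, with probability at least $1-\delta/2$ for this fixed pair,
\[
-\sqrt{\frac{2M^2}{m+n_k}\log\frac{2}{\delta}}-\frac{M^2}{m+n_k}\leq\sqrt{\hvar(h_1,h_2;S_k,M)}-\sqrt{\var(h_1,h_2;k,M)}\leq\sqrt{\frac{2M^2}{m+n_k}\log\frac{2}{\delta}}.
\]
I would then take a union bound over all $O(|\calH|^2)$ pairs (and over the analogous single-hypothesis statements for $\hvar(h_1;S_k,M)$), replacing $\log\frac{2}{\delta}$ by $\log\frac{|\calH|^2}{\delta}$ or $\log\frac{2|\calH|}{\delta}$ as appropriate, so that all the stated inequalities hold simultaneously for every $h_1,h_2\in\calH$ with probability at least $1-\delta$.

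The only genuinely careful point is the bookkeeping of the union bound and the constant inside the logarithm: the second-moment deviation for pairs requires $|\calH|^2$ events while the single-hypothesis version requires $|\calH|$, so I would allocate the failure probability between the two families and absorb the resulting $\log|\calH|$ factors into a clean statement. I do not expect any hard technical obstacle here, since Fact~\ref{fact:var-deviation} already does the heavy lifting (it is precisely the concentration of a square-root of an empirical second moment, which is what we need); the boundedness $|X_t|\leq M$ follows immediately from clipping, and unbiasedness follows from Proposition~\ref{prop:mis-unbiased-general}. The result is the pair of two-sided bounds relating $\sqrt{\hvar}$ and $\sqrt{\var}$ that the convergence and label-complexity analyses invoke when translating between the empirical regularizer used by the algorithm and its population counterpart.
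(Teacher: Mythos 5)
Your proposal is correct and matches the paper's (essentially unstated) argument exactly: the paper likewise derives this corollary by combining the unbiasedness of $\hvar$ from Proposition~\ref{prop:mis-unbiased-general} with Fact~\ref{fact:var-deviation} applied to the clipped summands bounded by $M$, followed by a union bound over $\calH$ that produces the $\log\frac{2|\calH|}{\delta}$ factor. No gaps.
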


\begin{equation}
\left|\sqrt{\hvar(h_{1},h_{2};S_{k},M)}-\sqrt{\var(h_{1},h_{2};k,M)}\right|\leq\sqrt{\frac{4M^{2}}{m+n_{k}}\log\frac{2|\mathcal{H}|}{\delta}}+\frac{M^{2}}{m+n_{k}},\label{eq:cor-var-gen-diff-var}
\end{equation}

\begin{equation}
\left|\sqrt{\hvar(h_{1};S_{k},M)}-\sqrt{\var(h_{1};k,M)}\right|\leq\sqrt{\frac{4M^{2}}{m+n_{k}}\log\frac{2|\mathcal{H}|}{\delta}}+\frac{M^{2}}{m+n_{k}}.\label{eq:cor-var-gen-var}
\end{equation}

\begin{cor}
\label{cor:gen}There is an absolute constant $\gamma_{1}$, for any
$k=0,\dots,K$, any $\delta>0$, if $\frac{2M\log\frac{|\calH|}{\delta}}{m+n_{k}}\geq\Pr(\frac{m+n_{k}}{mQ_{0}(X)+n_{k}}\geq\frac{M}{2})$,
then with probability at least $1-\delta$, for all $h_{1},h_{2}\in\mathcal{H},$
the following statements hold: 
\begin{align}
\left|\left(l(h_{1};S_{k},M)-l(h_{2};S_{k},M)\right)-\left(l(h_{1})-l(h_{2})\right)\right|\leq & \gamma_{1}\frac{M}{m+n_{k}}\log\frac{|\mathcal{H}|}{\delta}+\gamma_{1}\frac{M^{2}}{(m+n_{k})^{\frac{3}{2}}}\sqrt{\log\frac{|\mathcal{H}|}{\delta}}\label{eq:cor-gen-diff-var_S}\\
 & +\gamma_{1}\sqrt{\frac{\log\frac{|\mathcal{H}|}{\delta}}{m+n_{k}}\hvar(h_{1},h_{2};S_{k},M)};\nonumber 
\end{align}

\begin{equation}
l(h_{1};S_{k},M)\leq2l(h_{1})+\gamma_{1}\frac{M}{m+n_{k}}\log\frac{|\mathcal{H}|}{\delta}.\label{eq:cor-gen-l-mul}
\end{equation}
\end{cor}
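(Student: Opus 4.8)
The plan is to obtain Corollary~\ref{cor:gen} directly from the two deviation results already in hand, namely the MIS concentration bound of Theorem~\ref{thm:gen} and the second-moment deviation bound of Corollary~\ref{cor:var-gen}, by replacing the population second moment $\var$ that appears in the former with its empirical counterpart $\hvar$ using the latter. First I would condition simultaneously on the two high-probability events: invoke Theorem~\ref{thm:gen} and Corollary~\ref{cor:var-gen} each at confidence level $1-\delta/2$, so that by a union bound the inequalities (\ref{eq:thm-gen-diff-var})--(\ref{eq:thm-gen-l-var}) and (\ref{eq:cor-var-gen-diff-var})--(\ref{eq:cor-var-gen-var}) all hold, for every $h_1,h_2\in\calH$, with probability at least $1-\delta$. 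This only inflates the $\log\frac{2|\calH|}{\delta}$ factors to $\log\frac{4|\calH|}{\delta}=O(\log\frac{|\calH|}{\delta})$, which is harmless and gets absorbed into the final constant $\gamma_1$.

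For (\ref{eq:cor-gen-diff-var_S}) I would start from (\ref{eq:thm-gen-diff-var}) and rearrange (\ref{eq:cor-var-gen-diff-var}) into the one-sided form $\sqrt{\var(h_1,h_2;k,M)}\leq\sqrt{\hvar(h_1,h_2;S_k,M)}+\sqrt{\frac{4M^2}{m+n_k}\log\frac{2|\calH|}{\delta}}+\frac{M^2}{m+n_k}$. Multiplying this through by the common factor $\sqrt{\frac{4}{m+n_k}\log\frac{2|\calH|}{\delta}}$ that premultiplies $\sqrt{\var}$ in (\ref{eq:thm-gen-diff-var}), the $\sqrt{\hvar}$ piece becomes exactly the third term $\gamma_1\sqrt{\frac{\log(|\calH|/\delta)}{m+n_k}\hvar(h_1,h_2;S_k,M)}$, while the two correction pieces produce a term of order $\frac{M}{m+n_k}\log\frac{|\calH|}{\delta}$ and a term of order $\frac{M^2}{(m+n_k)^{3/2}}\sqrt{\log\frac{|\calH|}{\delta}}$, matching the first two terms. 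Adding the leading $\frac{10M}{3(m+n_k)}\log\frac{2|\calH|}{\delta}$ term already present in (\ref{eq:thm-gen-diff-var}) and collecting all constants into a single $\gamma_1$ yields (\ref{eq:cor-gen-diff-var_S}). Note that no subadditivity of a square root of a sum is needed, since I factor the common $\sqrt{\cdot}$ out before substituting.

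For (\ref{eq:cor-gen-l-mul}) I would use (\ref{eq:thm-gen-l-var}) to write $l(h_1;S_k,M)\leq l(h_1)+\frac{10M}{3(m+n_k)}\log\frac{2|\calH|}{\delta}+\sqrt{\frac{4}{m+n_k}\log\frac{2|\calH|}{\delta}\,\var(h_1;k,M)}$, and then bound the clipped second moment crudely by $\var(h_1;k,M)=\E[w_k(X)\One\{w_k(X)\leq M\}\One\{h_1(X)\neq Y\}]\leq M\,l(h_1)$, using that the clipped weight $w_k(X)\One\{w_k(X)\leq M\}$ never exceeds $M$. The resulting term $\sqrt{\frac{4M}{m+n_k}\log\frac{2|\calH|}{\delta}\,l(h_1)}$ I would split by the AM--GM inequality $2\sqrt{xy}\leq x+y$ with $x=l(h_1)$ and $y=\frac{M}{m+n_k}\log\frac{2|\calH|}{\delta}$, giving $l(h_1)+\frac{M}{m+n_k}\log\frac{2|\calH|}{\delta}$. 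Adding this copy of $l(h_1)$ to the one already present produces the factor $2$, and folding the remaining $\frac{M}{m+n_k}\log$ terms into $\gamma_1$ gives (\ref{eq:cor-gen-l-mul}).

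This argument is essentially bookkeeping, so I do not anticipate a genuine obstacle; the only points needing care are (i) keeping the union bound consistent so that a single $\gamma_1$ and a single $\log\frac{|\calH|}{\delta}$ suffice across both displayed statements, and (ii) confirming that the clipped-weight bound $w_k\One\{w_k\le M\}\le M$ makes $\var(h_1;k,M)\le M\,l(h_1)$ hold with no side condition on $M$, so that the crude second-moment bound used in the second part remains valid even when the clipping level $M$ is small.
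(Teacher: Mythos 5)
Your proposal is correct and matches the paper's own proof essentially step for step: condition on the deviation events of Theorem~\ref{thm:gen} and Corollary~\ref{cor:var-gen} via a union bound, substitute $\sqrt{\hvar}$ for $\sqrt{\var}$ in (\ref{eq:thm-gen-diff-var}) to get (\ref{eq:cor-gen-diff-var_S}), and for (\ref{eq:cor-gen-l-mul}) use $\var(h_1;k,M)\leq M\,l(h_1)$ followed by $\sqrt{ab}\leq\frac{a+b}{2}$. The only cosmetic difference is that the paper splits the confidence as $\delta/3$ per inequality rather than $\delta/2$ per result, which is immaterial since the constants are absorbed into $\gamma_1$.
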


\begin{proof}
Let event $E$ be the event that (\ref{eq:thm-gen-diff-var}), (\ref{eq:thm-gen-l-var}),
and (\ref{eq:cor-var-gen-diff-var}) hold for all $h_{1},h_{2}\in\mathcal{H}$
with confidence $1-\frac{\delta}{3}$ respectively. Assume $E$ happens
(whose probability is at least $1-\delta$).

(\ref{eq:cor-gen-diff-var_S}) is immediate from (\ref{eq:thm-gen-diff-var})
and (\ref{eq:cor-var-gen-diff-var}).

For the proof of (\ref{eq:cor-gen-l-mul}), apply (\ref{eq:thm-gen-l-var})
to $h_{1}$, we get 
\[
l(h_{1};S_{k},M)\leq l(h_{1})+\frac{10\log\frac{6|\mathcal{H}|}{\delta}}{3(m+n_{k})}M+\sqrt{\frac{4\log\frac{6|\mathcal{H}|}{\delta}}{m+n_{k}}\var(h_{1};k,M)}.
\]

Now, $\var(h_{1};k,M)=\E\left[w_{k}(X)\One\{h_{1}(X)\neq Y\}\One\{w_{k}(X)\leq M\}\right]\leq M\E[\One\{h_{1}(X)\neq Y\}]$,
so $\sqrt{\frac{4\log\frac{6|\mathcal{H}|}{\delta}}{m+n_{k}}\var(h_{1};k,M)}\leq\sqrt{\frac{4M\log\frac{6|\mathcal{H}|}{\delta}}{m+n_{k}}l(h_{1})}\leq l(h_{1})+\frac{M\log\frac{6|\mathcal{H}|}{\delta}}{(m+n_{k})}$
where the last inequality follows from $\sqrt{ab}\leq\frac{a+b}{2}$
for $a,b\geq0$, and (\ref{eq:cor-gen-l-mul}) thus follows.
\end{proof}

\section{Technical Lemmas for Disagreement-Based Active Learning}

For any $0\leq k<K$ and $\delta>0$, define event $\mathcal{E}_{k,\delta}$
to be the event that the conclusions of Theorem~\ref{thm:gen} and
Corollary~\ref{cor:var-gen} hold for $k$ with confidence $1-\delta/2$
respectively. We have $\Pr(\mathcal{E}_{k,\delta})\geq1-\delta$,
and that $\mathcal{E}_{k,\delta}$ implies inequalities (\ref{eq:cor-gen-diff-var_S})
and (\ref{eq:cor-gen-l-mul}).

Recall that $\sigma_{1}(k,\delta,M)=\frac{M}{m+n_{k}}\log\frac{|\mathcal{H}|}{\delta}+\frac{M^{2}}{(m+n_{k})^{\frac{3}{2}}}\sqrt{\log\frac{|\mathcal{H}|}{\delta}};\sigma_{2}(k,\delta)=\frac{1}{m+n_{k}}\log\frac{|\mathcal{H}|}{\delta};\delta_{k}=\frac{\delta}{2(k+1)(k+2)}$.

We first present a lemma which can be used to guarantee that $h^{\star}$
stays in candidate sets with high probability by induction.
\begin{lem}
\label{lem:h_star_in}For any $k=0,\dots K$, any $\delta>0$, any
$M\geq1$ such that $\frac{2M\log\frac{|\calH|}{\delta}}{m+n_{k}}\geq\Pr(\frac{m+n_{k}}{mQ_{0}(X)+n_{k}}\geq\frac{M}{2})$,
on event $\mathcal{E}_{k,\delta}$, if $h^{\star}\in\vs_{k}$, then,
\[
l(h^{\star};\tilde{S}_{k},M)\leq l(\hat{h}_{k};\tilde{S}_{k},M)+\gamma_{1}\sigma_{1}(k,\delta,M)+\gamma_{1}\sqrt{\sigma_{2}(k,\delta)\hvar(h^{\star},\hat{h}_{k};\tilde{S}_{k},M)}.
\]
\end{lem}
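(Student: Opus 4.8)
The plan is to reduce the inequality on the observed data $\tilde{S}_{k}$ to the corresponding inequality on the ideal data $S_{k}$, apply the deviation bound from Corollary~\ref{cor:gen}, and then exploit the optimality of $h^{\star}$. First I would observe that both $h^{\star}$ and $\hat{h}_{k}$ lie in $\vs_{k}$ (the former by the standing hypothesis, the latter by definition as the minimizer of $l(\cdot;\tilde{S}_{k},M)$ over $\vs_{k}$). This lets me invoke Lemma~\ref{lem:l-diff-S-S_tilde}, which gives the two identities
\[
l(h^{\star};\tilde{S}_{k},M)-l(\hat{h}_{k};\tilde{S}_{k},M)=l(h^{\star};S_{k},M)-l(\hat{h}_{k};S_{k},M),\qquad \hvar(h^{\star},\hat{h}_{k};\tilde{S}_{k},M)=\hvar(h^{\star},\hat{h}_{k};S_{k},M).
\]
Consequently it suffices to prove the claimed bound with every $\tilde{S}_{k}$ replaced by $S_{k}$.

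Next I would apply inequality~(\ref{eq:cor-gen-diff-var_S}) of Corollary~\ref{cor:gen} with $h_{1}=h^{\star}$ and $h_{2}=\hat{h}_{k}$. This is legitimate because the hypothesis $\frac{2M\log\frac{|\calH|}{\delta}}{m+n_{k}}\geq\Pr(\frac{m+n_{k}}{mQ_{0}(X)+n_{k}}\geq\frac{M}{2})$ matches exactly the precondition of that corollary, and because event $\mathcal{E}_{k,\delta}$ by definition guarantees its conclusion. Recognizing that the first two error terms combine into $\gamma_{1}\sigma_{1}(k,\delta,M)$ and the last into $\gamma_{1}\sqrt{\sigma_{2}(k,\delta)\hvar(h^{\star},\hat{h}_{k};S_{k},M)}$ by the definitions of $\sigma_{1},\sigma_{2}$, dropping the absolute value on the side I need yields
\[
l(h^{\star};S_{k},M)-l(\hat{h}_{k};S_{k},M)\leq \bigl(l(h^{\star})-l(\hat{h}_{k})\bigr)+\gamma_{1}\sigma_{1}(k,\delta,M)+\gamma_{1}\sqrt{\sigma_{2}(k,\delta)\hvar(h^{\star},\hat{h}_{k};S_{k},M)}.
\]

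Finally I would use that $h^{\star}=\arg\min_{h\in\calH}l(h)$ is the population error minimizer, so $l(h^{\star})-l(\hat{h}_{k})\leq0$ and this term can be discarded without increasing the right-hand side. Combining with the two identities from the first step returns the stated inequality on $\tilde{S}_{k}$. The argument is essentially a direct chaining of earlier results, so I do not anticipate a serious technical obstacle; the one conceptual point deserving care is the first step, namely why the observed estimators on $\tilde{S}_{k}$ equal the ideal ones on $S_{k}$ for classifiers inside $\vs_{k}$. This is precisely the content of Lemma~\ref{lem:l-diff-S-S_tilde} and reflects why inferring labels outside the disagreement region is harmless: on the disagreement region the queried labels agree with the true ones, and outside it $h^{\star}$ and $\hat{h}_{k}$ predict identically, so neither the loss difference nor the pairwise second moment is affected.
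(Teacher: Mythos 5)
Your proposal is correct and follows essentially the same route as the paper's proof: transfer from $\tilde{S}_{k}$ to $S_{k}$ via Lemma~\ref{lem:l-diff-S-S_tilde} (valid since both $h^{\star}$ and $\hat{h}_{k}$ lie in $\vs_{k}$), apply inequality~(\ref{eq:cor-gen-diff-var_S}) of Corollary~\ref{cor:gen}, and discard the nonpositive term $l(h^{\star})-l(\hat{h}_{k})$. The paper leaves that last step implicit, but the argument is identical.
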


\begin{proof}
\begin{align*}
 & l(h^{\star};\tilde{S}_{k},M)-l(\hat{h}_{k};\tilde{S}_{k},M)\\
= & l(h^{\star};S_{k},M)-l(\hat{h}_{k};S_{k},M)\\
\leq & \gamma_{1}\sigma_{1}(k,\delta,M)+\gamma_{1}\sqrt{\sigma_{2}(k,\delta)\hvar(h^{\star},\hat{h}_{k};S_{k},M)}\\
= & \gamma_{1}\sigma_{1}(k,\delta,M)+\gamma_{1}\sqrt{\sigma_{2}(k,\delta)\hvar(h^{\star},\hat{h}_{k};\tilde{S}_{k},M)}
\end{align*}

The first and the second equalities follow by Lemma~\ref{lem:l-diff-S-S_tilde}.
The inequality follows by Corollary~\ref{cor:gen}.
\end{proof}
Next, we present a lemma to bound the probability mass of the disagreement
region of candidate sets.
\begin{lem}
\label{lem:dis-radius} Let $\hat{h}_{k,M}=\arg\min_{h\in\vs_{k}}l(h;\tilde{S}_{k},M)$,
and $\vs_{k+1}(\delta,M):=\{h\in\vs_{k}\mid l(h;\tilde{S}_{k},M)\leq l(\hat{h}_{k,M};\tilde{S}_{k},M)+\gamma_{1}\sigma_{1}(k,\delta,M)+\gamma_{1}\sqrt{\sigma_{2}(k,\delta)\hvar(h,\hat{h}_{k,M};\tilde{S}_{k},M)}\}$.
There is an absolute constant $\gamma_{2}>1$ such that for any $k=0,\dots,K$,
any $\delta>0$, any $M\geq1$ such that $\frac{2M\log\frac{|\calH|}{\delta}}{m+n_{k}}\geq\Pr(\frac{m+n_{k}}{mQ_{0}(X)+n_{k}}\geq\frac{M}{2})$,
on event $\mathcal{E}_{k,\delta}$, if $h^{\star}\in\vs_{k}$, then
for all $h\in\vs_{k+1}(\delta,M)$, 
\[
l(h)-l(h^{\star})\leq\gamma_{2}\sigma_{1}(k,\delta,M)+\gamma_{2}\sqrt{\sigma_{2}(k,\delta)Ml(h^{\star})}.
\]
\end{lem}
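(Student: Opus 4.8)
The plan is to start from the inequality defining membership in $\vs_{k+1}(\delta,M)$ and progressively convert the empirical quantities on $\tilde{S}_k$ into population quantities, closing the resulting recursion with the quadratic inequality of Proposition~\ref{prop:quad-ineq}. Fix $h\in\vs_{k+1}(\delta,M)$ and write $a:=l(h)-l(h^{\star})\geq0$. Since $h^{\star}\in\vs_k$ and $\hat{h}_{k,M}$ minimizes $l(\cdot;\tilde{S}_k,M)$ over $\vs_k$, we have $l(\hat{h}_{k,M};\tilde{S}_k,M)\leq l(h^{\star};\tilde{S}_k,M)$, so the defining inequality yields $l(h;\tilde{S}_k,M)-l(h^{\star};\tilde{S}_k,M)\leq\gamma_1\sigma_1(k,\delta,M)+\gamma_1\sqrt{\sigma_2(k,\delta)\hvar(h,\hat{h}_{k,M};\tilde{S}_k,M)}$. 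Because $h,h^{\star},\hat{h}_{k,M}\in\vs_k$, Lemma~\ref{lem:l-diff-S-S_tilde} lets me replace $\tilde{S}_k$ by $S_k$ in both the loss difference and the second moment. I then apply the deviation bound (\ref{eq:cor-gen-diff-var_S}) of Corollary~\ref{cor:gen} (valid on $\mathcal{E}_{k,\delta}$ under the stated clipping condition) to pass from $l(h;S_k,M)-l(h^{\star};S_k,M)$ to $l(h)-l(h^{\star})$, costing an additional $\gamma_1\sigma_1+\gamma_1\sqrt{\sigma_2\hvar(h,h^{\star};S_k,M)}$. The outcome is a bound of the form $a\leq 2\gamma_1\sigma_1+\gamma_1\sqrt{\sigma_2\hvar(h,\hat{h}_{k,M};S_k,M)}+\gamma_1\sqrt{\sigma_2\hvar(h,h^{\star};S_k,M)}$.

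Next I would control the two empirical second moments. Using (\ref{eq:cor-var-gen-diff-var}) from Corollary~\ref{cor:var-gen}, I replace each $\sqrt{\hvar(\cdot;S_k,M)}$ by $\sqrt{\var(\cdot;k,M)}$ up to a deviation $\sqrt{4M^2(m+n_k)^{-1}\log(2|\calH|/\delta)}+M^2(m+n_k)^{-1}$; multiplied by $\sqrt{\sigma_2(k,\delta)}$, both deviation pieces collapse into constant multiples of $\sigma_1(k,\delta,M)$, since $\sqrt{\sigma_2}\cdot M/\sqrt{m+n_k}\asymp M(m+n_k)^{-1}\log(|\calH|/\delta)$ and $\sqrt{\sigma_2}\cdot M^2/(m+n_k)$ is exactly the second term of $\sigma_1$. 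I then bound the population second moments by $\var(h_1,h_2;k,M)\leq M\,\Pr(h_1(X)\neq h_2(X))$ together with $\Pr(h_1\neq h_2)\leq l(h_1)+l(h_2)$, giving $\var(h,h^{\star};k,M)\leq M(a+2\nu)$ and $\var(h,\hat{h}_{k,M};k,M)\leq M(a+b+4\nu)$, where $b:=l(\hat{h}_{k,M})-l(h^{\star})$ and $\nu=l(h^{\star})$.

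Before combining, I establish the same-form bound for $b$ as a sub-result: rerunning the first paragraph with $h$ replaced by $\hat{h}_{k,M}$, the empirical loss difference $l(\hat{h}_{k,M};S_k,M)-l(h^{\star};S_k,M)$ is now $\leq0$ by ERM optimality, so after folding deviations $b\leq\gamma_1\sigma_1+\gamma_1\sqrt{\sigma_2\var(\hat{h}_{k,M},h^{\star};k,M)}$, and with $\var(\hat{h}_{k,M},h^{\star};k,M)\leq M(b+2\nu)$ this reads $b\leq\beta+\sqrt{(\gamma_1^2\sigma_2 M)\,b}$ for $\beta=O(\sigma_1)+\gamma_1\sqrt{2\sigma_2 M\nu}$. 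Proposition~\ref{prop:quad-ineq} gives $b\leq\gamma_2\sigma_1(k,\delta,M)+\gamma_2\sqrt{\sigma_2(k,\delta)M\nu}$, using $\sigma_2 M\leq\sigma_1$ to absorb the additive $c=\gamma_1^2\sigma_2 M$. Plugging this bound on $b$ and the second-moment bounds into the display for $a$, applying $\sqrt{x+y}\leq\sqrt{x}+\sqrt{y}$ to split $\sqrt{M(a+b+4\nu)}$ and $\sqrt{M(a+2\nu)}$, and collecting the two $\sqrt{a}$ contributions, I reach $a\leq\beta'+2\gamma_1\sqrt{\sigma_2 M}\sqrt{a}$ with $\beta'=O(\sigma_1)+O(\sqrt{\sigma_2 M\nu})$; a final application of Proposition~\ref{prop:quad-ineq} closes the recursion and yields the claimed bound for a suitable $\gamma_2$.

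The main obstacle is the bookkeeping needed to show every auxiliary term is dominated by either $\sigma_1(k,\delta,M)$ or $\sqrt{\sigma_2(k,\delta)M\nu}$. The most delicate is the cross term produced when the $b$-bound is substituted into $\sqrt{b+4\nu}$: it generates a factor of the shape $(\sigma_2 M)^{3/4}\nu^{1/4}$, which I would handle by the case split $(\sigma_2 M)^{3/4}\nu^{1/4}\leq\max\{\sigma_2 M,\sqrt{\sigma_2 M\nu}\}\leq\sigma_1+\sqrt{\sigma_2 M\nu}$ (comparing $\sigma_2 M$ with $\nu$, and using $\sigma_2 M\leq\sigma_1$). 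Keeping the numerous $\gamma_1$-multiples from the successive applications of Corollaries~\ref{cor:gen} and~\ref{cor:var-gen} organized so that they consolidate into a single absolute constant $\gamma_2$ is the other place where care is required.
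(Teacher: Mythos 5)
Your proposal is correct, and it reaches the bound by a genuinely different decomposition of the troublesome term $\sqrt{\hvar(h,\hat{h}_{k,M};\tilde{S}_{k},M)}$. The paper splits it by the triangle inequality into $\sqrt{\hvar(h,h^{\star};\cdot)}+\sqrt{\hvar(h^{\star},\hat{h}_{k,M};\cdot)}$ and then controls the second piece \emph{entirely at the empirical level}: $\hvar(h^{\star},\hat{h}_{k,M};\tilde{S}_{k},M)\leq M(l(h^{\star};\tilde{S}_{k},M)+l(\hat{h}_{k,M};\tilde{S}_{k},M))\leq 2Ml(h^{\star};\tilde{S}_{k},M)$ by empirical ERM optimality, followed by the multiplicative deviation bound (\ref{eq:cor-gen-l-mul}) to land on $2\sqrt{Ml(h^{\star})}$ plus lower-order terms. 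This sidesteps any need for a population excess-risk bound on $\hat{h}_{k,M}$ and produces a single quadratic recursion in $l(h)-l(h^{\star})$. You instead push both second moments to the population level, bound $\var(h,\hat{h}_{k,M};k,M)\leq M(a+b+4\nu)$ with $b=l(\hat{h}_{k,M})-l(h^{\star})$, and must therefore prove an auxiliary bound on $b$ of the target form and close two nested applications of Proposition~\ref{prop:quad-ineq}; this is what generates the $(\sigma_{2}M)^{3/4}\nu^{1/4}$ cross term, which you correctly absorb via the case split against $\max\{\sigma_{2}M,\sqrt{\sigma_{2}M\nu}\}$ and the observation $\sigma_{2}M\leq\sigma_{1}$. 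Both routes are sound and yield the same asymptotic form with a (different) absolute constant $\gamma_{2}$; the paper's empirical-level shortcut buys a shorter argument with no nested recursion, while yours is more systematic and makes explicit the excess-risk guarantee for the intermediate ERM $\hat{h}_{k,M}$, which could be of independent use.
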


\begin{proof}
For any $h\in\vs_{k+1}(\delta,M)$, we have 
\begin{align}
 & l(h)-l(h^{\star})\nonumber \\
\leq & l(h;S_{k},M)-l(h^{\star};S_{k},M)+\frac{10M\log\frac{4|\mathcal{H}|}{\delta}}{3(m+n_{k})}+\sqrt{4\frac{\var(h^{\star},h;k,M)}{m+n_{k}}\log\frac{4|\mathcal{H}|}{\delta}}\nonumber \\
= & l(h;\tilde{S}_{k},M)-l(h^{\star};\tilde{S}_{k},M)+\frac{10M\log\frac{4|\mathcal{H}|}{\delta}}{3(m+n_{k})}+\sqrt{4\frac{\var(h^{\star},h;k,M)}{m+n_{k}}\log\frac{4|\mathcal{H}|}{\delta}}\nonumber \\
= & l(h;\tilde{S}_{k},M)-l(\hat{h}_{k,M};\tilde{S}_{k},M)+l(\hat{h}_{k,M};\tilde{S}_{k},M)-l(h^{\star};\tilde{S}_{k},M)+\frac{10M\log\frac{4\mathcal{H}|}{\delta}}{3(m+n_{k})}+\sqrt{4\frac{\var(h^{\star},h;k,M)}{m+n_{k}}\log\frac{4|\mathcal{H}|}{\delta}}\nonumber \\
\leq & \gamma_{1}\sigma_{1}(k,\delta,M)+\gamma_{1}\sqrt{\sigma_{2}(k,\delta)\hvar(h,\hat{h}_{k,M};\tilde{S}_{k},M)}+\frac{10M\log\frac{4|\mathcal{H}|}{\delta}}{3(m+n_{k})}+\sqrt{4\frac{\var(h^{\star},h;k,M)}{m+n_{k}}\log\frac{4|\mathcal{H}|}{\delta}}\label{eq:dis-radius-pf-t1}
\end{align}
where the first equality follows from Lemma~\ref{lem:l-diff-S-S_tilde},
the first inequality follows from Theorem~\ref{thm:gen}, and the
second inequality follows from the definition of $\vs_{k}(\delta,M)$
and that $l(\hat{h}_{k,M};\tilde{S}_{k},M)\leq l(h^{\star};\tilde{S}_{k},M)$.

Next, we upper bound $\sqrt{\hvar(h,\hat{h}_{k,M};\tilde{S}_{k},M)}$.
We have 
\begin{align*}
\sqrt{\hvar(h,\hat{h}_{k,M};\tilde{S}_{k},M)} & \leq\sqrt{\hvar(h,h^{\star};\tilde{S}_{k},M)+\hvar(h^{\star},\hat{h}_{k,M};\tilde{S}_{k},M)}\\
 & \leq\sqrt{\hvar(h,h^{\star};\tilde{S}_{k},M)}+\sqrt{\hvar(h^{\star},\hat{h}_{k,M};\tilde{S}_{k},M)}
\end{align*}
where the first inequality follows from the triangle inequality that
$\hvar(h,\hat{h}_{k,M};\tilde{S}_{k},M)\leq\hvar(h,h^{\star};\tilde{S}_{k},M)+\hvar(h^{\star},\hat{h}_{k,M};\tilde{S}_{k},M)$
and the second follows from the fact that $\sqrt{a+b}\leq\sqrt{a}+\sqrt{b}$
for $a,b\geq0$.

For the first term, we have $\sqrt{\hvar(h,h^{\star};\tilde{S}_{k},M)}=\sqrt{\hvar(h,h^{\star};S_{k},M)}\leq\sqrt{\var(h,h^{\star};k,M)}+\sqrt{\frac{4M^{2}}{m+n_{k}}\log\frac{4|\mathcal{H}|}{\delta}}+\frac{M^{2}}{m+n_{k}}$
by Corollary~\ref{cor:var-gen}.

For the second term, we have

\begin{align*}
\sqrt{\hvar(h^{\star},\hat{h}_{k,M};\tilde{S},M)} & \leq\sqrt{M(l(h^{\star};\tilde{S}_{k},M)+l(\hat{h}_{k,M};\tilde{S}_{k},M))}\\
 & \leq\sqrt{2Ml(h^{\star};\tilde{S}_{k},M)}\\
 & \leq\sqrt{2Ml(h^{\star};S_{k},M)}\\
 & \leq\sqrt{2M(2l(h^{\star})+\gamma_{1}\frac{M}{m+n_{k}}\log\frac{|{\cal H}|}{\delta})}\\
 & \leq\sqrt{\frac{2\gamma_{1}M^{2}}{m+n_{k}}\log\frac{|\calH|}{\delta}}+2\sqrt{Ml(h^{\star})}
\end{align*}
where the first inequality follows since $w_{k}^{2}(X)Z\One\{h^{\star}(X)\neq\hat{h}_{k,M}(X)\}\One[w_{k}(X)\leq M]\leq M(w_{k}(X)Z\One\{h^{\star}(X)\neq Y\}+w_{k}(X)Z\One\{\hat{h}_{k,M}(X)\neq Y\})$,
the second inequality follows since $l(\hat{h}_{k,M};\tilde{S}_{k},M)\leq l(h^{\star};\tilde{S}_{k},M)$,
the third follows by Lemma~\ref{lem:l-fav-bias} since we assume
$h^{\star}\in\vs_{k}$, the fourth follows by Corollary~\ref{cor:gen},
and the last follows by $\sqrt{a+b}\leq\sqrt{a}+\sqrt{b}$.

Therefore, $\sqrt{\hvar(h,\hat{h}_{k,M};\tilde{S}_{k},M)}\leq\sqrt{\var(h,h^{\star};k,M)}+(2+\sqrt{2\gamma_{1}})\sqrt{\frac{M^{2}}{m+n_{k}}\log\frac{4|\mathcal{H}|}{\delta}}+\frac{M^{2}}{m+n_{k}}+2\sqrt{Ml(h^{\star})}$.
Continuing (\ref{eq:dis-radius-pf-t1}), we have

\begin{align*}
l(h)-l(h^{\star}) & \leq(\frac{10}{3}+3\gamma_{1}+2\sqrt{2}\gamma_{1}^{\frac{3}{2}})\frac{M}{m+n_{k}}\log\frac{4|\mathcal{H}|}{\delta}+\gamma_{1}\frac{M^{2}}{(m+n_{k})^{\frac{3}{2}}}\sqrt{\log\frac{4|\mathcal{H}|}{\delta}}\\
 & +(\gamma_{1}+2)\sqrt{\frac{\var(h^{\star},h;k,M)}{m+n_{k}}\log\frac{4|\mathcal{H}|}{\delta}}+2\gamma_{1}\sqrt{\frac{Ml(h^{\star})}{m+n_{k}}\log\frac{4|\mathcal{H}|}{\delta}}.
\end{align*}

Now, since $w_{k}^{2}(X)Z\One\{h^{\star}(X)\neq\hat{h}_{k}(X)\}\One[w_{k}(X)\leq M]\leq M(w_{k}(X)Z\One\{h^{\star}(X)\neq Y\}+w_{k}(X)Z\One\{\hat{h}_{k}(X)\neq Y\})$,
we have $\sqrt{\frac{\var(h^{\star},h;k,M)}{m+n_{k}}\log\frac{4|\mathcal{H}|}{\delta}}\leq\sqrt{\frac{M(l(h)-l(h^{\star})+2l(h^{\star}))}{m+n_{k}}\log\frac{4|\mathcal{H}|}{\delta}}\leq\sqrt{\frac{M(l(h)-l(h^{\star}))}{m+n_{k}}\log\frac{4|\mathcal{H}|}{\delta}}+\sqrt{\frac{2Ml(h^{\star})}{m+n_{k}}\log\frac{4|\mathcal{H}|}{\delta}}$
where the second follows by $\sqrt{a+b}\leq\sqrt{a}+\sqrt{b}$ for
$a,b\geq0$.

Thus, $l(h)-l(h^{\star})\leq(\frac{10}{3}+3\gamma_{1}+2\sqrt{2}\gamma_{1}^{\frac{3}{2}})\frac{M}{m+n_{k}}\log\frac{4|\mathcal{H}|}{\delta}+\gamma_{1}\frac{M^{2}}{(m+n_{k})^{\frac{3}{2}}}\sqrt{\log\frac{4|\mathcal{H}|}{\delta}}+(2\gamma_{1}+\sqrt{2}\gamma_{1}+2\sqrt{2})\sqrt{\frac{Ml(h^{\star})}{m+n_{k}}\log\frac{4|\mathcal{H}|}{\delta}}+(\gamma_{1}+2)\sqrt{\frac{M(l(h)-l(h^{\star}))}{m+n_{k}}\log\frac{4|\mathcal{H}|}{\delta}}$.

The result follows by applying Lemma~\ref{prop:quad-ineq} to $l(h)-l(h^{\star})$.
\end{proof}

\section{Proofs for Section~\ref{sec:Analysis}}
\begin{proof}
(of Theorem~\ref{thm:Convergence}) Define event $\mathcal{E}^{(0)}:=\bigcap_{k=0}^{K}\mathcal{E}_{k,\delta_{k}}$.
By a union bound, $\Pr(\mathcal{E}^{(0)})\geq1-\delta/2$. On event
$\mathcal{E}^{(0)}$, by induction and Lemma~\ref{lem:h_star_in},
for all $k=0,\dots,K$, $h^{\star}\in\vs_{k}$.

\begin{align*}
l(\hat{h})-l(h^{\star})\leq & l(\hat{h};S_{K},M_{K})-l(h^{\star};S_{K},M_{K})+\gamma_{1}\sigma_{1}(K,\delta_{K},M_{K})+\gamma_{1}\sqrt{\sigma_{2}(K,\delta_{K})\hvar(\hat{h},h^{\star};S_{K},M_{K})}\\
= & l(\hat{h};\tilde{S}_{K},M_{K})-l(h^{\star};\tilde{S}_{K},M_{K})+\gamma_{1}\sigma_{1}(K,\delta_{K},M_{K})+\gamma_{1}\sqrt{\sigma_{2}(K,\delta_{K})\hvar(\hat{h},h^{\star};\tilde{S}_{K},M_{K})}\\
\leq & l(\hat{h};\tilde{S}_{K},M_{K})+\gamma_{1}\sqrt{\sigma_{2}(K,\delta_{K})\hvar(\hat{h};\tilde{S}_{K},M_{K})}-l(h^{\star};\tilde{S}_{K},M_{K})-\gamma_{1}\sqrt{\sigma_{2}(K,\delta_{K})\hvar(h^{\star};\tilde{S}_{K},M_{K})}\\
 & +\gamma_{1}\sigma_{1}(K,\delta_{K},M_{K})+2\gamma_{1}\sqrt{\sigma_{2}(K,\delta_{K})\hvar(h^{\star};\tilde{S}_{K},M_{K})}\\
\leq & \gamma_{1}\sigma_{1}(K,\delta_{K},M_{K})+2\gamma_{1}\sqrt{\sigma_{2}(K,\delta_{K})\hvar(h^{\star};\tilde{S}_{K},M_{K})}\\
\leq & \gamma_{1}\sigma_{1}(K,\delta_{K},M_{K})+2\gamma_{1}\sqrt{\sigma_{2}(K,\delta_{K})\hvar(h^{\star};S_{K},M_{K})}\\
\leq & 3\gamma_{1}\sigma_{1}(K,\delta_{K},M_{K})+2\gamma_{1}\sqrt{\sigma_{2}(K,\delta_{K})\var(h^{\star};K,M_{K})}
\end{align*}
where the equality follows from Lemma~\ref{lem:l-diff-S-S_tilde},
the first inequality follows from Corollary~\ref{cor:gen}, the second
follows as $\sqrt{\hvar(\hat{h},h^{\star};\tilde{S}_{K},M_{K})}\leq\ensuremath{\sqrt{\hvar(\hat{h};\tilde{S}_{K},M_{K})+\hvar(h^{\star};\tilde{S}_{K},M_{K})}}\leq\text{\ensuremath{\sqrt{\hvar(\hat{h};\tilde{S}_{K},M_{K})}}+\ensuremath{\sqrt{\hvar(h^{\star};\tilde{S}_{K},M_{K})}}}$,
the third follows from the definition of $\hat{h}$, the forth follows
from Lemma~\ref{lem:l-fav-bias}, and the last follows from Corollary~\ref{cor:var-gen}.
\end{proof}
\begin{proof}
(of Theorem~\ref{thm:Label-Complexity}) Define event $\mathcal{E}^{(0)}:=\bigcap_{k=0}^{K}\mathcal{E}_{k,\delta_{k}}$.
On this event, by induction and Lemma~\ref{lem:h_star_in}, for all
$k=0,\dots,K-1$, $h^{\star}\in\vs_{k}$, and consequently by Lemma~\ref{lem:dis-radius},
$D_{k+1}\subseteq\text{DIS}(B(h^{\star},2\nu+\epsilon_{k}))$ where
$\epsilon_{k}=\gamma_{2}\sigma_{1}(k,\delta_{k},M_{k})+\gamma_{2}\sqrt{\sigma_{2}(k,\delta_{k})M_{k}\nu}$.

For any $k=0,\dots K-1$, the number of label queries at iteration
$k$ is $U_{k}:=\sum_{t=m+n_{k}+1}^{m+n_{k+1}}Z_{t}\One\{X_{t}\in D_{k+1}\}$
where the RHS is a sum of i.i.d. Bernoulli random variables with expectation
$\E[Z_{t}\One\{X_{t}\in D_{k+1}\}]=\Pr(D_{k+1}\cap\{x:Q_{0}(x)<\frac{2n_{k+1}}{m}\})$
since $Z_{t}=Q_{k+1}(x)=\One\{2n_{k+1}-mQ_{0}(x)>0\}$ by Proposition~\ref{prop:Q-simplified}.
A Bernstein inequality implies that on an event $\mathcal{E}^{(1,k)}$
of probability at least $1-\delta_{k}/2$, $U_{k}\leq2\tau_{k+1}\Pr(D_{k+1}\cap\{x:Q_{0}(x)<\frac{2n_{k+1}}{m}\})+2\log\frac{4}{\delta_{k}}$.

Define $\mathcal{E}^{(1)}:=\bigcap_{k=0}^{K-1}\mathcal{E}^{(1,k)}$,
and $\mathcal{E}^{(2)}:=\mathcal{E}^{(0)}\cap\mathcal{E}^{(1)}$.
By a union bound, we have $\Pr(\mathcal{E}^{(2)})\geq1-\delta$. Now,
on event $\mathcal{E}^{(2)}$, for any $k<K$, $D_{k+1}\subseteq\text{DIS}(B(h^{\star},2\nu+\epsilon_{k}))$,
so by Lemma~\ref{lem:dis-coefficient} $\Pr(D_{k+1}\cap\{x:Q_{0}(x)<\frac{2n_{k+1}}{m}\})\leq(2\nu+\epsilon_{k})\tilde{\theta}(2\nu+\epsilon_{k},\frac{2n_{k+1}}{m})$.
Therefore, the total number of label queries

\begin{align*}
\sum_{k=0}^{K-1}U_{k}\leq & \tau_{1}+\sum_{k=1}^{K-1}2\tau_{k+1}\Pr(D_{k+1}\cap\{x:Q_{0}(x)<\frac{2n_{k+1}}{m}\})+2K\log\frac{4}{\delta_{K}}\\
\leq & 1+2\sum_{k=1}^{K-1}\tau_{k+1}(2\nu+\epsilon_{k})\tilde{\theta}(2\nu+\epsilon_{k},\frac{2n_{k+1}}{m})+2K\log\frac{4}{\delta_{K}}\\
\leq & 1+2K\log\frac{4}{\delta_{K}}+2\tilde{\theta}(2\nu+\epsilon_{K-1},\frac{2n}{m})\cdot\Biggl(2n\nu\\
 & \left.+\gamma_{2}\sum_{k=1}^{K-1}(\frac{\tau_{k+1}M_{k}}{m+n_{k}}\log\frac{|\mathcal{H}|}{\delta_{k}}+\frac{\tau_{k+1}M_{k}^{2}}{(m+n_{k})^{\frac{3}{2}}}\sqrt{\log\frac{|\mathcal{H}|}{\delta_{k}}}+\tau_{k+1}\sqrt{\frac{M_{k}}{m+n_{k}}\nu\log\frac{|\mathcal{H}|}{\delta_{k}}})\right).
\end{align*}

Recall that $\alpha=\frac{m}{n}$,$\tau_{k}=2^{k}$, $\xi=\min_{1\leq k\leq K}\{M_{k}/\frac{m+n_{k}}{mq_{0}+n_{k}}\}$,
$\bar{M}=\max_{1\leq k\leq K}M_{k}$. We have $\sum_{k=1}^{K-1}\frac{\tau_{k+1}M_{k}}{m+n_{k}}\leq\sum_{k=1}^{K-1}\frac{\xi\tau_{k}}{mq_{0}+n_{k}}\leq\sum_{k=1}^{K}\frac{\xi n_{k}}{\alpha n_{k}q_{0}+n_{k}}\leq\frac{K\xi}{\alpha q_{0}+1}$
where the first inequality follows as $\frac{M_{k}}{m+n_{k}}\leq\frac{\xi}{mq_{0}+n_{k}}$,
and the second follows by $m=n\alpha\geq n_{k}\alpha$. Besides, $\sum_{k=1}^{K-1}\frac{\tau_{k}M_{k}^{2}}{(m+n_{k})^{\frac{3}{2}}}\leq\sum_{k=1}^{K-1}\frac{\tau_{k}M_{k}\xi}{\sqrt{m+n_{k}}(mq_{0}+n_{k})}\leq\sum_{k=1}^{K-1}\frac{\bar{M}\xi}{\sqrt{m+n_{k}}}\leq\frac{K\bar{M}\xi}{\sqrt{n\alpha}}$
where the first inequality follows as $\frac{M_{k}}{m+n_{k}}\leq\frac{\xi}{mq_{0}+n_{k}}$,
and the second follows as $M_{k}\leq\bar{M}$ and $\tau_{k}\leq mq_{0}+n_{k}$.
Finally, $\sum_{k=1}^{K}\tau_{k}\sqrt{\frac{M_{k}}{m+n_{k}}}\leq\sum_{k=1}^{K}\sqrt{\frac{\tau_{k}\xi}{\alpha q_{0}+1}}\leq\sqrt{\frac{n\xi}{\alpha q_{0}+1}}$
where the first inequality follows as $\frac{M_{k}}{m+n_{k}}\leq\frac{\xi}{mq_{0}+n_{k}}$
and $mq_{0}+n_{k}\geq\tau_{k}(\alpha q_{0}+1)$.

Therefore,

\begin{align*}
\sum_{k=0}^{K-1}U_{k}\leq & 1+2K\log\frac{4}{\delta_{K}}+2\tilde{\theta}(2\nu+\epsilon_{K-1},\frac{2n}{m})\Biggl(2n\nu\\
 & \left.+\gamma_{2}(\frac{K\xi}{\alpha q_{0}+1}\log\frac{K^{2}|\mathcal{H}|}{\delta}+\frac{K\bar{M}\xi}{\sqrt{n\alpha}}\sqrt{\log\frac{K^{2}|\mathcal{H}|}{\delta}}+\sqrt{\frac{n\xi\nu}{\alpha q_{0}+1}\log\frac{K^{2}|\mathcal{H}|}{\delta}})\right).
\end{align*}
\end{proof}

\section{\label{sec:app-vcis}Proofs and Examples for Sections~\ref{sec:Bias-Variance-Trade-Off}~and~\ref{sec:Active-Learning}}

\subsection*{Generalization Error Bound}

Theorem~\ref{thm:paper-smr-erm-gen} and Corollary~\ref{cor:paper-gen-err}
are immediate from the following theorem.
\begin{thm}
\label{thm:general-err-bnd}Let $\hat{h}_{M}=\arg\min_{h\in\calH}l(h;S,M)+\sqrt{\frac{\lambda}{m}\hvar(h;S,M)}$.
For any $\delta>0$, $M\geq1$, $\lambda\geq4\log\frac{|\mathcal{H}|}{\delta}$,
with probability at least $1-\delta$ over the choice of $S$, 
\begin{align}
l(\hat{h}_{M})-l(h^{\star})\leq & \frac{2\lambda M}{m}+\frac{16M}{3m}\log\frac{|\mathcal{H}|}{\delta}+\frac{M^{2}}{m^{\frac{3}{2}}}\sqrt{4\log\frac{|\mathcal{H}|}{\delta}}\label{eq:cvriw-h-hat-bound}\\
 & +\sqrt{\frac{\lambda}{m}\E\frac{\One\{h^{\star}(X)\neq Y\}}{Q_{0}(X)}\One[\frac{1}{Q_{0}(X)}\leq M]}+\Pr_{X}(\frac{1}{Q_{0}(X)}>M).\nonumber 
\end{align}
\end{thm}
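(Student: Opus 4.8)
The plan is to run a regularized-ERM argument, but—unlike Theorem~\ref{thm:gen}, where the clipping error is folded into the deviation terms via a condition on $M$—to keep the clipping bias explicit so it surfaces as the stated $\Pr_X(\frac{1}{Q_0(X)}>M)$ term. Throughout I specialize the earlier machinery to the passive case $k=0$, where the MIS weight is $w_0(X)=\frac{1}{Q_0(X)}$, the clipped population loss is $l(h;0,M)=\E[\One\{h(X)\neq Y\}\One[w_0(X)\le M]]$, and the clipped second moment is $\var(h;0,M)=\E\frac{\One\{h(X)\neq Y\}}{Q_0(X)}\One[\frac{1}{Q_0(X)}\le M]$; the latter, at $h=h^{\star}$, is exactly the quantity inside the square root of the bound. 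First I would record the two elementary clipping inequalities $l(h;0,M)\le l(h)\le l(h;0,M)+\Pr_X(\frac{1}{Q_0(X)}>M)$, which hold because $l(h)-l(h;0,M)=\E[\One\{h(X)\neq Y\}\One[\frac{1}{Q_0(X)}>M]]\in[0,\Pr_X(\frac{1}{Q_0(X)}>M)]$. These give $l(\hat h_M)\le l(\hat h_M;0,M)+\Pr_X(\frac{1}{Q_0(X)}>M)$ and $l(h^{\star};0,M)\le l(h^{\star})$, isolating the final clipping term.

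Next I would establish two uniform deviation bounds over $\calH$ on one event of probability at least $1-\delta$. Applying Bernstein's inequality (Fact~\ref{fact:bernstein}) to the summands $w_0(X_i)Z_i\One\{h(X_i)\neq Y_i\}\One[w_0(X_i)\le M]$, which are bounded by $M$ and have mean $l(h;0,M)$ and second moment $\var(h;0,M)$ by Proposition~\ref{prop:mis-unbiased-general} (with $k=0$), and union-bounding over $\calH$, yields $|l(h;S,M)-l(h;0,M)|\le \frac{2M}{3m}\log\frac{4|\calH|}{\delta}+\sqrt{\frac{2\log\frac{4|\calH|}{\delta}}{m}\var(h;0,M)}$ for all $h$. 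Applying Fact~\ref{fact:var-deviation} to the same summands and union-bounding gives $\bigl|\sqrt{\hvar(h;S,M)}-\sqrt{\var(h;0,M)}\bigr|\le\sqrt{\frac{2M^2}{m}\log\frac{2|\calH|}{\delta}}+\frac{M^2}{m}$ for all $h$. I intersect these two events.

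The heart of the argument is the regularized-ERM comparison together with absorption of the minimizer's variance. Starting from $l(\hat h_M)\le l(\hat h_M;0,M)+\Pr_X(\frac{1}{Q_0(X)}>M)$ and the Bernstein bound at $\hat h_M$, the dangerous term is $\sqrt{\frac{2\log\frac{4|\calH|}{\delta}}{m}\var(\hat h_M;0,M)}$, the second moment of the data-dependent minimizer. Using the second-moment comparison to write $\sqrt{\var(\hat h_M;0,M)}\le\sqrt{\hvar(\hat h_M;S,M)}+\bigl(\sqrt{\frac{2M^2}{m}\log\frac{2|\calH|}{\delta}}+\frac{M^2}{m}\bigr)$, and invoking the hypothesis $\lambda\ge 4\log\frac{|\calH|}{\delta}\ge 2\log\frac{4|\calH|}{\delta}$, I would bound this by $\sqrt{\frac{\lambda}{m}\hvar(\hat h_M;S,M)}$ plus the lower-order residuals $\frac{M}{m}\log\frac{|\calH|}{\delta}$ and $\frac{M^2}{m^{3/2}}\sqrt{\log\frac{|\calH|}{\delta}}$ (obtained by multiplying $\sqrt{\tfrac{2\log}{m}}$ into the deviation). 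Consequently $l(\hat h_M)$ is bounded by $l(\hat h_M;S,M)+\sqrt{\frac{\lambda}{m}\hvar(\hat h_M;S,M)}$ plus lower-order terms and the clipping bias; but this bracketed expression is exactly the objective that $\hat h_M$ minimizes, so by optimality it is at most $l(h^{\star};S,M)+\sqrt{\frac{\lambda}{m}\hvar(h^{\star};S,M)}$. This is the crux: the regularizer routes the bound on $l(\hat h_M)$ through the minimized objective, so the data-dependent $\var(\hat h_M;0,M)$ never survives, and $\lambda\ge 4\log\frac{|\calH|}{\delta}$ is precisely what makes the regularizer dominate the Bernstein deviation. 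Because the surviving variance is the fixed quantity $\var(h^{\star};0,M)$, no self-bounding loop appears and Proposition~\ref{prop:quad-ineq} is unnecessary.

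Finally I would convert the $h^{\star}$ empirical quantities to population ones: Bernstein gives $l(h^{\star};S,M)\le l(h^{\star};0,M)+\frac{2M}{3m}\log\frac{4|\calH|}{\delta}+\sqrt{\frac{2\log\frac{4|\calH|}{\delta}}{m}\var(h^{\star};0,M)}$ with $l(h^{\star};0,M)\le l(h^{\star})$, and the second-moment comparison gives $\sqrt{\frac{\lambda}{m}\hvar(h^{\star};S,M)}\le\sqrt{\frac{\lambda}{m}\var(h^{\star};0,M)}+\frac{M}{m}\sqrt{2\lambda\log\frac{2|\calH|}{\delta}}$. Collecting terms: the $\sqrt{\var(h^{\star};0,M)}$ contributions are each controlled by $\sqrt{\frac{\lambda}{m}}$ (using $2\log\frac{4|\calH|}{\delta}\le\lambda$) and yield the stated $\sqrt{\frac{\lambda}{m}\var(h^{\star};0,M)}$ term; the residual $\frac{M}{m}\sqrt{2\lambda\log\frac{2|\calH|}{\delta}}$ is $O(\frac{\lambda M}{m})$ and feeds the $\frac{2\lambda M}{m}$ term; the several $\frac{M}{m}\log\frac{|\calH|}{\delta}$ pieces aggregate into $\frac{16M}{3m}\log\frac{|\calH|}{\delta}$; the $\frac{M^2}{m^{3/2}}$ pieces into $\frac{M^2}{m^{3/2}}\sqrt{4\log\frac{|\calH|}{\delta}}$; and the clipping bias supplies $\Pr_X(\frac{1}{Q_0(X)}>M)$. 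I expect the main obstacle to be the absorption step of the third paragraph—verifying that the data-dependent second moment $\var(\hat h_M;0,M)$ is entirely dominated by the regularizer and hence does not leak into the final bound—while the remaining steps are routine concentration and constant bookkeeping.
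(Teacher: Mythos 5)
Your plan is correct and is essentially the proof the paper intends: the paper omits the argument with a pointer to Theorems~\ref{thm:Convergence} and~\ref{thm:gen}, and your chain --- Bernstein (Fact~\ref{fact:bernstein}) plus the second-moment deviation bound (Fact~\ref{fact:var-deviation}), routed through the minimized regularized objective via optimality of $\hat h_M$ so that only $\var(h^{\star};0,M)$ survives, with the clipping bias kept explicit as $\Pr_X(\frac{1}{Q_{0}(X)}>M)$ rather than folded into a condition on $M$ --- is exactly that recipe specialized to $k=0$. The one place your bookkeeping does not close as written is the leading term: the Bernstein deviation at $h^{\star}$ and the regularizer at $h^{\star}$ \emph{each} contribute a full $\sqrt{\frac{\lambda}{m}\var(h^{\star};0,M)}$, so your argument yields coefficient $2$ rather than the stated $1$ on that term; this is a constant-factor slack of the same kind the paper itself tolerates (its analogous Theorem~\ref{thm:Convergence} ends with an unspecified constant $2\gamma_1$ there), but you should either state the bound with the factor $2$ or note that the theorem's constant cannot be recovered by this route.
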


\begin{proof}
The proof is similar to the proofs for Theorem~\ref{thm:Convergence}
and \ref{thm:gen}, and is omitted.
\end{proof}

\subsection*{Second Moment Regularizer}
\begin{proof}
(of Theorem~\ref{thm:paper-iw-gen-lb}) For any $0<\nu<\frac{1}{3}$,
$m>\frac{49}{\nu^{2}}$, set $q_{0}=\frac{1}{40}\nu$, $c=\frac{1}{3}$,
$\epsilon=\frac{c^{2}+\sqrt{c^{4}+4c^{2}q_{0}\nu m}}{2q_{0}m}$. It
can be checked that $\epsilon<\nu$ and $m=c^{2}\frac{\nu+\epsilon}{q_{0}\epsilon^{2}}$.
Let $\calX=\{x_{1},x_{2},x_{3}\}$, and define $\Pr(X=x_{1})=\nu$,
$\Pr(X=x_{2})=\nu+\epsilon$, $\Pr(X=x_{3})=1-2\nu-\epsilon$, and
$\Pr(Y=1)=1$. Let $\calH=\{h_{1},h_{2}\}$ where $h_{1}(x_{1})=-1$,
$h_{1}(x_{2})=h_{1}(x_{3})=1$, and $h_{2}(x_{2})=-1$, $h_{2}(x_{1})=h_{2}(x_{3})=1$.
Define the logging policy $Q_{0}(x_{1})=Q_{0}(x_{3})=1$, $Q_{0}(x_{2})=q_{0}$.
Let $S=\{(X_{t},Y_{t},Z_{t})\}_{t=1}^{m}$ be a dataset of size $m$
generated from the aforementioned distribution. Clearly, we have $l(h_{1})=\nu$
and $l(h_{2})=\nu+\epsilon$. We next prove that $\Pr(l(h_{1},S)>l(h_{2},S))\geq\frac{1}{100}$.
This implies that with probability at least $\frac{1}{100},$$h_{2}$
is the minimizer of the importance weighted loss $l(h,S)$, and its
population error $\Pr(h_{2}(X)\neq Y)=\nu+\epsilon=\nu+\frac{1}{q_{0}m}+\sqrt{\frac{\nu}{q_{0}m}}$.

We have 
\begin{align*}
\Pr(l(h_{1},S)>l(h_{2},S)) & \geq\Pr(l(h_{1},S)>\nu-\frac{\epsilon}{2}\text{ and }l(h_{2},S)<\nu-\frac{\epsilon}{2})\\
 & =1-\Pr(l(h_{1},S)\leq\nu-\frac{\epsilon}{2}\text{ or }l(h_{2},S)\geq\nu-\frac{\epsilon}{2})\\
 & \geq1-\Pr(l(h_{1},S)\leq\nu-\frac{\epsilon}{2})-\Pr(l(h_{2},S)\geq\nu-\frac{\epsilon}{2})\\
 & =\Pr(l(h_{2},S)<\nu-\frac{\epsilon}{2})-\Pr(l(h_{1},S)\leq\nu-\frac{\epsilon}{2})
\end{align*}

Observe that by our construction, $ml(h_{1},S)=\sum_{i=1}^{m}\One\{X_{i}=x_{1}\}$
follows the binomial distribution $\Bin(m,\nu)$. By a Chernoff bound,
$\Pr(l(h_{1},S)\leq\nu-\frac{\epsilon}{2})\leq e^{-\frac{1}{2}m\epsilon^{2}}$.
Since $\epsilon\geq\sqrt{\frac{c^{2}\nu}{q_{0}m}}\geq\sqrt{\frac{40c^{2}}{m}}$,
$e^{-\frac{1}{2}m\epsilon^{2}}\leq e^{-20c^{2}}=e^{-\frac{20}{9}}$.

By our construction, we also have that $q_{0}ml(h_{2},S)=\sum_{i=1}^{m}\One\{X_{i}=x_{2},Z_{i}=1\}$
which follows the binomial distribution $\Bin(m,q_{0}(\nu+\epsilon))$.
Thus, $\Pr(l(h_{2},S)\leq\nu-\frac{\epsilon}{2})=\Pr(q_{0}ml(h_{2},S)\leq q_{0}m(\nu+\epsilon)-\frac{3}{2}q_{0}m\epsilon)\geq\frac{1}{\sqrt{2\pi}}\frac{3c}{9c^{2}+1}e^{-\frac{9}{2}c^{2}}=\frac{1}{2\sqrt{2\pi}}e^{-\frac{1}{2}}$
where the inequality follows by Lemma~\ref{lem:bin-lb}.

Therefore, $\Pr(l(h_{1},S)>l(h_{2},S))\geq\Pr(l(h_{2},S)<\nu-\frac{\epsilon}{2})-\Pr(l(h_{1},S)\leq\nu-\frac{\epsilon}{2})\geq\frac{1}{2\sqrt{2\pi}}e^{-\frac{1}{2}}-e^{-\frac{20}{9}}\geq\frac{1}{100}$.
\end{proof}
\begin{rem}
A similar result for general cost-sensitive empirical risk minimization
is proved in \cite{MP09,ND17}. In \cite{MP09,ND17}, they construct
examples where $\text{Var}(h^{\star})=0$ and learning $h^{\star}$
with unregularized ERM gives $\tilde{\Omega}(\sqrt{\frac{1}{m}})$
error, while regularized ERM gives $\tilde{O}(\frac{1}{m})$ error.
However, their construction does not work in our setting because the
bound for unregularized ERM \cite{YCJ18} also gives $\tilde{O}(\frac{1}{m})$
error when $\text{Var}(h^{\star})=0$ (since $\text{Var}(h^{\star})=0$
implies $l(h^{\star})=0$), so more careful construction and analysis
are needed.
\end{rem}

\subsection*{Clipping}

The clipping threshold $M_{0}$ is chosen to minimize an error bound
for the clipped second-moment regularized ERM. According to Theorem~\ref{thm:general-err-bnd},
we would like to choose $M$ that minimizes the RHS of (\ref{eq:cvriw-h-hat-bound}).
We set $\lambda=4\log\frac{|\mathcal{H}|}{\delta}$ in Theorem~\ref{thm:general-err-bnd},
focus on the low order terms with respect to $m$, and minimize $e(M):=\sqrt{\frac{4\log\frac{|\mathcal{H}|}{\delta}}{m}\E\frac{1}{Q_{0}(X)}\One[\frac{1}{Q_{0}(X)}\leq M]}+\Pr_{X}(\frac{1}{Q_{0}(X)}>M)$
instead since $\One\{h^{\star}(X)\neq Y\}$ could not be determined
with unlabeled samples. In this sense, the following proposition shows
that our choice of $M$ is nearly optimal.
\begin{prop}
\label{prop:opt-c}Suppose random variable $\frac{1}{Q_{0}(X)}$ has
a probability density function, and there exists $M_{0}\geq1$ such
that $\frac{2\log\frac{|\mathcal{H}|}{\delta}}{m}M_{0}=\Pr_{X}(\frac{1}{Q_{0}(X)}>M_{0})$.
Then $e(M_{0})\leq\sqrt{2}\inf_{M\geq1}e(M).$
\end{prop}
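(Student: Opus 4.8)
The plan is to reduce the claim to a clean one-dimensional minimization. Write $c = \frac{2\log\frac{|\calH|}{\delta}}{m}$ and abbreviate $G(M) = \E\frac{1}{Q_0(X)}\One[\frac{1}{Q_0(X)}\le M]$ and $F(M) = \Pr_X(\frac{1}{Q_0(X)}>M)$, so that $e(M) = \sqrt{2c\,G(M)} + F(M)$ and the defining relation for $M_0$ reads $cM_0 = F(M_0)$. The key idea is to pass to squares: using $(x+y)^2\le 2(x^2+y^2)$ on $e(M_0)$ and $(x+y)^2\ge x^2+y^2$ on $e(M)$ (all terms being nonnegative), it suffices to prove that the functional $\Psi(M) := 2c\,G(M) + F(M)^2$ is minimized at $M_0$. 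Indeed, since $cM_0=F(M_0)$ we have $2\Psi(M_0) = 2\bigl(2cG(M_0)+(cM_0)^2\bigr)\ge e(M_0)^2$, while $\Psi(M)\le e(M)^2$ for every $M$, so $\Psi(M_0)\le\Psi(M)$ would give $e(M_0)^2 \le 2\Psi(M_0)\le 2\Psi(M)\le 2e(M)^2$, hence $e(M_0)\le\sqrt2\,e(M)$ for all $M\ge1$; taking the infimum over $M$ then yields the claim.

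To show $M_0$ minimizes $\Psi$, I would differentiate. Letting $f$ denote the density of $\frac{1}{Q_0(X)}$ (which exists by hypothesis and is supported on $[1,\frac{1}{q_0}]$), one has $G'(M)=Mf(M)$ and $F'(M)=-f(M)$, so $\Psi'(M) = 2cMf(M) - 2F(M)f(M) = 2f(M)\bigl(cM-F(M)\bigr)$. Since $\frac{d}{dM}(cM-F(M)) = c + f(M) > 0$, the map $M\mapsto cM-F(M)$ is strictly increasing and vanishes exactly at $M_0$ by the defining relation. Hence $\Psi'\le 0$ on $[1,M_0]$ and $\Psi'\ge 0$ on $[M_0,\infty)$, so $\Psi$ is nonincreasing and then nondecreasing and attains its minimum over $M\ge1$ at $M_0$ (using $M_0\ge1$).

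The main obstacle is purely in spotting the right comparison: the two elementary inequalities $(x+y)^2 \le 2(x^2+y^2)$ and $(x+y)^2\ge x^2+y^2$ convert the delicate balance between the increasing term $\sqrt{2cG}$ and the decreasing term $F$ into the much more tractable statement that $M_0$ minimizes $2cG+F^2$, and the factor $\sqrt2$ is exactly the price paid by the first inequality. A direct case analysis on $M\ge M_0$ versus $M<M_0$, bounding $G$ and $F$ only by their monotonicity, appears to yield a strictly worse constant, so the squared reformulation is what makes $\sqrt2$ sharp. The only routine points left to verify are that differentiating $F$ and $G$ is legitimate under the assumed density, and the boundary behaviour — $cM-F(M)$ is negative near $M=1$ and positive as $M\uparrow\frac{1}{q_0}$ — so that the unique crossing of $cM-F(M)$ is the prescribed $M_0\ge1$.
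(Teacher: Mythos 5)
Your proposal is correct and follows essentially the same route as the paper's proof: both arguments identify the surrogate functional $2cG+F^2$ (the paper's $f_1+f_2^2$), show by differentiating that its minimizer is exactly the $M_0$ satisfying $cM_0=F(M_0)$, and then convert back to $e$ at the cost of a $\sqrt2$ via the elementary inequalities $\sqrt{a}+\sqrt{b}\le\sqrt{2(a+b)}$ and $\sqrt{a+b}\le\sqrt{a}+\sqrt{b}$. The only difference is cosmetic — you work with squares of $e$ where the paper works with square roots of the surrogate — so there is nothing substantive to add.
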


\begin{proof}
Define $f_{1}(M)=\frac{4\log\frac{|\mathcal{H}|}{\delta}}{m}\E\frac{1}{Q_{0}(X)}\One[\frac{1}{Q_{0}(X)}\leq M]$,
and $f_{2}(M)=\Pr_{X}(\frac{1}{Q_{0}(X)}>c)$. We first show that
$f_{1}(M_{0})+f_{2}(M_{0})^{2}\leq\inf_{M>1}f_{1}(M)+f_{2}(M)^{2}$.

Let $g(x)$ be the probability density function of random variable
$1/Q_{0}(X)$. We have $f_{1}(M)=\frac{4\log\frac{|\mathcal{H}|}{\delta}}{m}\int_{0}^{M}xg(x)\d x$
and $f_{2}(M)=\int_{M}^{\infty}g(x)\d x$, so $f_{1}'(M)=\frac{4\log\frac{|\mathcal{H}|}{\delta}}{m}Mg(M)$,
and $f_{2}'(M)=-g(M)$. Define $f(M)=f_{1}(M)+f_{2}(M)^{2}$. We have
\begin{align*}
f'(M) & =f_{1}'(M)+2f_{2}'(M)f_{2}(M)\\
 & =2g(M)(\frac{2\log\frac{|\mathcal{H}|}{\delta}}{m}M-f_{2}(M)).
\end{align*}

Recall we assume there exists $M_{0}\geq1$ such that $\frac{2\log\frac{|\mathcal{H}|}{\delta}}{m}M_{0}=f_{2}(M_{0})$.
Since $\frac{2\log\frac{|\mathcal{H}|}{\delta}}{m}M$ is strictly
increasing w.r.t.~$M$ and $f_{2}(M)$ is non-increasing w.r.t. $M$,
it follows that $f(M)$ achieves its minimum at $M_{0}$, that is,
for any $c\geq1$, $f_{1}(M_{0})+f_{2}^{2}(M_{0})\leq f_{1}(M)+f_{2}^{2}(M)$.

Now, $\sqrt{f_{1}(M_{0})+f_{2}^{2}(M_{0})}\geq\frac{1}{\sqrt{2}}(\sqrt{f_{1}(M_{0})}+f_{2}(M_{0}))$
since $\sqrt{a+b}\geq\frac{1}{\sqrt{2}}(\sqrt{a}+\sqrt{b})$ for any
$a,b\geq0$, and $\sqrt{f_{1}(M)+f_{2}^{2}(M)}\leq\sqrt{f_{1}(M)}+f_{2}(M)$
since $\sqrt{a+b}\leq\sqrt{a}+\sqrt{b}$ for any $a,b\geq0$. Thus
$\frac{1}{\sqrt{2}}(\sqrt{f_{1}(M_{0})}+f_{2}(M_{0}))\leq\sqrt{f_{1}(M)}+f_{2}(M)$
for all $M>0$, which concludes the proof.
\end{proof}
\begin{rem}
Since $\frac{1}{M}\Pr_{X}(\frac{1}{Q_{0}(X)}>M)$ is monotonically
decreasing with respect to $M$ and its range is $(0,1)$, the existence
and uniqueness of $M_{0}$ are guaranteed if $\frac{2}{m}\log\frac{|\mathcal{H}|}{\delta}<1$.
\end{rem}

The following example shows that our choice of $M$ indeed avoids
outputting suboptimal classifiers.
\begin{example}
\label{exa:clipping}Let $\calX=\{x_{0},x_{1},x_{2},x_{3},x_{4}\}$,
$\calH=\{h_{1},h_{2},h_{3},h_{4}\}$. Suppose $\Pr(Y=1)=-1$, $\nu<\frac{1}{10}$,
$\alpha<0.01$, and $\epsilon=\frac{\nu}{1+1/100\alpha}$. The marginal
distribution on $X$, the prediction of each classifier, and the logging
policy $Q_{0}$ is defined in Table~\ref{tab:eg-clipping}.

\begin{table}[h]
\caption{\label{tab:eg-clipping}An example for clipping}

\centering{}%
\begin{tabular}{cccccc}
\toprule 
 & $x_{0}$ & $x_{1}$ & $x_{2}$ & $x_{3}$ & $x_{4}$\tabularnewline
\midrule
\midrule 
$h_{1}(\cdot)$ & 1 & 1 & -1 & -1 & -1\tabularnewline
\midrule 
$h_{2}(\cdot)$ & 1 & -1 & 1 & -1 & -1\tabularnewline
\midrule 
$h_{3}(\cdot)$ & 1 & -1 & -1 & 1 & -1\tabularnewline
\midrule 
$h_{4}(\cdot)$ & -1 & -1 & -1 & -1 & 1\tabularnewline
\midrule 
$\Pr_{X}(\cdot)$ & $\nu-\epsilon$ & $\epsilon$ & $4\epsilon$ & $16\epsilon$ & $1-\nu-20\epsilon$\tabularnewline
\midrule 
$Q_{0}(\cdot)$ & 1 & $\alpha$ & $\alpha$ & $4\alpha$ & $4\alpha$\tabularnewline
\bottomrule
\end{tabular}
\end{table}

We have $l(h_{1})=\nu$, $l(h_{2})=\nu+3\epsilon$, $l(h_{3})=\nu+15\epsilon$,
$l(h_{4})=1-\nu-20\epsilon$. Next, we consider when examples with
$Q_{0}$ equals $\alpha$, i.e. examples on $x_{1}$ and $x_{2}$,
should be clipped. We set the failure probability $\delta=0.01$.

If $m\geq\frac{28}{\alpha\epsilon},$ without clipping our error bound
guarantees that (by minimizing a regularized training error) learner
can achieve an error of less than $\nu+3\epsilon$, so it would output
the optimal classifier $h_{1}$ with high probability. On the other
hand, if $M<\frac{1}{\alpha}$, then all examples on $x_{1}$ and
$x_{2}$ are ignored due to clipping, so the learner would not be
able to distinguish between $h_{1}$and $h_{2}$, and thus with constant
probability the error of the output classifier is at least $l(h_{2})=\nu+3\epsilon$.
This means if $m\geq\frac{28}{\alpha\epsilon}$, examples on $x_{1}$
and $x_{2}$ should not be clipped.

If $m\geq\frac{2}{\alpha\epsilon}$ and examples on $x_{1}$ and $x_{2}$
are clipped, our error bound guarantees learner can achieve an error
of less than $\nu+16\epsilon$, which means the learner would output
either $h_{1}$ or $h_{2}$ and achieve an actual error of at most
$\nu+3\epsilon$. However, without clipping, the learner would require
$m\geq\frac{4}{\alpha\epsilon}$ to achieve an error of less than
$\nu+16\epsilon$. Thus, if $m\leq\frac{4}{\alpha\epsilon}$, examples
on $x_{1}$ and $x_{2}$ should be clipped.

To sum up, examples with $Q_{0}$ equals $\alpha$ (i.e. $x_{1}$
and $x_{2}$) should be clipped if $m\leq\frac{4}{\alpha\epsilon}$
and not be clipped if $m\geq\frac{28}{\alpha\epsilon}$. Our choice
of the clipping threshold clips $x_{1}$ and $x_{2}$ whenever $m\leq\frac{24}{5\alpha\epsilon}$,
which falls inside the desired interval.
\end{example}

\subsection*{\DEBIAS Strategy}

The following example shows the \debias strategy indeed improves
label complexity.
\begin{example}
\label{exa:debias}Let $\lambda>1$ be any constant. Suppose $\calX=\{x_{1},x_{2}\}$,
$Q_{0}(x_{1})=1$, $Q_{0}(x_{2})=\alpha$, $\Pr(x_{1})=1-\mu$, $\Pr(x_{2})=\mu$
and assume $\mu\leq\frac{1}{4\lambda}$ and $\alpha\leq\frac{\mu^{2}}{2\lambda}$.
Assume the logged data size $m$ is greater than twice as the online
stream size $n$. Without the \debias strategy, after seeing $n$
examples, the learner queries all $n$ examples and achieves an error
bound of $\frac{4\log\frac{2|{\cal H}|}{\delta}}{3(m\alpha+n)}+\sqrt{4(\frac{c\mu}{m+n}+\frac{\mu}{m\alpha+n})\log\frac{2|{\cal H}|}{\delta}}$
by minimizing the regularized MIS loss. With the sample selection
bias correction strategy, the learner only queries $x_{2}$, so after
seeing $n$ examples, it queries only $\mu n$ examples in expectation
and achieves an error bound of $\frac{4\log\frac{2|{\cal H}|}{\delta}}{3(m\alpha+n)}+\sqrt{4(\frac{c\mu}{m}+\frac{\mu}{m\alpha+n})\log\frac{2|{\cal H}|}{\delta}}$.
With some algebra, it can be shown that to achieve the same error
bound, if $\frac{\lambda\alpha}{\mu}m\leq n\leq\frac{\mu}{2}m$, then
the number of queries requested by the learner without the \debias
correction strategy is at least $\lambda$ times more than the number
of queries for the learner with the bias correction strategy. Since
this holds for any $\lambda\geq1$, the decrease of the number of
label queries due to our \debias strategy can be significant.
\end{example}

\section{Experiments\label{sec:exp}}
We conduct experiments to compare the performance 
of the proposed active learning algorithm against some baseline methods. Our 
experiment results confirm our theoretical analysis that the test error of the
proposed algorithm drops faster than alternative methods as the number of label
queries increases.

\subsection{Methodology}

\subsubsection*{Algorithms and Implementations}
We consider the following algorithms:
\begin{itemize}
    \item \algoPassive: A passive learning algorithm that queries labels for 
    all examples. It directly optimizes an importance weighted estimator;
    \item \algoActive: The active learning algorithm proposed in~\cite{YCJ18}.
    It applies the disagreement-based active learning framework, multiple 
    importance sampling, and a \debias strategy.
    \item \algoVCActive: Algorithm~\ref{alg:main} proposed in this paper. It 
    applies the \emph{variance controlled} disagreement-based active learning
    framework, multiple importance sampling, and an \emph{improved} \debias 
    strategy.
\end{itemize}

Similar to~\cite{YCJ18}, our implementation of disagreement-based active learning
framework follows the Vowpal Wabbit (\cite{vw}) package. In particular,
\begin{itemize}
    \item We set the hypothesis space to be the set of linear classifiers, and 
    replace the 0-1 loss with a squared loss.
    \item We do not explicitly maintain the candidate set $C_k$ or the disagreement
    region $D_k$. To compute $\hat{h}_k$ in line 6 of Algorithm~\ref{alg:main}, we
    ignore the constraint $h\in C_k$ and conduct online gradient descent with step 
    size $\sqrt{\frac{\eta}{\eta+t}}$. To approximately
    check whether $x\in D_{k+1}$ in line 15, let $w_k$ be the normal vector 
    for $\hat{h}_k$, and $a$ be current step size. We claim $x\in D_{k+1}$ if 
    $\frac{|2w_k^\top x|}{a x^\top x} \leq 
    \sqrt{\frac{C\cdot \hat{V}(\hat{h}_k; \tilde{S}_k, M_k)}{m+n_k}} 
    + \frac{C\cdot M_k}{m+n_k}$. 
    Here $C$ is a parameter that captures the model capacity 
    (this corresponds to the $\log\frac{|\calH|}{\delta}$ term in the error bound; 
    as noted in \cite{H10}, this is often loose and needs to be tuned as a parameter in practice)
    and we tune this parameter in experiments.

\end{itemize}

Besides, we incorporate \vcis into active learning through the following way:
\begin{itemize}
    \item In order to find the clipping threshold $M_k$ (line~\ref{alg:Mk} of 
    Algorithm~\ref{alg:main}), we empirically estimate 
    $\Pr(\frac{m+n_k}{mQ_0(X)+n_k}> M/2)$ on the logged observational data (note that 
    this estimation does not involve labels).
    \item We follow~\cite{SJ15CRM} to approximately calculate the online gradient for 
    optimization with a variance regularizer.
\end{itemize}

\subsubsection*{Data}
We generate a synthetic dataset where 6000 examples are drawn uniformly at random 
from $[0,1]^{30}$, and labels are assigned by a linear separator and get flipped with 
probability 0.05. We randomly split the dataset into 80\% training data and 20\% test 
data. Among the training dataset, we randomly choose around 50\% as logged observational
data, and apply a synthetic logging policy to choose which labels in the observational 
data set are revealed to the algorithm. Our experiments use the following two policies:

\begin{itemize}
    \item Certainty: We first find a linear hyperplane that approximately separates the
     data. Then, we reveal the label with a higher probability (i.e., larger $Q_0$ 
     value) if the example is further away from this hyperplane.
    \item Uncertainty: We first find a linear hyperplane that approximately separates the
    data. Then, we reveal the label with a higher probability (i.e., larger $Q_0$ 
    value) if the example is closer to this hyperplane.
\end{itemize}

\subsubsection*{Parameter Tuning}
We follow \cite{HAHLS15} and \cite{YCJ18} to tune the model capacity $C$ and learning rate 
$\eta$, and report the best result for each algorithm under each logging policy.

In particular, let $e(i,A,p,l)$ be the test error of algorithm
$A$ with parameter set $p=(C,\eta)$ after making $l$ label queries during
the $i$-th trial ($i=1,2,\dots,N$).
We evaluate the performance of the algorithm $A$ with parameter set $p$ by 
following Area Under the error-label Curve metric: $\text{AUC}(A,p) = \frac1{2N}\sum_{i=1}^{N}\sum_l (e(i,A,p,l+1)+e(i,A,p,l))$.
At the end, for each algorithm, we report the error-label curve achieved with the parameter 
set $p$ that minimizes $\text{AUC}(A,p)$.

In our experiments, we try $C$ in $\{0.01 \times 2^i \mid i=0,2,4,\cdots,10\}$, 
and $\eta$ in $\{0.0001 \times 2^i \mid i=0,2,4,\cdots,12\}$. For each algorithm, 
policy, and parameter set, the experiments are repeated for $N=16$ times.

\subsection{Results}
\begin{figure}
    \begin{centering}
    \subfloat[Certainty]{
        \begin{centering}
        \includegraphics[width=0.5\textwidth]{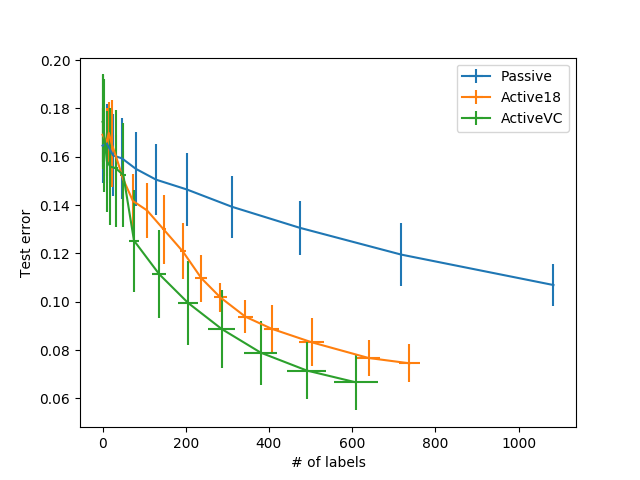}
        \par\end{centering}
    }
    \subfloat[Uncertainty]{
        \begin{centering}
        \includegraphics[width=0.5\textwidth]{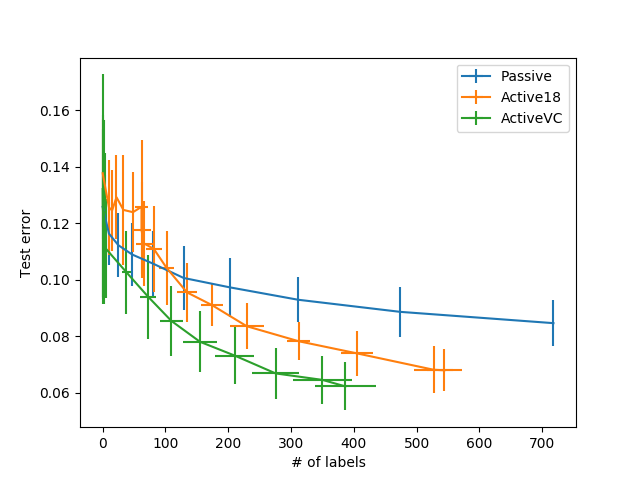}
        \par\end{centering}
    }
    \caption{\label{fig:error-vs-labels}Test error vs the number of labels under
    different logging policies with the best parameters.}
    \end{centering}
\end{figure}
    
We plot test error as a function of the number of labels in Figure~\ref{fig:error-vs-labels}.
It shows that test errors achieved by the proposed method drop faster than both the passive
learning baseline, and the prior work~\cite{YCJ18} which does not apply variance control techniques.
Additionally, as the number of labels grows, the gap widens.

\end{document}